\def\eqref#1{equation~\ref{#1}}
\def\1{\bm{1}}
\def\ra{{\textnormal{a}}}
\def\rt{{\textnormal{t}}}
\DeclareMathAlphabet{\mathsfit}{\encodingdefault}{\sfdefault}{m}{sl}
\SetMathAlphabet{\mathsfit}{bold}{\encodingdefault}{\sfdefault}{bx}{n}
\newcommand{\E}{\mathbb{E}}
\newcommand{\R}{\mathbb{R}}
\DeclareMathOperator*{\argmax}{argmax}
\newcommand{\lt}{\left(}
\renewcommand{\rt}{\right)}
\newcommand{\norm}[1]{\left\lVert#1\right\rVert}
\newcommand{\f}{\dfrac}
\newcommand{\ff}{\tfrac}
\newcommand{\su}[2]{\mathlarger{\sum}\limits_{#1}^{#2}}
\renewcommand{\l}{\ell}
\newcommand{\ep}{\epsilon}
\newcommand{\T}{\theta}
\renewcommand{\L}{\mathcal{L}}
\newcommand{\F}{\mathcal{F}}
\newcommand{\de}{\delta}
\newcommand{\M}{\lambda}
\newcommand{\G}{\mathcal{G}}
\newcommand{\X}{\mathcal{X}}
\newcommand{\D}{\mathcal{D}}
\newcommand{\Y}{\mathcal{Y}}
\renewcommand{\ra}{\rightarrow}
\newcommand{\name}{\emph{\sc{Barack}}}
\renewcommand{\P}{\mathcal{P}}
\newtheorem{theorem}{Theorem}
\newtheorem{lemma}{Lemma}
\crefname{section}{Section}{Secs.}
\Crefname{section}{Section}{Sections}
\Crefname{table}{Table}{Tables}
\crefname{table}{Tab.}{Tabs.}
\begin{document}

\title{\name: Partially Supervised Group Robustness With Guarantees}

\newcommand*\samethanks[1][\value{footnote}]{\footnotemark[#1]}

\author{
  Nimit Sohoni\thanks{Work done at Meta AI.} \\
  Stanford University \\
  \texttt{nims@stanford.edu} \\
  \And
  Maziar Sanjabi \\
  Meta AI \\
  \texttt{maziars@fb.com} \\
  \And
  Nicolas Ballas$^{\dag}$ \\
  Meta AI \\
  \texttt{ballasn@fb.com} \\
  \And
  Aditya Grover\samethanks$^{\,\,\,,\dag}$ \\
  UCLA \\
  \texttt{adityag@cs.ucla.edu} \\
  \And
  Shaoliang Nie$^{\dag}$ \\
  Meta AI \\
  \texttt{snie@fb.com} \\
  \And 
  Hamed Firooz \\
  Meta AI \\
  \texttt{mhfirooz@fb.com} \\
  \And 
  Christopher R\'e \\
  Stanford University \\
  \texttt{chrismre@cs.stanford.edu} \\
}
\maketitle
\def\thefootnote{$^{\dag}$}\footnotetext{These authors contributed equally to this work.}\def\thefootnote{\arabic{footnote}}

\begin{abstract}
  While neural networks have shown remarkable success on classification tasks in terms of average-case performance, they often fail to perform well on certain groups of the data. Such group information may be expensive to obtain; thus, recent works in robustness and fairness have proposed ways to improve worst-group performance even when group labels are \emph{unavailable} for the training data. However, these methods generally underperform methods that utilize group information at training time.
  In this work, we assume access to a small number of group labels alongside a larger dataset without group labels. We propose \name, a simple two-step framework to utilize this partial group information to improve worst-group performance: train a model to predict the missing group labels for the training data, and then use these predicted group labels in a robust optimization objective.
  Theoretically, we provide generalization bounds for our approach in terms of the worst-group performance, which scale with respect to both the \emph{total} number of training points and the number of training points with \emph{group labels}. 
  Empirically, our method outperforms the baselines that do not use group information, even when only 1-33\% of points have group labels.
We provide ablation studies to support the robustness and extensibility of our framework.
\end{abstract}

\section{Introduction}
On classification tasks, deep neural networks can often underperform on certain groups of the data. For example, in datasets with ``spurious correlations,'' standard neural networks have been shown to achieve high average accuracy, yet drastically lower accuracy on groups that violate the spurious correlation \citep{sagawa2019distributionally}. Similarly, when certain groups are underrepresented in the training data, models tend to perform poorly on these rare groups \citep{sohoni2020no}. In many settings, such as applications where fairness or safety are important, this behavior is undesirable; for example, gender classification systems have been shown to underperform for non-white faces \citep{buolamwini2018gender}, and medical triage systems have been shown to miss certain abnormality subtypes \citep{oakden2020hidden}. To avoid this, we want to ensure \emph{group robustness}, i.e., high accuracy on the worst-performing group.

Unfortunately, group annotations are often unavailable. Many datasets only have labels for the \emph{task}, not the groups. Group labels may also be relatively expensive to obtain; for instance, in the common setting where the group labels are finer-grained than the class labels, it may require a higher annotation cost to obtain group labels than class labels \citep{gebru2017scalable}. Or, there may be privacy concerns with acquiring additional group labels.
This paucity of group labels makes ensuring group robustness more challenging.

Existing works to address the issue of group robustness fall into two main categories: those that assume access to the group labels for \emph{all} of the training data, and those that assume \emph{no} access to the group labels for the training data. For instance, in the first category, \citet{sagawa2019distributionally} propose \emph{group distributionally robust optimization} (GDRO), an efficient algorithm for minimizing the worst-group loss when the groups are known.

More recently, several approaches have been proposed to improve group robustness when group labels are \emph{unavailable}. A common approach is to first estimate the group labels, then train a robust classifier using these estimated group labels \citep{sohoni2020no,liu2021just}.
However, in terms of worst-group performance, the methods that require group labels unsurprisingly (and often substantially) outperform those that do not.

A fundamental question is: \textbf{can we close this gap if we have \emph{partial} group information}? Specifically, we seek to understand the intermediate regime in which group labels are available for some (small) subset of the training data, while the remainder of the data has class labels only.
The distinction between this setting and the aforementioned prior work is akin to the difference between semi-supervised learning vs. supervised or unsupervised learning.
From an application standpoint, when the {identities} of the groups are known, it is often feasible to obtain group labels for a \emph{small} subset of the data.
From a theoretical standpoint, the relative value of class vs. group labels for ensuring group robustness is still unknown.

To address this question, we propose \name,\footnote{Name inspired by GEORGE \citep{sohoni2020no}, a baseline for the setting where no group labels are known.} a simple two-stage approach to improve group robustness for the setting wherein group labels are only known for a subset of datapoints. In the first stage of \name{}, we use the available group labels to train a model to predict the group labels on the datapoints without group annotations. In the second stage, we use these predicted group labels in the GDRO objective \citep{sagawa2019distributionally} to train a robust model.

Theoretically, we show how the \emph{worst-group} generalization performance scales with the number of total points and the number of points with group labels. Empirically, we show that even if only a small fraction (1-33\%) of points have group labels, \name{} improves over approaches that do not use group labels.

\textbf{Contributions}. In summary, our main contributions are:
\begin{itemize}
    \item We propose a simple framework, \name{}, that can improve group robustness with only a small number of group labels: we train a model to predict the missing group labels, then use these group labels in a robust training objective (GDRO).
    \item On four benchmark image classification datasets---MNIST, Waterbirds, CelebA, and CIFAR-10---we show that with as few as 1-33\% of the points having group labels, our method empirically outperforms all baselines that do not use group information, approaching the performance of GDRO trained on the full dataset (Full-GDRO).
    \item We prove a generalization bound on the worst-group performance of our method, showing that it scales with the inverse square root of the total number of points \emph{with group labels} in the smallest group. We show how to tighten this bound using semi-supervised learning (SSL). Under additional assumptions on the structure of the errors made in the first stage, we also show that our method also guarantees an improved generalization bound compared to empirical risk minimization (ERM).
    \item We conduct ablation studies to better understand the importance of the different components of \name. We also show that by using more complex techniques such as SSL, we can improve the worst-group performance of \name{} even further.
\end{itemize}

\begin{figure*}
\includegraphics[width=\textwidth]{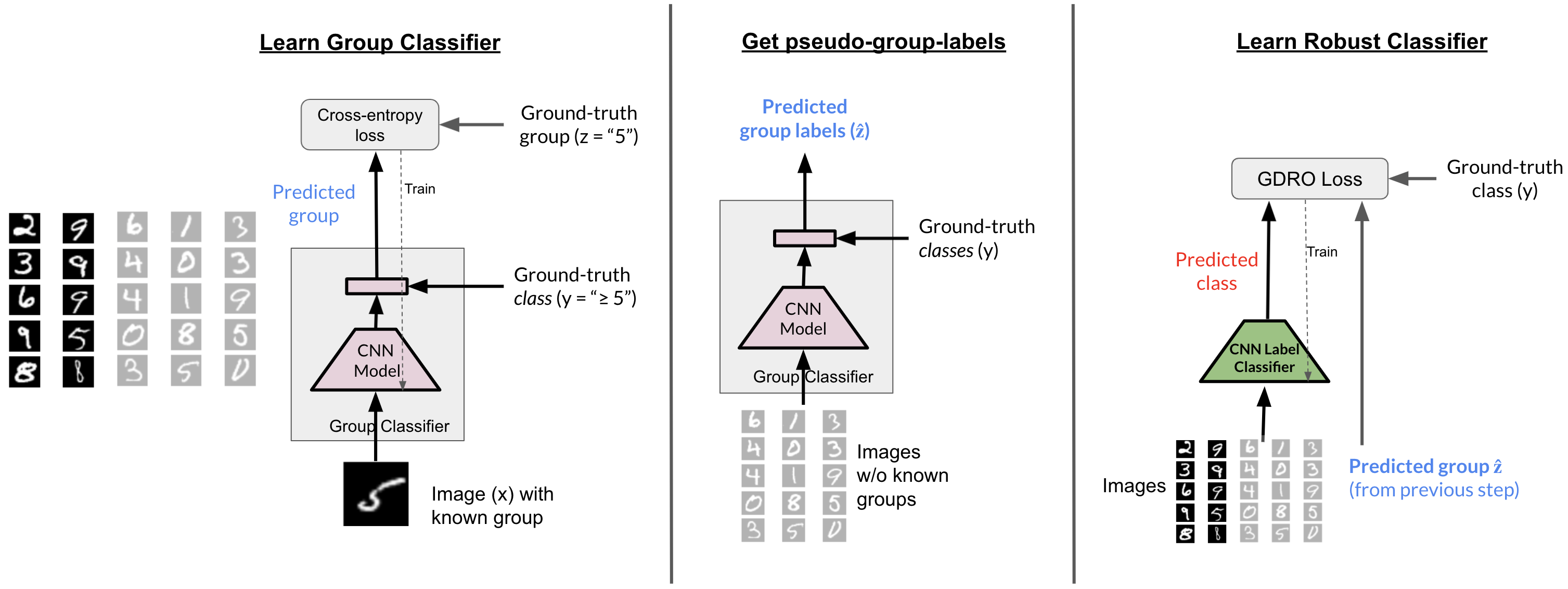}
\caption{Schematic describing \name-Base.  Inputs are images and class labels; some images also have known ground-truth group labels (darker examples).  First, we train a model to classify the groups, using the images with known group labels (left panel). Then, we use this model to compute ``group pseudolabels'' for the remaining training images (middle panel). Finally, we use these group pseudo-labels in the GDRO loss to train a more robust model (right panel).
(Other versions of \name, e.g. \name-SSL, differ in the ``Learn Group Classifier'' stage, where other methods, e.g. semi-supervised learning, can be used.)
}
\label{fig:overall_method}
\end{figure*}

\section{Background}
\subsection{Problem Setup}
We consider a similar setting to that of \citet{sagawa2019distributionally}: we have $n$ training points sampled IID from a distribution $\mathcal{P}$: $\{(x_i,y_i,z_i)\}_{i=1}^n \in \mathcal{X} \times \mathcal{Y} \times \mathcal{G}$. $x_i$ denotes a datapoint, $y_i$ its (discrete) class label, and $z_i$ its (discrete) group label. However, unlike \citet{sagawa2019distributionally}, we do not assume that we know all of the $z_i$'s; rather, we only assume knowledge of $z_1, \dots, z_m$, where $m < n$.
We denote $\D_1 := \{(x_i,y_i,z_i)\}_{i=1}^m$ to be the group-labeled dataset, and $\D_2 := \{(x_i,y_i)\}_{i=m+1}^n$ to be the dataset of group-unlabeled points. Note that we assume $\D_1$ and $\D_2$ are samples from the same distribution; but the $z_i$'s are unobserved on $\D_2$.

Our end goal is to maximize the worst-group accuracy on the task of predicting the correct \emph{class} label for each datapoint. In other words, given a function class of classifiers $\mathcal{F}$ (where each $f \in \mathcal{F}$ is a function $\X \ra \Delta^{|\Y|}$, i.e., a function that outputs probabilities for each class), we wish to find $f \in \mathcal{F}$ that maximizes ${\min\limits_{g \in \mathcal{G}} \E_{(x,y) | z = g} [\1(\argmax \{f(x)\} = y)]}$.

In practice we instead seek the $f \in \mathcal{F}$ that minimizes the worst-group loss over the training data: \begin{equation}\label{eq:gdro}\max\limits_{g \in \mathcal{G}} \E_{(x_i,y_i) | z_i = g} [\ell(f(x_i),  y_i)].\end{equation}

When the $z_i$'s are known, the latter problem can be solved with group DRO (GDRO) \citep{sagawa2019distributionally}. GDRO is a stochastic optimization method designed for minimax problems of exactly the form of \eqref{eq:gdro}. However, in our setting, solving this problem is challenging because we only know a subset of the $z_i$'s, so we cannot compute \eqref{eq:gdro} directly.

\subsection{Related Work}
\paragraph{Beyond average performance.}
Our work primarily builds on prior work in the area of group robustness. This line of work has a long history in the literature; for example, see~\citep{mohri2019agnostic, zhang2020coping}  and references therein.
Several methods have been proposed to improve group robustness when the group labels are known at train time.
In our algorithms, we focused on GDRO~\citep{sagawa2019distributionally}, a stochastic algorithm for minimizing the worst-group loss; we use GDRO as a component of our method.
While we mainly focus on worst-group accuracy, average and worst-group accuracy are not the only measures of performance of interest to ML practitioners. Several other works have used different approaches to strike a balance between important performance measures, for example through the lenses of distributional robustness~\citep{duchi2019distributionally, wang2020robust, zhang2020coping, ben2013robust}, fairness~\citep{hardt2016equality, agarwal2018reductions, li2019fair, li2020tilted}, or outlier/noisy sample detection~\citep{huber1992robust, bhatia2015robust, menon2019can, li2020tilted}. 
\citet{balashankar2019what} and \citet{martinez2020minimax} propose different methods for ensuring group \emph{Pareto fairness}, which seeks to find Pareto-efficient solutions in terms of the accuracies on each group (a more general problem than GDRO).\footnote{In fact, we note that our proposed algorithm \name~can be viewed as an instantiation of the plug-in estimator proposed in Theorem 4.2 of \citet{martinez2020minimax}.}

\paragraph{Group robustness without group labels.}
When the group labels are not known, alternative methods exist that still attempt to improve group robustness. Several of these works aim to first estimate the group labels, then train a robust classifier using these estimated group labels \citep{sohoni2020no,nam2020learning,liu2021just,zhang2022correctncontrast}. Others make no assumptions on the structure of the groups, and simply try to perform well on ``all possible'' data subsets above a specified size \citep{levy2020large,martinez2021blind}. Unsurprisingly, these approaches typically underperform methods that do utilize group labels.

Our method, \name{}, involves training two models sequentially. This is commonly used in different ways as an approach to increasing model robustness in the literature~\citep{yaghoobzadeh2019increasing, utama2020towards}. Among these works, \citep{liu2021just, goel2020model, creager2021environment, nam2020learning, sohoni2020no,zhang2022correctncontrast} are most relevant to our work, where a model is trained first and then the outputs of this model are used in some manner (such as in the GDRO objective) to train the second model to be robust. Our key point of difference is that none of these works are designed to actually utilize possible group labels when they are known for some samples.
\name{}~can yield superior performance to these methods by utilizing such additional group information, even if it is limited. 

\paragraph{Semi-supervised learning.}
Semi-supervised learning (SSL) is a rich field with several recent developments. For \name-SSL we use FixMatch \citep{sohn2020fixmatch}, a recent state-of-the-art method for SSL (which performs especially well on CIFAR-10). Other recent successful approaches to semi-supervised learning involve learning \emph{self-supervised} representations (without using labels), and then using the labeled examples for fine-tuning; examples include \citep{xie2019unsupervised,simclr,swav}.

Our work also has connections to self-training. In standard self-training, a labeled dataset is used to train a model to generate pseudolabels for a separate unlabeled dataset; the labeled and pseudolabeled data are then used together to train a downstream model \citep{zoph2020rethinking, xie2020self, lee2013pseudo, rosenberg2005semi}. In our work, we instead generate pseudolabels for the task of classifying the \emph{groups}, which are then used to train a robust model for the original task. 

\paragraph{Concurrent work.}
We would also like to acknowledge the following important concurrent work: Spread Spurious Attribute (SSA) \citep{iclr_2022_SSA}, which considers a very similar problem to ours: specifically, they consider the problem of group robustness when there are spurious attributes that are known for a subset of the training data. Their proposed algorithm can be viewed as a special case of our general two-step framework (\name), in which they use semi-supervised learning techniques for the group prediction stage. Compared to \citep{iclr_2022_SSA}, our work is more focused on understanding and analyzing the effectiveness of \name{} through theoretical and ablation analyses. We believe that these analyses provide valuable insight regardless of the precise method used to estimate group labels. As we show in our experiments and analysis, while the basic version of \name{} performs quite well, better training methods for the first (group classification) stage, such as with SSL, can translate to better results (possibly at the cost of increased computational complexity).

\citep{lokhande2022towards} also address a similar version of our partial group robustness problem, using a different approach based on minimizing an upper bound to the GDRO loss. However, unlike our work (and that of \citep{iclr_2022_SSA}), they avoid any estimation of missing group labels due to privacy considerations, which unfortunately results in substantially lower worst-group performance (albeit still better than ERM) due to their upper bound function possibly being quite loose.

\section{Method}
\label{sec:method}

\begin{figure}
\begin{algorithm}[H]
\caption{\name (General)}
\label{alg:general}
\begin{algorithmic}[1]
\Require Group-labeled data $\mathcal{D}_1 = \{(x_i,y_i,z_i)\}_{i=1}^m$, group-unlabeled data $\mathcal{D}_2 = \{(x_i,y_i)\}_{i=m+1}^n$.
\State $\hat{f}_{group} \leftarrow$ {\sc Train}$(\mathcal{D}_1, \mathcal{D}_2)$
\State $(\hat{z}_{m+1},\dots,\hat{z}_n) \leftarrow \textsc{Predict}(\hat{f}_{group},\mathcal{D}_2)$
\State $(\hat{z}_1,\dots,\hat{z}_m) \leftarrow (z_1,\dots,z_m)$
\State $\hat{f}_{robust} \leftarrow$  {\sc{Train\_Robust}}$\lt \{(x_i,y_i,\hat{z}_i)\}_{i=1}^n \rt$
\Ensure Final model  $\hat{f}_{robust}$.
\end{algorithmic}
  \end{algorithm}
    \vspace{-2em}
  \caption{General overview of \name, our algorithmic framework. \name{} can be instantiated with specific choices of methods to train the initial group classifier and the final robust model (see Algorithm \ref{alg:bbase} for example).}
  \vspace{-1em}
\end{figure}

\begin{figure}
\begin{algorithm}[H]
\caption{\name-Base}
\label{alg:bbase}
\begin{algorithmic}[1]
\Require Group-labeled data $\mathcal{D}_1 = \{(x_i,y_i,z_i)\}_{i=1}^m$, group-unlabeled data $\mathcal{D}_2 = \{(x_i,y_i)\}_{i=m+1}^n$.
\State $\hat{f}_{group} \leftarrow$ {\sc Train\_Supervised}$(\mathcal{D}_1)$
\State $(\hat{z}_{m+1},\dots,\hat{z}_n) \leftarrow \textsc{Predict}(\hat{f}_{group},\mathcal{D}_2)$
\State $(\hat{z}_1,\dots,\hat{z}_m) \leftarrow (z_1,\dots,z_m)$
\State $\hat{f}_{robust} \leftarrow$  {\sc{GDRO}}$\lt \{(x_i,y_i,\hat{z}_i)\}_{i=1}^n \rt$
\Ensure Final model  $\hat{f}_{robust}$.
\end{algorithmic}
  \end{algorithm}
    \vspace{-2em}
  \caption{The base version of \name, which we use in most experiments. A supervised classifier is trained on the group-labeled examples and generates pseudolabels for the remaining training data. These pseudolabels are used to train a more robust model using GDRO.
  The algorithm is described in more detail below, with pseudocode for subroutines in Appendix \ref{app:experiments}.
  \vspace{-1em}
  }
\end{figure}

To address the problem of improving group robustness when only some group labels are available, we propose \name, a two-stage framework which leverages the group-labeled examples to generate group ``pseudolabels'' for the remaining datapoints, and then uses these pseudolabels to train a robust model on the target task.
This two-stage approach is inspired by prior works such as \textsc{Jtt} and \textsc{George} \citep{liu2021just,sohoni2020no}. However, unlike these methods which assume all group labels are unknown, \name{} is capable of exploiting the additional information in the group labels that \emph{are} known for some datapoints. \name's overall workflow is illustrated in Algorithm \ref{alg:general}.

\paragraph{Stage 1: Predicting group labels (via a ``class-conditional'' classifier).} First, we train a model $f_{group} \in \mathcal{F}_{1}$ to \emph{predict} the group labels for the training and validation datapoints that do not have provided group labels. To do so, we train a supervised classifier on the training points with known group labels.\footnote{We train this group classifier with the GDRO objective, to encourage good accuracy at recognizing each group.} Despite the small number of these points, we show that this simple approach can perform surprisingly well with a key modification: we use the \emph{class} label (which is assumed known for all training datapoints) as an input to the group classifier, since the probabilities of each group can vary conditioned on the class.
Specifically, we compute the empirical probabilities of each group conditioned on the class, and compute corresponding logits. For each example that is fed into the group classifier, the logits for the appropriate class are summed with the output of the last layer of the network.
In the datasets we evaluate on, the groups are subsets of the classes, so this means that we effectively take the softmax over the logits output by the network over all groups \emph{belonging to the known class} to get the predicted per-group probabilities (assigning zero probability to all groups in different classes).\footnote{This can also be interpreted as a form of multi-task learning (MTL) with hard weight sharing, where each class corresponds to a task.}
In this way, the class information can help the model learn to distinguish the groups, offsetting the dearth of data. (In Section \ref{sec:no-class}, we evaluate the impact of this choice on the performance of the group classifier and the final model.)
We train $f_{group}$ with the GDRO loss to encourage good performance at predicting each group.

To select the best group classifier model over the course of training, we use a group-labeled subset of the validation set with the same size as the group-labeled training set (ensuring that the \emph{total} number of group labels required is small across both training and validation splits).\footnote{It is not fundamentally necessary for the two group-labeled subsets to be the same size; this is merely a simple heuristic to trade off the amount of data for training the group classifier, and the amount of data for model selection.}
This group classifier is then used to generate ``pseudo-group-labels'' for all training datapoints without a known group label. We term this approach (together with Stage 2) \name-Base. (Henceforth, where unspecified, \name{} refers to \name-Base.)

We find that this simple supervised approach works well and is relatively simple to analyze (and inexpensive to run), and therefore focus on it for the majority of this paper.
Nevertheless, a more complicated method could also be used in this stage: for instance, rather than this simple supervised approach, one could use semi-supervised learning to leverage the points without group labels and train an improved group classifier. We term this approach \name-SSL. As our preliminary experiments in Section \ref{sec:semi-supervised} and theoretical analysis in Section \ref{sec:analysis} show, the use of more sophisticated algorithms such as \name-SSL can indeed further improve performance (albeit at the cost of added complexity and runtime).

\paragraph{Stage 2: Training a robust model.} Intuitively, if the predicted group labels $\hat{z}_i$ from Stage 1 align closely enough with the (unobserved) true group labels $z_i$, then training a model to be robust with respect to the \emph{predicted} groups should also induce good robustness with respect to the \emph{true} groups. Following this logic, we train a model $f_{robust} \in \mathcal{F}_{2}$ on the original task using GDRO \citep{sagawa2019distributionally}, where the groups are defined by the predicted group labels from Stage 1 (except for the datapoints with ground-truth group labels provided, for which we use this ground-truth). The same small group-labeled validation subset as in Stage 1 is used for validation of this model.

\paragraph{Initialization.} For training both the group classifier and the robust model, we typically start from a pretrained model (e.g., ResNet-50 pretrained on ImageNet). We discuss this more in Section \ref{sec:pretraining}, wherein we evaluate the impact of using different pretrained models.

\section{Analysis}
\label{sec:analysis}
In this section, we analyze the theoretical worst-group performance of our approach. First, we state an upper bound on the worst-group loss of GDRO, and a \emph{lower} bound on that of ERM. Next, we prove a generalization bound on the worst-group loss of \name{} (Theorem \ref{thm:basic}), and discuss how this result relates to the GDRO and ERM bounds.

For notation, we use $\F_1, \F_2$ to denote the spaces of possible classifiers used in Stage 1 and Stage 2, respectively. We assume $\F_2$ is parameterized by $\theta \in \R^d$, so the \name{} model $\hat{f}_{robust}$ has parameters $\hat{\theta}_{robust}$. We define $\L_{robust}(\theta) := \max\limits_{g \in \mathcal{G}} \E_{(x,y)|z = g} [\ell(f(x; \theta), y)]$ (the worst-group loss), and $\L^*_{robust} := \min\limits_{\theta \in \R^d} \L_{robust}(\theta)$. 
Finally, let $q$ be the population proportion of the rarest group. In this section, we assume $\ell$ is either the squared loss between probabilities, or the truncated cross-entropy loss, so that $\ell$ is bounded and Lipschitz.

First, we show that if GDRO is trained on the dataset of the $m$ group-labeled points, the excess worst-group risk of the resulting model (compared to the worst-group-optimal model, i.e. the model in $\mathcal{F}_2$ with the lowest population worst-group performance on the task) scales as $\tilde{O}(1/\sqrt{qm})$.

\begin{restatable}{lemma}{gdro}
\label{lem:gdro}
Let $\hat{f}_{gdro,labeled} \in \mathcal{F}_2$ (with associated parameters $\hat{\theta}_{gdro,labeled}$) denote the GDRO classifier trained on $\mathcal{D}_1$ only. Then with high probability, $\L_{robust}(\hat{\theta}_{gdro,labeled}) \le \L^*_{robust} + \tilde{O}(\ff{1}{\sqrt{qm}})$.
\end{restatable}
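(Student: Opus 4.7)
The plan is a standard ERM-style uniform convergence argument for the worst-group empirical minimizer, with the one non-routine ingredient being that the per-group sample sizes are random.

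First I would control the size of each group in $\mathcal{D}_1$. For each $g \in \gG$ let $m_g := |\{i \le m : z_i = g\}|$; by definition $\E[m_g] \ge qm$ for every $g$ (in fact equal to $p_g m$ with $p_g \ge q$). A multiplicative Chernoff bound gives $m_g \ge qm/2$ for all $g$ simultaneously with probability at least $1 - |\gG|\exp(-qm/8)$; since $|\gG|$ is constant, this failure probability is absorbed into the $\tilde{O}$. I will condition on this event from now on.

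Next I would invoke uniform convergence, group by group. Because $\ell$ is bounded and Lipschitz, standard Rademacher-based generalization bounds for the function class $\{(x,y)\mapsto \ell(f(x;\theta),y) : \theta \in \R^d\}$ yield, for a fixed group $g$,
\[
\sup_{\theta \in \R^d} \bigl|\widehat{\L}_g(\theta) - \L_g(\theta)\bigr| \;\le\; \tilde{O}\!\lt \ff{1}{\sqrt{m_g}} \rt \;\le\; \tilde{O}\!\lt \ff{1}{\sqrt{qm}} \rt,
\]
where $\L_g$ and $\widehat{\L}_g$ denote the population and empirical loss restricted to group $g$, and the $\tilde{O}$ hides the Rademacher complexity of $\F_2$, Lipschitz/boundedness constants, and $\log(1/\delta)$ factors from a standard high-probability bound. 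A union bound over $g \in \gG$ (again only a constant loss, folded into $\tilde{O}$) promotes this to
\[
\sup_{\theta} \bigl|\widehat{\L}_{robust}(\theta) - \L_{robust}(\theta)\bigr| \;\le\; \eps := \tilde{O}\!\lt \ff{1}{\sqrt{qm}} \rt,
\]
since $|\max_g a_g - \max_g b_g| \le \max_g |a_g - b_g|$.

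Finally I would finish with the usual two-step ERM chain. Let $\theta^* \in \arg\min_\theta \L_{robust}(\theta)$, so $\L_{robust}(\theta^*) = \L^*_{robust}$. Then
\[
\L_{robust}(\hat{\theta}_{gdro,labeled}) \le \widehat{\L}_{robust}(\hat{\theta}_{gdro,labeled}) + \eps \le \widehat{\L}_{robust}(\theta^*) + \eps \le \L_{robust}(\theta^*) + 2\eps = \L^*_{robust} + \tilde{O}\!\lt \ff{1}{\sqrt{qm}} \rt,
\]
where the first and third inequalities use the uniform convergence bound above and the middle inequality is the empirical optimality of GDRO on $\mathcal{D}_1$.

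The main obstacle is the first step: the effective sample size for each group is random, not $qm$, so care is needed to argue that (i) all groups have $\Omega(qm)$ samples with high probability and (ii) this randomness can be absorbed into the logarithmic factors of $\tilde{O}$. Everything after that is a routine Rademacher-complexity and ERM argument, with the Lipschitz/boundedness assumptions on $\ell$ ensuring the standard generalization machinery applies.
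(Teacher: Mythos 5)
Your proposal is correct and follows essentially the same route as the paper's proof: concentration of the per-group sample sizes (you use multiplicative Chernoff where the paper uses Hoeffding), per-group uniform convergence with a union bound over the constantly many groups, and the observation that a uniform $\eps$-perturbation of each branch of a max shifts its minimizer's value by at most $2\eps$ (the paper packages this last step as a separate perturbation lemma, which is identical to your ERM chain). The only cosmetic difference is that you invoke Rademacher-complexity bounds while the paper cites a norm-constrained ($\norm{\theta}_2 \le R$) generalization theorem for Lipschitz, bounded losses; your $\tilde{O}$ absorbs the same quantities.
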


By contrast, it is easy to show that ERM (even trained on the full dataset) can result in a worst-group risk multiple times higher than that of the optimal worst-group model:
\begin{restatable}{lemma}{erm}
\label{lem:erm}
Let $\hat{f}_{erm,full} \in \mathcal{F}_2$ (with associated parameters $\hat{\theta}_{erm,full}$) denote the ERM classifier trained on the full dataset. There exists a distribution $\mathcal{P}$ such that with high probability, $\L_{robust}(\hat{\theta}_{erm,full}) \ge \ff{\log(1/q)}{\log(|\Y|)} \L^*_{robust}- \tilde{O}(1/\sqrt{n})$.
\end{restatable}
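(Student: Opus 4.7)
The plan is to exhibit a specific distribution $\mathcal{P}$ on which the covariates are completely uninformative, so that ERM is effectively forced to output the empirical class marginal while the worst-group-optimal classifier outputs the uniform distribution. The ratio between the two worst-group losses then lands precisely at $\log(1/q)/\log|\mathcal{Y}|$, exactly matching the statement.

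Concretely, I would take $|\mathcal{G}| = |\mathcal{Y}|$ with groups in bijection with classes (so that $z$ determines $y$), fix the covariate $x$ to a single value $x_0$ almost surely, and let the class marginal assign mass $q$ to one ``rare'' class $y^*$ and $(1-q)/(|\mathcal{Y}|-1)$ to each of the remaining classes. Because $x$ is constant, every $f \in \mathcal{F}_2$ reduces to a single distribution $p \in \Delta^{|\mathcal{Y}|}$, and I would assume (as is standard) that $\mathcal{F}_2$ contains all such constant predictors. Under truncated cross-entropy loss, the population worst-group loss of a constant prediction $p$ is $\max_g -\log p_g$, which is minimized over $\Delta^{|\mathcal{Y}|}$ by the uniform distribution, giving $\mathcal{L}^*_{robust} = \log|\mathcal{Y}|$.

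The second step is to show that the ERM classifier on $n$ samples converges to the class marginal $p^\star$, which places probability $q$ on $y^*$ and hence incurs population loss $\log(1/q)$ on that group. This follows from a standard MLE/uniform-convergence argument on the simplex $\Delta^{|\mathcal{Y}|}$: empirical cross-entropy is Lipschitz on the truncated simplex, so Hoeffding plus a union bound over a fine cover yields a $\tilde{O}(1/\sqrt{n})$ deviation of $\hat{\theta}_{erm,full}$ from $p^\star$ in total variation, which in turn gives a $\tilde{O}(1/\sqrt{n})$ deviation in worst-group loss. Chaining these inequalities produces
\[
\mathcal{L}_{robust}(\hat{\theta}_{erm,full}) \;\ge\; \log(1/q) - \tilde{O}(1/\sqrt{n}) \;=\; \frac{\log(1/q)}{\log|\mathcal{Y}|}\,\mathcal{L}^*_{robust} - \tilde{O}(1/\sqrt{n}),
\]
as claimed.

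The main obstacle I foresee is handling the truncation cleanly: to actually realize the loss $\log(1/q)$ on the rare group, the cross-entropy truncation threshold must be chosen at or below $q$, so the Lipschitz and boundedness constants of $\ell$ will scale with $\log(1/q)$ and must be carried through the concentration argument without blowing up the $\tilde{O}(1/\sqrt{n})$ term (this is what the $\tilde{O}$ notation hides). The rest of the construction is quite rigid: the specific form $\log(1/q)/\log|\mathcal{Y}|$ is essentially pinned down by the combination of uninformative $x$ and truncated log-loss, and the matching variant for the squared-loss case would require a separate (analogous) construction with a different loss-specific ratio, though the same ``uninformative covariate plus imbalanced marginal'' template should apply verbatim.
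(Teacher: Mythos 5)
Your proposal is correct and follows essentially the same route as the paper's proof: a point-mass covariate with groups identical to classes, so that the worst-group-optimal constant predictor is uniform (loss $\log|\mathcal{Y}|$) while ERM converges to the class marginal and pays $\log(1/q)$ on the rare group, with a Hoeffding-type concentration supplying the $\tilde{O}(1/\sqrt{n})$ slack. Your added remark about choosing the truncation level of the cross-entropy so that $B \ge \log(1/q)$ is a detail the paper glosses over, and is worth keeping.
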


We now seek to understand how \name{} generalizes. First, we relate the performance of the group classifier in the first stage to the excess worst-group risk of the end model.

\begin{restatable}{theorem}{basic}
\label{thm:basic}
Suppose that on \emph{each} group, the error rate of the group classifier from ``Stage 1" is $\le r$. Then with high probability, $\L_{robust}(\hat{\theta}_{robust}) \le \L^*_{robust} + \tilde{O} \lt \ff{r}{q} + \ff{1}{\sqrt{qn}} \rt$.
\end{restatable}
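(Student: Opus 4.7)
The plan is to reduce the problem to a standard excess-risk decomposition in which the usual ``ERM vs.\ best-in-class'' comparison gets perturbed by an additional $O(r/q)$ term coming from the mismatch between predicted and true group labels. Concretely, I would introduce three worst-group losses: (i) the target population loss $\L_{robust}(\theta)=\max_g\E_{(x,y)|z=g}[\l(f(x;\theta),y)]$, (ii) its ``predicted-group'' counterpart $\tilde{\L}_{robust}(\theta):=\max_g \E_{(x,y)|\hat z=g}[\l(f(x;\theta),y)]$, where $\hat z$ denotes the group label output by the Stage~1 classifier $\hat f_{group}$, and (iii) the empirical objective $\hat{\L}_{robust}^{emp}(\theta)$ actually minimized by GDRO in Stage~2 on $\{(x_i,y_i,\hat z_i)\}_{i=1}^n$. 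The target bound follows once we show (a) $|\tilde{\L}_{robust}(\theta)-\L_{robust}(\theta)|=O(r/q)$ uniformly over $\theta\in\F_2$, and (b) $|\hat{\L}_{robust}^{emp}(\theta)-\tilde{\L}_{robust}(\theta)|=\tilde O(1/\sqrt{qn})$ uniformly over $\F_2$, since then the standard ``empirical minimizer vs.\ $\theta^\ast$'' sandwich yields $\L_{robust}(\hat\theta_{robust})\le \L_{robust}(\theta^\ast)+O(r/q)+\tilde O(1/\sqrt{qn})$.

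For part~(a) — the main obstacle — I would write each conditional expectation as a ratio $\E[\l\cdot\mathbf{1}\{\hat z=g\}]/P(\hat z=g)$ and perform a direct perturbation analysis in both numerator and denominator. The per-group error-rate hypothesis $P(\hat z\ne z\mid z=g)\le r$ gives $P(\hat z\ne z)\le r$, so $|\hat p_g-p_g|\le 2r$ and, using boundedness of $\l$, the numerator changes by at most $O(Br)$. Because $p_g\ge q$, assuming the regime $r\lesssim q/2$ we also get $\hat p_g\ge q/2$, so the usual ``ratio difference'' identity $\frac{A+\Delta_A}{p+\Delta_p}-\frac{A}{p}=\frac{\Delta_A p-A\Delta_p}{p(p+\Delta_p)}$ combined with $A\le Bp_g$ yields an $O(r/q)$ bound on the per-group losses, and taking a max gives (a). The delicate point is exactly that division by $\hat p_g\ge q/2$: this is where the $1/q$ factor appears, and it is also why we need an implicit assumption that $r$ is not too close to $q$. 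A secondary subtlety is that \name-Base uses the \emph{true} $z_i$ for $i\le m$ and only $\hat z_i$ for $i>m$, but this only shrinks the effective disagreement rate $P(\hat z_i\ne z_i)$, so the same bound applies.

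For part~(b) I would condition on the Stage~1 classifier $\hat f_{group}$ (which makes the training data IID with $\hat z_i=\hat f_{group}(x_i)$ a deterministic function of $x_i$) and apply the GDRO uniform-convergence bound underlying Lemma~\ref{lem:gdro}, but now with $m$ replaced by $n$ and with the minimum group mass $q$ replaced by $\min_g \hat p_g\ge q/2$. A Chernoff bound ensures each predicted group contains $\Omega(qn)$ samples with high probability, so the Rademacher/covering argument behind Lemma~\ref{lem:gdro} goes through verbatim with rate $\tilde O(1/\sqrt{qn})$ uniformly over $\F_2$. Finally, combining (a) and (b) in both directions at $\hat\theta_{robust}$ and at the worst-group-optimal $\theta^\ast$, and using $\hat{\L}_{robust}^{emp}(\hat\theta_{robust})\le \hat{\L}_{robust}^{emp}(\theta^\ast)$, produces the claimed bound $\L_{robust}(\hat\theta_{robust})\le \L^\ast_{robust}+\tilde O(r/q+1/\sqrt{qn})$, with all high-probability events controlled via a union bound over the Stage~1 randomness and the Stage~2 uniform-convergence event.
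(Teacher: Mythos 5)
Your proposal is correct and follows essentially the same route as the paper's proof: the same three-way decomposition into (a) an $O(r/q)$ population-level mismatch between true-group and predicted-group conditional losses and (b) a $\tilde O(1/\sqrt{qn})$ uniform-convergence term over the predicted groups, combined via the standard perturbed-max sandwich (the paper's Lemma~\ref{lem:perturb}). The only difference is cosmetic: you bound the conditional-expectation shift via the ratio identity on $\E[\l\cdot\mathbf{1}\{\hat z=g\}]/P(\hat z=g)$, while the paper decomposes $\E[\l\mid\tilde z=g]$ by conditioning on $z=g$ versus $z\ne g$ and bounding $P(z\ne\tilde z\mid\tilde z=g)\le r/((1-r)q)$; both yield the same $O(r/q)$ term.
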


Theorem~\ref{thm:basic} says that the excess worst-group risk scales \emph{linearly} in the error rate of the group classifier, plus an additional $O(\ff{1}{\sqrt{qn}})$ term which is small if the \emph{total} number of datapoints is large. In particular, if we use standard learning-theoretic results to bound the error rate of the group classifier, we obtain the following Corollary~\ref{coro:base}.
For Corollary~\ref{coro:base} and the remainder of this section, for simplicity we shall additionally assume the group classification problem is realizable: specifically, we assume there exists $f^* \in \mathcal{F}_1$ such $\E_{(x,y,z) \sim \P}[\ell(f(x,y), z)] = 0$.\footnote{For standard losses such as (truncated) cross-entropy or squared loss, this implies $\argmax\{f(x,y)\} = z$ with probability 1.} We show how to relax this assumption in Theorem \ref{thm:bayes} (Appendix~\ref{app:theory}).

\begin{restatable}{corollary}{base}
\label{coro:base}
With high probability, for \name-\emph{Base} we have $\L_{robust}(\hat{\theta}_{robust}) \le \L^*_{robust} + \tilde{O} \lt \ff{1}{q\sqrt{m}} \rt$.
\end{restatable}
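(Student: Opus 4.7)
}
The plan is to reduce to Theorem~\ref{thm:basic} by bounding the per-group error rate $r$ of the Stage~1 classifier $\hat{f}_{group}$ and then verifying that both terms of the bound in Theorem~\ref{thm:basic} are dominated by $\tilde{O}(1/(q\sqrt{m}))$. Since Theorem~\ref{thm:basic} gives $\L_{robust}(\hat{\theta}_{robust}) - \L^*_{robust} \le \tilde{O}(r/q + 1/\sqrt{qn})$, it suffices to show $r = \tilde{O}(1/\sqrt{m})$ (modulo $q$-factors absorbed by subsequent steps) and then compare the residual $1/\sqrt{qn}$ term to the target rate.

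First I would invoke realizability: by assumption there exists $f^* \in \F_1$ with zero population loss for group prediction, and GDRO on $\D_1$ seeks to minimize the worst-group training loss, which is $0$ at $f^*$. Next, I would use a Chernoff/Hoeffding bound to show that with high probability the number of samples falling in each group in $\D_1$ is at least $\Omega(qm)$; this is the standard concentration step and should be cleanly combined via a union bound over the (constantly many) groups in $\G$. Conditioned on this event, because GDRO drives the empirical loss on each group to the realizable floor of $0$, a standard realizable-case uniform convergence argument (VC dimension or Rademacher bound on $\F_1$, with the standard loss-to-0/1-error conversion permitted since $\ell$ is bounded and Lipschitz and upper-bounds 0/1 error up to a constant factor) yields a per-group population error $r_g = \tilde{O}(1/(qm))$. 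Taking a max over $g \in \G$ gives $r = \tilde{O}(1/(qm))$.

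Plugging this $r$ into Theorem~\ref{thm:basic} yields
\[
\L_{robust}(\hat{\theta}_{robust}) - \L^*_{robust} \;\le\; \tilde{O}\!\lt \tfrac{1}{q^2 m} + \tfrac{1}{\sqrt{qn}} \rt.
\]
It remains to check that each summand is $\tilde{O}(1/(q\sqrt{m}))$. For the first: $1/(q^2 m) \le 1/(q\sqrt{m})$ whenever $q\sqrt{m} \ge 1$, which is precisely the nontrivial regime (otherwise the claimed rate is vacuous). For the second: since $n \ge m$ and $q \le 1$, we have $1/\sqrt{qn} \le 1/\sqrt{qm} \le 1/(q\sqrt{m})$. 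A final union bound over the Chernoff event, the Stage~1 uniform convergence event, and the event in Theorem~\ref{thm:basic} delivers the stated high-probability guarantee.

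The main obstacle I anticipate is obtaining the realizable-case per-group rate cleanly: GDRO couples the groups during training, so the per-group generalization bound has to argue that the GDRO solution still attains zero empirical loss on each group and then apply realizable uniform convergence group by group. A secondary wrinkle is the conversion from the surrogate loss $\ell$ used in training to the 0/1 ``error rate'' appearing in Theorem~\ref{thm:basic}; this should be routine given the boundedness/Lipschitzness assumption on $\ell$, but requires care about absolute constants that then get absorbed into $\tilde{O}$.
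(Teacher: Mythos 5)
Your high-level architecture---reduce to Theorem~\ref{thm:basic} by bounding the Stage~1 per-group error $r$ and then absorb both terms into $\tilde{O}(1/(q\sqrt{m}))$---matches the paper's, but the way you bound $r$ diverges at the crux, and that step is where your argument has a gap. You claim $r_g = \tilde{O}(1/(qm))$ from ``a standard realizable-case uniform convergence argument (VC dimension or Rademacher bound).'' These are not interchangeable: a Rademacher/uniform-convergence bound of the type the paper actually has available (Theorem~\ref{thm:generalization}, which is the only generalization tool set up for the norm-bounded parametric classes $\F_1^R$ with bounded Lipschitz surrogate losses) only yields the slow rate $\tilde{O}(1/\sqrt{q_g m})$ per group. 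The fast $\tilde{O}(1/(qm))$ rate would require a genuinely different ingredient---a realizable-case VC/PAC bound applied to the induced $\argmax$ classifiers, after first arguing that zero empirical worst-group surrogate loss forces zero empirical 0--1 error on every group---and you assert this rather than carry it out. This matters quantitatively: if you fall back to the slow rate $r = \tilde{O}(1/\sqrt{qm})$ and plug it into Theorem~\ref{thm:basic}'s $r/q$ term as you propose, you get $\tilde{O}(q^{-3/2}m^{-1/2})$, which misses the stated bound by a factor of $q^{-1/2}$.

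The paper takes the slow-rate route and recovers the full $\tilde{O}(1/(q\sqrt{m}))$ by \emph{not} using the crude $r/q$ substitution: it uses the per-group refinement $r_{g'} = \tilde{O}(1/\sqrt{q_{g'}m})$ and computes $P(z \neq g \mid \tilde{z} = g) = \sum_{g' \neq g} P(\tilde{z}=g \mid z=g')\,q_{g'}/P(\tilde{z}=g) \le \tilde{O}\lt \ff{1}{q\sqrt{m}}\sum_{g'}\sqrt{q_{g'}}\rt = \tilde{O}(1/(q\sqrt{m}))$, exploiting $\sum_{g'}\sqrt{q_{g'}} \le \sqrt{|\G|}$, before re-running the Theorem~\ref{thm:basic} argument. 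So to repair your proof you must either (i) actually establish the realizable fast rate for Stage~1 (your final arithmetic then goes through in the nonvacuous regime $q\sqrt{m} \gtrsim 1$, and would in fact give a sharper intermediate bound than the paper's), or (ii) keep the slow rate but replace the black-box $r/q$ substitution with the paper's finer conditional-probability computation. As written, neither is done, so the stated rate is not delivered.
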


A strength of Theorem~\ref{thm:basic} and Corollary~\ref{coro:base} are that they do not require assumptions on the data distribution (unlike prior work such as \citep{sohoni2020no} which requires specific distributional assumptions to obtain generalization bounds).
However, the downside of Theorem~\ref{thm:basic} is that the bound is relatively weak unless the group classifier is known to perform near-optimally; indeed, Corollary \ref{coro:base} yields a slightly weaker asymptotic bound than GDRO on the labeled data alone. This result can be improved upon when further assumptions are made in order to guarantee a stronger bound on the group classification error. For instance, in \name-SSL, \emph{semi-supervised} learning is used to train the group classifier, leveraging the group-unlabeled points. With an appropriate choice of semi-supervised learning method (such as FixMatch \citep{sohn2020fixmatch}), if $m = \Omega(\sqrt{n})$, then under appropriate conditions, the worst-group generalization error rate bound of \name-SSL is $O(1/\sqrt{n})$. This result is based on the PAC-learning based results of \cite{balcan2009discriminative}, and requires the assumptions therein as well as realizability (the exact conditions are somewhat technical, and are deferred to the discussion in Appendix~\ref{app:theory}). This translates to an excess robust risk of $O(1/\sqrt{n})$ for the final model, as stated in Corollary~\ref{coro:ssl}.

\begin{restatable}{corollary}{ssl}
\label{coro:ssl}
Under appropriate conditions, for \name-\emph{SSL} we have $\L_{robust}(\hat{\theta}_{robust}) \le \L^*_{robust} + \tilde{O} \lt \ff{1}{q\sqrt{n}} + \ff{1}{qm} \rt$ with high probability.
\end{restatable}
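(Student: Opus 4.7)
The plan is to prove Corollary~\ref{coro:ssl} as a two-stage composition: first, obtain a strong per-group error bound on the Stage~1 SSL group classifier, and second, pipe this bound into Theorem~\ref{thm:basic} to translate group-classification error into excess worst-group risk on the target task. Recall that by Theorem~\ref{thm:basic}, if the Stage~1 group classifier attains per-group error at most $r$, then with high probability
\begin{equation*}
\L_{robust}(\hat{\theta}_{robust}) \le \L^*_{robust} + \tilde{O}\!\lt \ff{r}{q} + \ff{1}{\sqrt{qn}}\rt.
\end{equation*}
So the task reduces to showing that, under the SSL assumptions invoked in the statement, the \name-SSL group classifier achieves per-group error $r = \tilde{O}(1/\sqrt{n} + 1/m)$.

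For this Stage~1 bound, I would invoke the PAC--SSL framework of \citet{balcan2009discriminative}: under realizability (some $f^* \in \F_1$ has zero population group-prediction error) together with a compatibility notion that restricts the effective version space to a small subclass, a compatible semi-supervised learner such as FixMatch, given $u$ unlabeled and $m$ labeled examples, achieves generalization error of the form $\tilde{O}(1/\sqrt{u} + 1/m)$. Substituting $u = n - m = \Theta(n)$ yields overall error $\tilde{O}(1/\sqrt{n} + 1/m)$ for the group classifier. To upgrade from an average-case bound to a \emph{per-group} bound, I would use that Stage~1 of \name{} is trained with the GDRO objective over $\G$ and that each group has population mass at least $q$; this, together with realizability, lets us absorb the resulting $1/q$ factor into the $\tilde{O}$ on $r$ without changing the asymptotics (alternatively, one can rerun the Balcan--Blum analysis conditionally on each group). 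Plugging $r = \tilde{O}(1/\sqrt{n} + 1/m)$ into the Theorem~\ref{thm:basic} bound and noting $1/\sqrt{qn} \le 1/(q\sqrt{n})$ for $q \le 1$ gives the claimed $\tilde{O}(1/(q\sqrt{n}) + 1/(qm))$ rate.

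The main obstacle I expect is precisely the conversion from the average-case SSL guarantee to a \emph{per-group} guarantee in Stage~1. Standard SSL bounds control error under the marginal data distribution, whereas we need control under each group-conditional distribution, and a naive reweighting costs a factor of $1/q$ that can interact badly with the finite-sample slack. The trick is to carefully invoke the correct version of the PAC--SSL theorem so that realizability preserves the $1/m$ (rather than $1/\sqrt{m}$) rate on the labeled term, and to exploit the GDRO weighting already present in Stage~1. The hypothesis $m = \Omega(\sqrt{n})$ enters here to ensure the labeled-sample term $1/m$ does not dominate the unlabeled-sample term $1/\sqrt{n}$, so that the unlabeled data is meaningfully useful; tracking these two terms through the composition with Theorem~\ref{thm:basic} is the delicate step that justifies the ``appropriate conditions'' caveat in the corollary's statement.
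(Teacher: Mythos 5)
Your proposal follows essentially the same route as the paper: reduce to a per-group error bound $r$ for the Stage~1 classifier via Theorem~\ref{thm:basic}, obtain $r$ from the Balcan--Blum PAC--SSL compatibility framework (Theorem~\ref{thm:ssl_generic} in the appendix) under realizability, with the unlabeled data contributing the $1/\sqrt{n}$ term and the labeled data the realizable-rate $1/m$ term, and use $m = \Omega(\sqrt{n})$ exactly as you describe. Two points of detail, though. First, the step where you ``absorb the resulting $1/q$ factor into the $\tilde{O}$ on $r$'' is not legitimate: the paper's $\tilde{O}$ hides only polylogarithmic factors, and the $q$-dependence is tracked explicitly in every bound. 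If you take the naive per-group bound $r = O(\ep/q)$ and then feed it into Theorem~\ref{thm:basic} as a black box, the $r/q$ term becomes $O(\ep/q^2)$, which overshoots the claimed $\tilde{O}\lt \ff{1}{q\sqrt{n}} + \ff{1}{qm}\rt$. The paper's resolution (made explicit in the analogous proof of Corollary~\ref{coro:base}) is to open up the proof of Theorem~\ref{thm:basic}: the marginal error $\ep$ enters $P(z \ne \tilde{z} \mid \tilde{z} = g)$ only through $\sum_{g' \ne g} P(\tilde{z} = g \mid z = g')\, q_{g'} \le |\G|\,\ep$, where the $q_{g'}$ weights cancel the per-group $1/q_{g'}$ inflation, so only a single factor $1/q$ (from $P(\tilde{z}=g) \gtrsim q$) survives. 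Your alternative of rerunning the SSL analysis group-conditionally would also work but is not what the paper does, and the GDRO weighting in Stage~1 is not actually used in the paper's argument here. Second, the paper spends most of its proof verifying that FixMatch fits the compatibility framework at all: it defines $\chi(f,x)$ via confident-prediction consistency under augmentation, argues that a sufficiently weak augmentation keeps the realizable $f^*$ compatible, and that a large enough supervised weight $\M$ forces zero training error so that Theorem~\ref{thm:ssl_generic} applies. Your proposal treats this as given; since the corollary's ``appropriate conditions'' are precisely these, a complete write-up should include that verification.
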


Finally, another natural question is how the performance of \name-Base compares to that of ERM. We study this in Corollary~\ref{coro:strong}.

\begin{restatable}{corollary}{strong}
\label{coro:strong}
In addition to the assumptions of Theorem \ref{thm:basic}, suppose that
for all $f \in \mathcal{F}_2$, $\ell(f(x), y) \perp \argmax\{\hat{f}_{group}(x, y)\}\, \big|\, z$.
Let $\theta^*_{avg}$ be the minimizer of the population \emph{average} loss $\E_{(x,y)\sim P} [\ell(x, y; \theta)]$.
Then with high probability, for \name-\textrm{Base} we have $\L_{robust}(\hat{\theta}_{robust}) \le \min\lt \L^*_{robust}+\tilde{O}(\ff{1}{q\sqrt{m}}), \, \L_{robust}(\theta^*_{avg}) +\tilde{O}(\ff{1}{\sqrt{qn}})\rt$.
\end{restatable}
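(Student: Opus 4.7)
The statement is a $\min$ of two bounds, and I would prove each separately and then combine them. The first bound, $\L_{robust}(\hat\theta_{robust}) \le \L^*_{robust} + \tilde{O}(\ff{1}{q\sqrt{m}})$, is exactly Corollary~\ref{coro:base} and needs no new work, so all the effort goes into establishing the second bound, which is where the conditional independence (CI) assumption $\ell(f(x),y) \perp \argmax\{\hat f_{group}(x,y)\} \mid z$ enters.

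For the second bound I would introduce the predicted-group worst-case population loss $\tilde\L_{robust}(\theta) := \max_g \E[\ell(f(x;\theta),y) \mid \hat z = g]$, where $\hat z := \argmax\{\hat f_{group}(x,y)\}$. The CI assumption immediately gives the identity $\E[\ell \mid \hat z = g] = \sum_{g'} P(z = g' \mid \hat z = g)\,\E[\ell \mid z = g']$, a convex combination of the true-group conditional losses, so $\tilde\L_{robust}(\theta) \le \L_{robust}(\theta)$ for every $\theta$. Since the Stage~1 error rate is at most $r$ on every true group, every predicted group has population probability at least $(1-r)q \ge q/2$ for small $r$, so the GDRO generalization argument behind Lemma~\ref{lem:gdro} can be applied to the \emph{predicted}-group partition on all $n$ datapoints, giving with high probability $\tilde\L_{robust}(\hat\theta_{robust}) \le \min_\theta \tilde\L_{robust}(\theta) + \tilde{O}(\ff{1}{\sqrt{qn}}) \le \tilde\L_{robust}(\theta^*_{avg}) + \tilde{O}(\ff{1}{\sqrt{qn}}) \le \L_{robust}(\theta^*_{avg}) + \tilde{O}(\ff{1}{\sqrt{qn}})$.

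The main obstacle is the \emph{reverse} translation, from $\tilde\L_{robust}$ back to $\L_{robust}$ at $\theta = \hat\theta_{robust}$. Keeping only the $g' = g$ summand in the CI identity and using $P(\hat z = g \mid z = g) \ge 1-r$ together with the crude bound $P(\hat z = g) \le P(z = g) + r$, I would derive $\E[\ell \mid z = g] \le \ff{P(\hat z = g)}{P(z=g,\hat z=g)}\, \E[\ell \mid \hat z = g] \le \ff{1 + r/P(z=g)}{1-r}\, \E[\ell \mid \hat z = g]$, which for $P(z=g) \ge q$ simplifies to $(1 + O(r/q))\,\E[\ell \mid \hat z = g]$. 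Hence $\L_{robust}(\theta) \le (1 + O(r/q))\,\tilde\L_{robust}(\theta)$ for every $\theta$. Under realizability, standard learning-theoretic arguments give $r = \tilde O(1/\sqrt m)$, so with bounded losses the multiplicative slack contributes at most an additive $\tilde{O}(\ff{1}{q\sqrt m})$, which is already subsumed by the first branch of the $\min$. Taking the minimum with Corollary~\ref{coro:base} yields the stated bound.
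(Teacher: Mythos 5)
Your overall route is the same as the paper's. The first branch of the $\min$ is Corollary~\ref{coro:base} verbatim; for the second branch you (i) use the conditional-independence assumption to write $\E[\ell(x,y;\theta)\mid \tilde{z}=g]$ as a convex combination of the true-group conditional losses, so the predicted-group robust objective is pointwise at most $\L_{robust}(\theta)$, and (ii) run the per-group generalization argument of Lemma~\ref{lem:gdro} over the \emph{predicted} groups on all $n$ points, exactly as in Appendix~\ref{app:eg}. The paper's identity $\E[\ell\mid\tilde{z}=g] = (1-a)\E[\ell\mid z=g] + a\,L_{avg}(\theta)$ with $a = P(z\ne g\mid\tilde{z}=g)/P(z\ne g)$ is precisely your convex combination with the off-diagonal mass collected into the population average loss, so up to this point the two arguments coincide.

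The genuine gap is in your final step, the reverse translation from the predicted-group robust loss back to $\L_{robust}(\hat{\theta}_{robust})$. You inflate by a multiplicative $(1+O(r/q))$ factor and then declare the resulting additive $\tilde{O}(\ff{1}{q\sqrt{m}})$ slack ``subsumed by the first branch of the min.'' That move is not valid: to conclude $\L_{robust}(\hat{\theta}_{robust}) \le \min(A,B)$ you must establish $\L_{robust}(\hat{\theta}_{robust}) \le B$ on its own, and the second branch $B = \L_{robust}(\theta^*_{avg}) + \tilde{O}(\ff{1}{\sqrt{qn}})$ carries only an $n$-dependent slack. Worse, the $m$-dependent term you discard is exactly what the second branch exists to avoid: since $\L^*_{robust} \le \L_{robust}(\theta^*_{avg})$, a bound of the form $\L_{robust}(\theta^*_{avg}) + \tilde{O}(\ff{1}{q\sqrt{m}})$ is already implied by the first branch, so your version of the second branch adds no information in the regime $n \gg m$ that motivates the corollary. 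The paper avoids the multiplicative inflation by working with the exact decomposition $(1-a)\E[\ell\mid z=g] + a\,L_{avg}(\theta)$ and arguing (via a contradiction on the population minimizers) that the minimizer of the predicted-group population objective already has true robust loss at most $\L_{robust}(\theta^*_{avg})$, paying only the $\tilde{O}(\ff{1}{\sqrt{qn}})$ estimation error afterwards; the key structural fact it leans on is that the contaminating term is $L_{avg}(\theta)$, which is controlled at the comparison point $\theta^*_{avg}$, rather than a generic loss bounded only by $B$. To repair your argument you would need to use that structure explicitly instead of the crude $(1+O(r/q))$ bound.
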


For comparison, the robust loss of the ERM model is upper bounded by $\L_{robust}(\theta^*_{avg}) +\tilde{O}(\ff{1}{\sqrt{qn}})$ with high probability.

 In words, Corollary \ref{coro:strong} says that if we can assume that the errors made by $f_{group}$ are ``random'' conditioned on the true group identity---i.e., they do not affect the distribution of the loss on the target task---then in addition to the bound of Theorem~\ref{thm:basic}, we can also guarantee that the worst-group loss is at least as good as that of the ERM model (plus small $\tilde{O}(1/\sqrt{qn})$ noise). Of course, this ``random error'' assumption is very strong; nevertheless, in Section \ref{sec:random-flip} we compare the performance of \name-Base and simulated ``group predictions'' with the same confusion matrix as those of \name-Base but random errors, and find that the \name{} predictions do not substantially degrade performance compared to these randomized predictions. Thus, we hypothesize that the errors made by \name-Base are ``sufficiently random'' to make the conditions of Corollary~\ref{coro:strong} hold approximately.

\begin{table*}[t]
\caption{Main results. Baselines that do not use group labels for training: ERM,  \textsc{EIIL}, \textsc{George}, \textsc{Jtt}. We compare to our method (\name), and to GDRO run on only the points with known group labels (Subset-GDRO), when there are 32 group-labeled examples provided from each group, in both the training and validation sets. Additional baselines, and results for different numbers of group-labeled points, are in Table \ref{tab:wg-results-app} (Appendix \ref{app:experiments}). We also compare to GDRO run on the full dataset (Full-GDRO), which can be roughly interpreted as an ``upper bound'' on the expected performance (since it requires all group labels to be known).
For U-MNIST, we use a LeNet-5; for all other tasks, we use a ResNet-50, starting from the pretrained PyTorch model checkpoint trained on the supervised ImageNet task.
}
\label{tab:wg-results}
\centering
{\small
{
\resizebox{\textwidth}{!}{\begin{tabular}{@{}lccccccccc@{}}
\toprule
Method              & \multicolumn{2}{c}{U-MNIST} & \multicolumn{2}{c}{Waterbirds} & \multicolumn{2}{c}{CelebA} & \multicolumn{2}{c}{U-CIFAR10} \\
Accuracy ($\%$)     & Worst-group       & Avg.    & Worst-group     & Avg.  & Worst-group & Avg.  & Worst-group           & Avg.         \\ \midrule
ERM              &  $93.4 \pm 0.5$  &  $99.2 \pm 0.0$  &  $60.6 \pm 3.3 $  &  $97.3 \pm 0.1$ &  $39.7 \pm 3.0$  &  $95.7 \pm 0.1$  &  $88.4 \pm 1.4$  &  $99.5 \pm 0.1$  \\
EIIL &  $\textbf{97.2} \pm 0.5$  &  $98.9 \pm 0.2$  &  $87.3 \pm 4.2$  &  $93.1 \pm 0.6$  &  $81.3 \pm 1.4$  &  $89.5 \pm 0.4$ &  $85.3 \pm 1.4$  &  $99.4 \pm 0.1$   \\
\textsc{George}  & $95.7 \pm 0.6$ &     $97.9 \pm 0.2$  &  $76.2 \pm 2.0$  &  $95.7 \pm 0.5$  &  $53.7 \pm 1.3$ &  $94.6 \pm 0.2$  &  ${93.4} \pm 5.8$  &  $98.9 \pm 0.3$  \\
\textsc{Jtt}     &  ${96.2} \pm 0.7$ &  $98.4 \pm 0.4$  &  $88.0 \pm 0.7$  & $91.7 \pm 0.8$  & $77.8 \pm 2.0$ & $87.2 \pm 1.2$  &  $89.0 \pm 4.7$  &  $94.6 \pm 1.3$  \\
\name-Base {\scriptsize(ours)}  &  ${96.9} \pm 0.9$ &  $99.1 \pm 0.3$   &  $\textbf{89.6} \pm 0.9$  &  $94.3 \pm 1.3$  & $\textbf{83.8} \pm 2.7$  &  $92.8 \pm 0.6$ &  $\textbf{94.5} \pm 1.1$ &  $98.9 \pm 0.3$ \\ 
\midrule
Full-GDRO             &  $98.6 \pm 0.2$  &  $99.1 \pm 0.1$ &  $90.9 \pm 0.2$ &  $92.8 \pm 0.2$  &  $89.3 \pm 0.9$  & $92.8 \pm 0.1$  &  $97.0 \pm 0.3$  &  $99.2 \pm 0.3$ \\
\end{tabular}}
}}
\end{table*}

\section{Experiments}
\label{sec:experiments}
We empirically validate that \name{} improves group robustness on four different image classification tasks. In Section \ref{sec:wg-results}, we study how the worst-group performance of \name{} scales with the number of group labels, and compare it with several baselines. We show that with as few as 1-33\% of points having group labels, \name{} attains better worst-group performance than the baselines that do not use this group information, and approaches the worst-group performance of GDRO on the full dataset as the number of group labels increases. We also show that \name{} always outperforms GDRO trained on only the subset of points with group labels. In Section \ref{sec:group-acc}, we confirm that the worst-group accuracy of the final model increases with the accuracy of stage 1 of \name. 
In Section \ref{sec:semi-supervised}, we show that using semi-supervised learning (SSL) for the first stage of \name{} can further improve worst-group accuracy.
In Section \ref{sec:ablations}, we present ablation experiments to better understand the effect of \name's design choices on the worst-group performance of the final model. Additional ablation experiments and experimental details are in Appendix \ref{app:experiments}.

In our experiments, we study how performance varies as we increase the number of group-labeled examples \emph{per group}. In other words, we pick a fixed budget of (training and validation) examples to label for each group.\footnote{We use the same group label budget for both the training and validation sets.} These examples are randomly sampled from the appropriate group in the original dataset. In reality, if we only have access to a balanced group-labeled dataset (e.g., one with the same group proportions as the population), we can convert it to a balanced one by simply subsampling the group-labeled points by group to get a (smaller) balanced dataset; thus, our performance can be interpreted as a rough lower bound on the performance without such subsampling. (In fact, we found that this subsampling does not meaningfully degrade the final performance, compared to using more group-labeled points for the larger groups.)

\subsection{Datasets}
\label{sec:datasets}
We evaluate on four image classification tasks.
\newline
\textbf{U-MNIST.} U-MNIST \citep{sohoni2020no} is a modified version of MNIST \citep{lecun2010mnist}, where the task is to classify digits as `$<5$' or `$\ge 5$', the groups are the individual digits, and only 5\% of images in the `8' group are retained from the individual dataset. This rarity makes `8' images more difficult to classify.
\newline
\textbf{Waterbirds.} Waterbirds \citep{sagawa2019distributionally}, is a popular robustness benchmark that consists of images from `landbird' and `waterbird' species on either land or water backgrounds. The task is to classify images as `landbird' vs. `waterbird', and the groups are defined by background. 95\% of landbirds are on land backgrounds and similarly for waterbirds; this spurious correlation makes landbirds on water and waterbirds on land harder to classify.
\newline
\textbf{CelebA.} CelebA \citep{liu2015deep} is a popular face classification dataset often used to evaluate robustness. The task is to classify faces as `blond' or `not blond', and the groups are defined by gender. Only 6\% of blond examples are male, leading to poor performance on this group.
\newline
\textbf{U-CIFAR10.} We introduce U-CIFAR10 as a modification of the CIFAR-10 dataset \citep{krizhevsky2009learning}, where the task is to classify the image as `animal' or `vehicle', the groups are the 10 original CIFAR-10 classes, and we undersample the `airplane' class to 5\%. Though similar to U-MNIST, this task is much more challenging.

\subsection{Results: Worst-Group Performance}
\label{sec:wg-results}

\begin{figure*}
    \centering
    \includegraphics[width=0.4\textwidth]{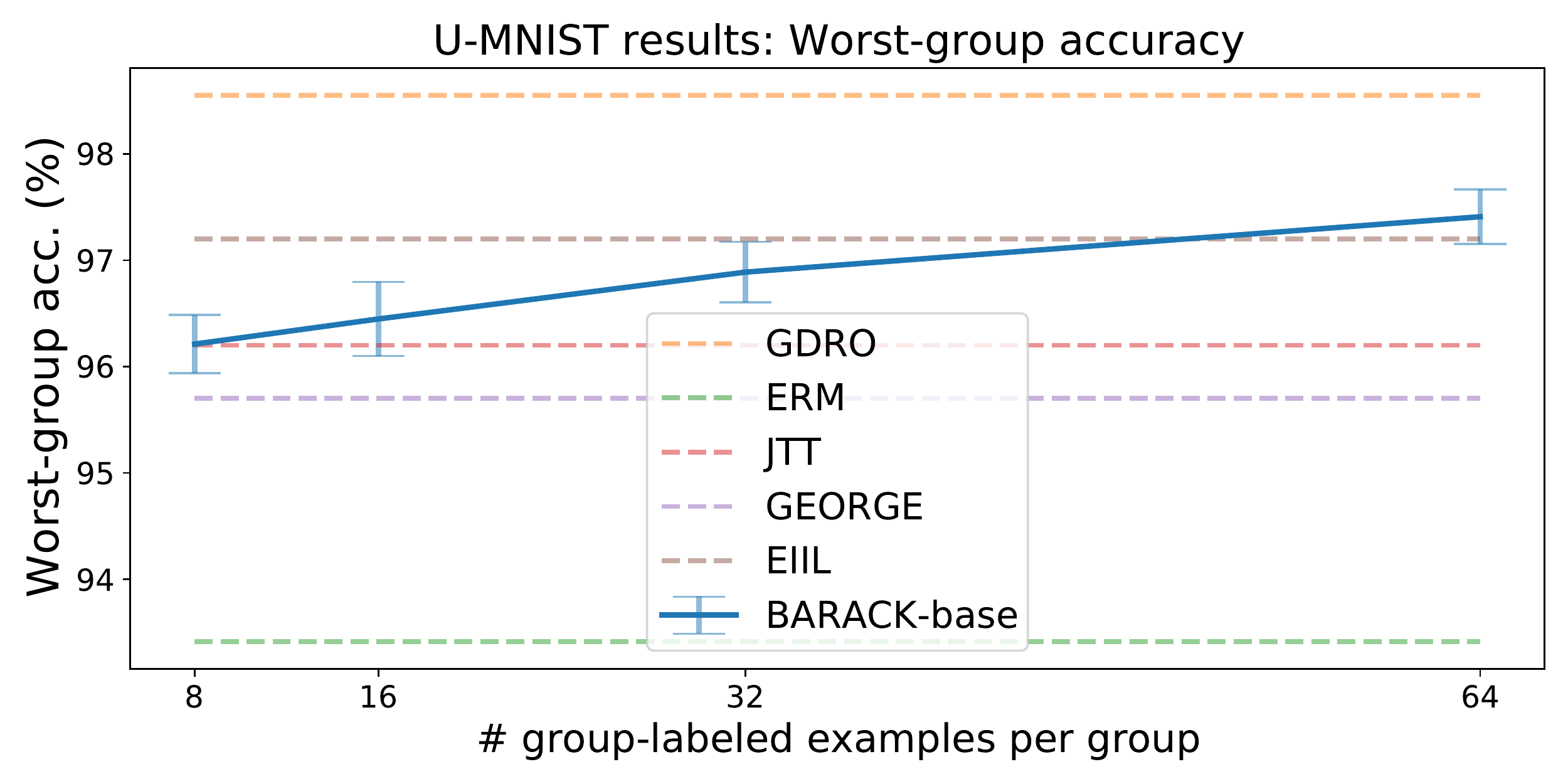}\quad\includegraphics[width=0.4\textwidth]{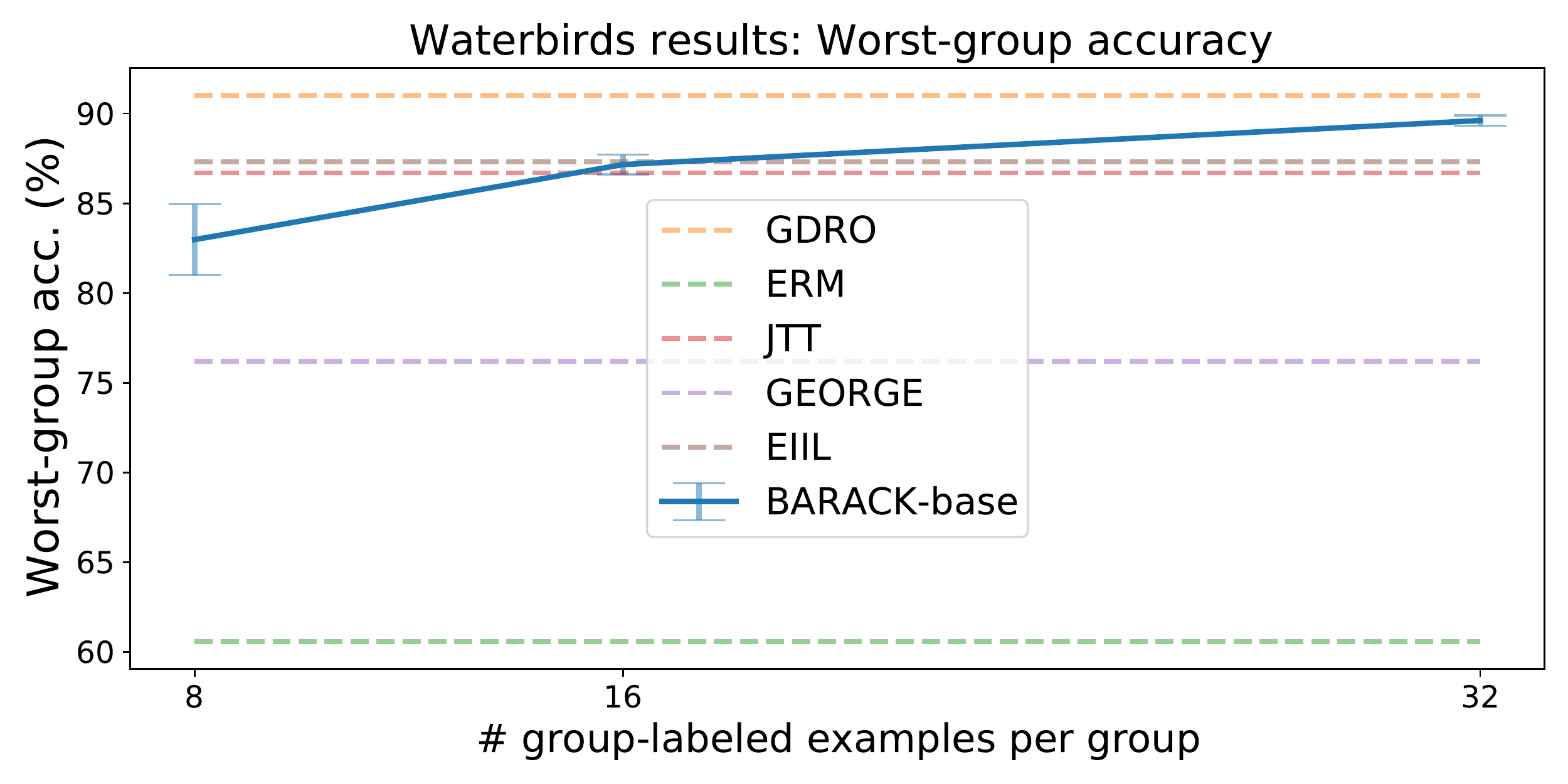}
    \\
    \includegraphics[width=0.4\textwidth]{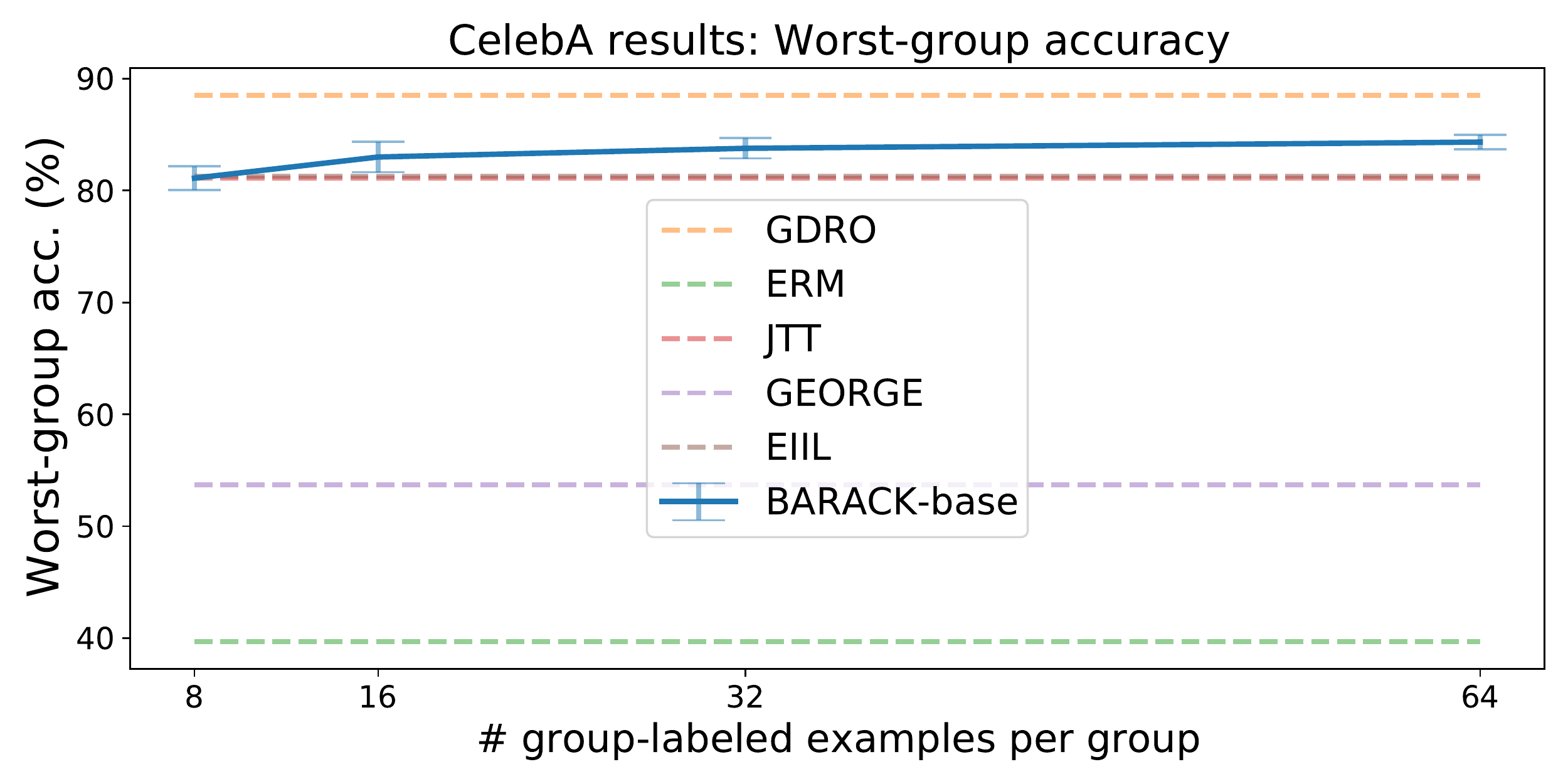}\quad\includegraphics[width=0.4\textwidth]{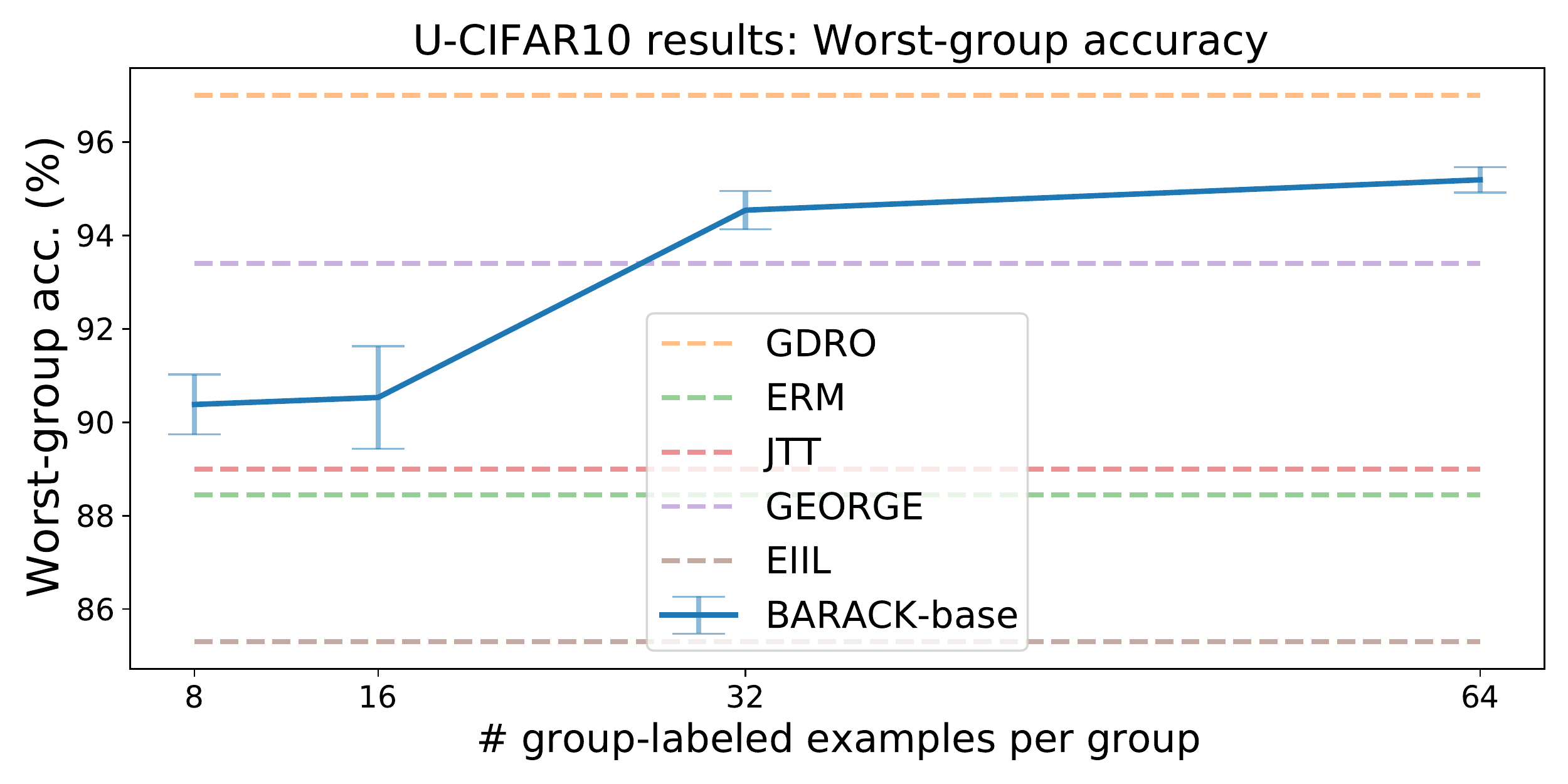}
    \vspace{-1em}
    \caption{Worst-group accuracy as a function of number of group-labeled examples. x-axis denotes the number of training examples with group labels, per group (which equals the number of validation group-labeled examples per group). (Subset-GDRO plots in Appendix \ref{app:experiments}.)}
    \label{fig:wg-results}
\end{figure*}

Across the four datasets in Section \ref{sec:datasets}, \name{} matches or improves worst-group accuracy compared to baselines that do not use group information; results are presented in Table~\ref{tab:wg-results} and Figure \ref{fig:wg-results}. The baselines that we study are ERM, \textsc{George} \citep{sohoni2020no} (which do not use group information), and \textsc{Eiil} \citep{creager2021environment} and \textsc{Jtt} \citep{liu2021just} (which both use group information on the validation set only). We provide additional baseline results in Appendix \ref{app:experiments}; for instance, we also compare to the baseline of using GDRO on only the points with group labels (Subset-GDRO).

With as few as 1\%-33\% of training datapoints having group labels,
\name{} improves over all these baselines (Figure \ref{fig:wg-results}). For instance, CelebA has 1387 training points in the smallest group, and \name{} outperforms the baselines on CelebA with 16 group-labeled training examples per group. As the number of group-labeled points increases, the worst-group performance of \name{} gets closer to that of GDRO trained with all the group labels.

In terms of \emph{average} accuracy, \name{} and full-dataset GDRO are typically similar, while ERM is usually somewhat higher. This is to be expected, since on these tasks there is a tradeoff between optimizing for average-case and worst-case performance, as previously observed in the literature \citep{sagawa2019distributionally}. \name{} also substantially outperforms Subset-GDRO (GDRO trained only on the subset of group-labeled points), in terms of both worst-group and average accuracy. Subset-GDRO fails to generalize well due to the limited amount of training data it uses.

\begin{table*}[t]
\caption{\emph{Group prediction} accuracies for stage 1 of \name, for the same settings as in Table \ref{tab:wg-results}. Additional results \& plots are in Appendix \ref{app:experiments}.}
\label{tab:group-pred}
\centering
{\small
\resizebox{\textwidth}{!}{\begin{tabular}{@{}lccccccccc@{}}
\toprule
Method    & \multicolumn{2}{c}{U-MNIST}          & \multicolumn{2}{c}{Waterbirds} & \multicolumn{2}{c}{CelebA} & \multicolumn{2}{c}{U-CIFAR10} \\
    & Worst-group       & Avg.  & Worst-group       & Avg.    & Worst-group     & Avg.  & 
Worst-group & Avg.         \\ \midrule
Accuracy ($\%$)     & $87.7 \pm5.4$ & $94.6 \pm 1.2$ &  $91.5\pm 1.1$  &  $93.1 \pm 0.8$  & $85.9 \pm 0.3$ &  $89.4 \pm 0.6$  &  $67.9 \pm 2.6$ &  $83.5 \pm 0.6$  \\ 
\midrule
\end{tabular}}
}
\end{table*}

\subsection{Results: Group Prediction Accuracy}
\label{sec:group-acc}
In this section, we study the performance of Stage 1 of \name{} (accuracy at predicting the group labels), in order to better understand the performance of Stage 2 (worst-group accuracy on the target classification task).
In Table \ref{tab:group-pred}, we report the group prediction accuracies on each dataset, i.e., the accuracies of the predicted group labels from Stage 1 of \name.

Table \ref{tab:group-pred} shows that the group prediction models are far from perfect. Indeed, on U-CIFAR10 the accuracy at predicting the `airplane' group is below 70\%, and the average accuracy over all groups is only $\approx84\%$. Surprisingly, this only causes a modest drop in performance for the final robust model, as seen in Table \ref{tab:wg-results}. This can be explained with the help of the intuition from Corollary \ref{coro:strong}: if the errors made by the group prediction model are ``sufficiently random'' (i.e., not adversarial), we should expect the worst-group performance of \name{} to improve upon that of ERM, and approach that of GDRO as the number of group-labeled points increases. We explore this further in Section \ref{sec:random-flip}.

\subsection{\name-\textrm{SSL}: Semi-Supervised Learning for Group Prediction}
\label{sec:semi-supervised}
As an extension to demonstrate the flexibility of the \name~framework, we investigate the use of semi-supervised learning (SSL) using FixMatch \citep{sohn2020fixmatch}, for Stage 1 of \name. (We still use GDRO for Stage 2.) We refer to this procedure as \name-SSL. On the U-CIFAR10 task, \name-SSL can improve group prediction accuracy and, correspondingly, final robust performance. For example, with only 8 group-labeled examples per group, the worst-group accuracy of the final \name-SSL model is \textbf{94.0}\%, compared to 90.4\% for \name-Base. Correspondingly, the worst-group prediction accuracy of the group classifier is 83.7\% when trained using FixMatch, while it is substantially lower at 42.9\% when trained using simple supervised learning as in \name-Base, which helps explain these results. Thus, while \name-Base is simple and attains good worst-group accuracy, these results highlight the exciting potential of using more advanced SSL techniques to further boost worst-group performance of \name~{at the cost of more complexity}.
(Note: When training the group classifier using FixMatch, we use the \emph{class} label as an input to the prediction head, just as in \name-Base.)

\subsection{Ablation Experiments}
\label{sec:ablations}
In this section, we present ablation experiments to study the reasons behind the worst-group accuracy gains offered by \name. First, in Section~\ref{sec:random-flip} we run a synthetic experiment in which we run GDRO with randomly generated noisy group labels at different noise levels, to better understand how group prediction errors affect the final robust performance.
Next, in Section~\ref{sec:no-class} we ablate the importance of using the class label as an input to the group prediction model (as described in Section~\ref{sec:method}.)
Finally, in Section~\ref{sec:pretraining} we explore using models that are pretrained on ImageNet in a \emph{self-supervised} manner (instead of supervised) as the starting model for \name~(and all the baselines), to understand how the worst-group accuracy trends from Section~\ref{sec:wg-results} generalize when different pretrained models are used. 

\begin{figure*}
    \centering
    \includegraphics[width=0.4\textwidth]{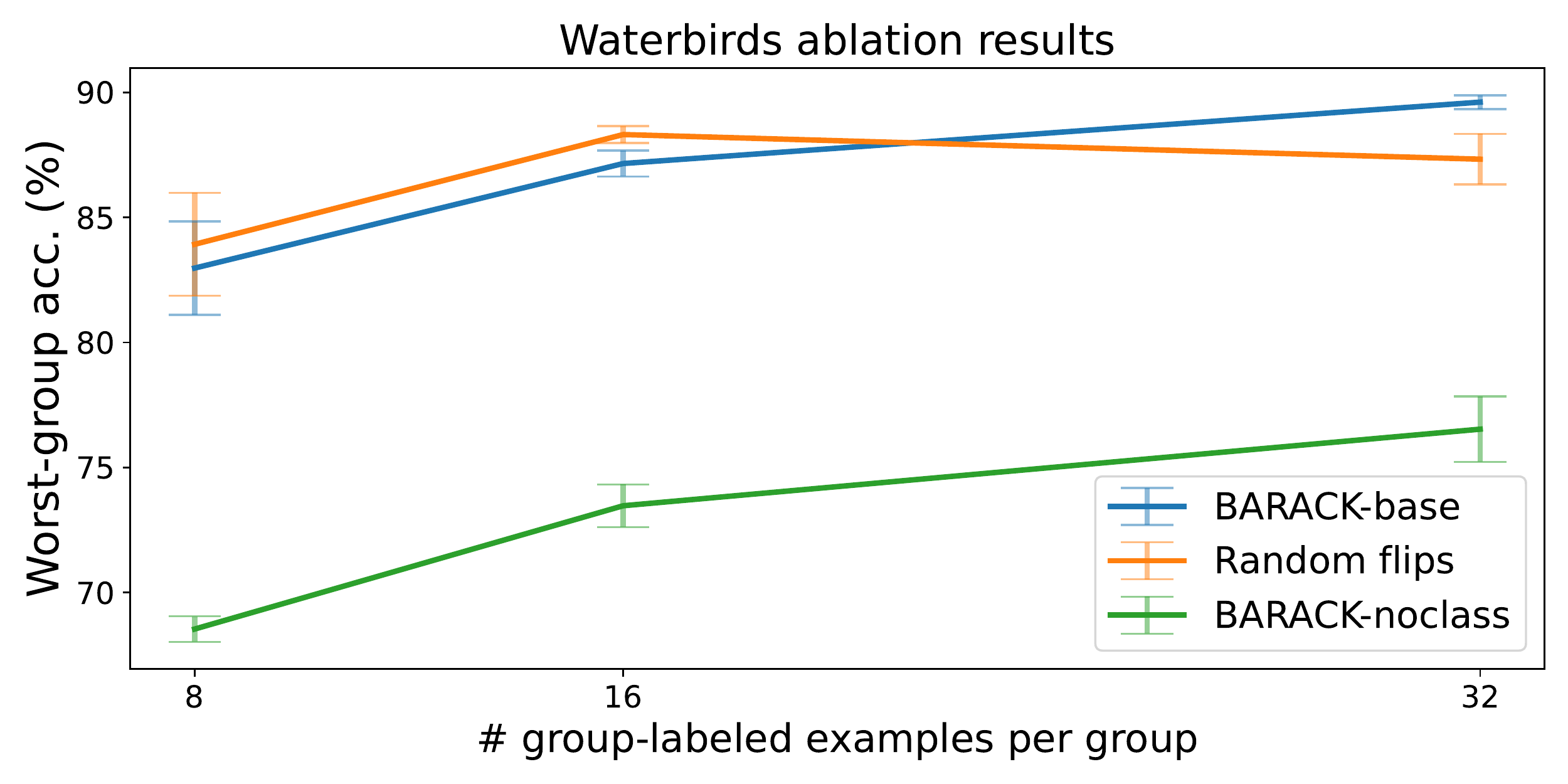}\quad\includegraphics[width=0.4\textwidth]{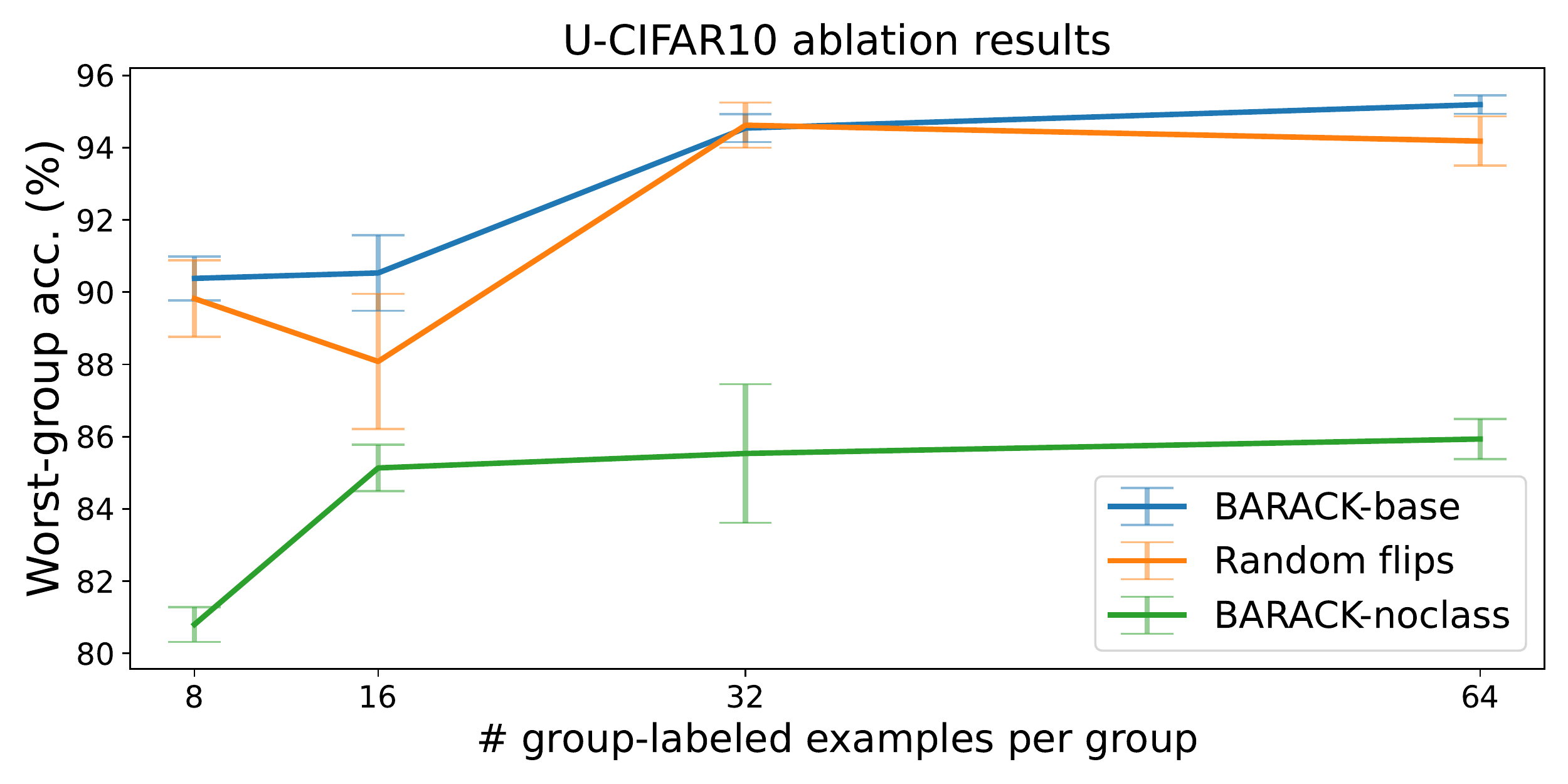}
\\
\includegraphics[width=0.42\textwidth]{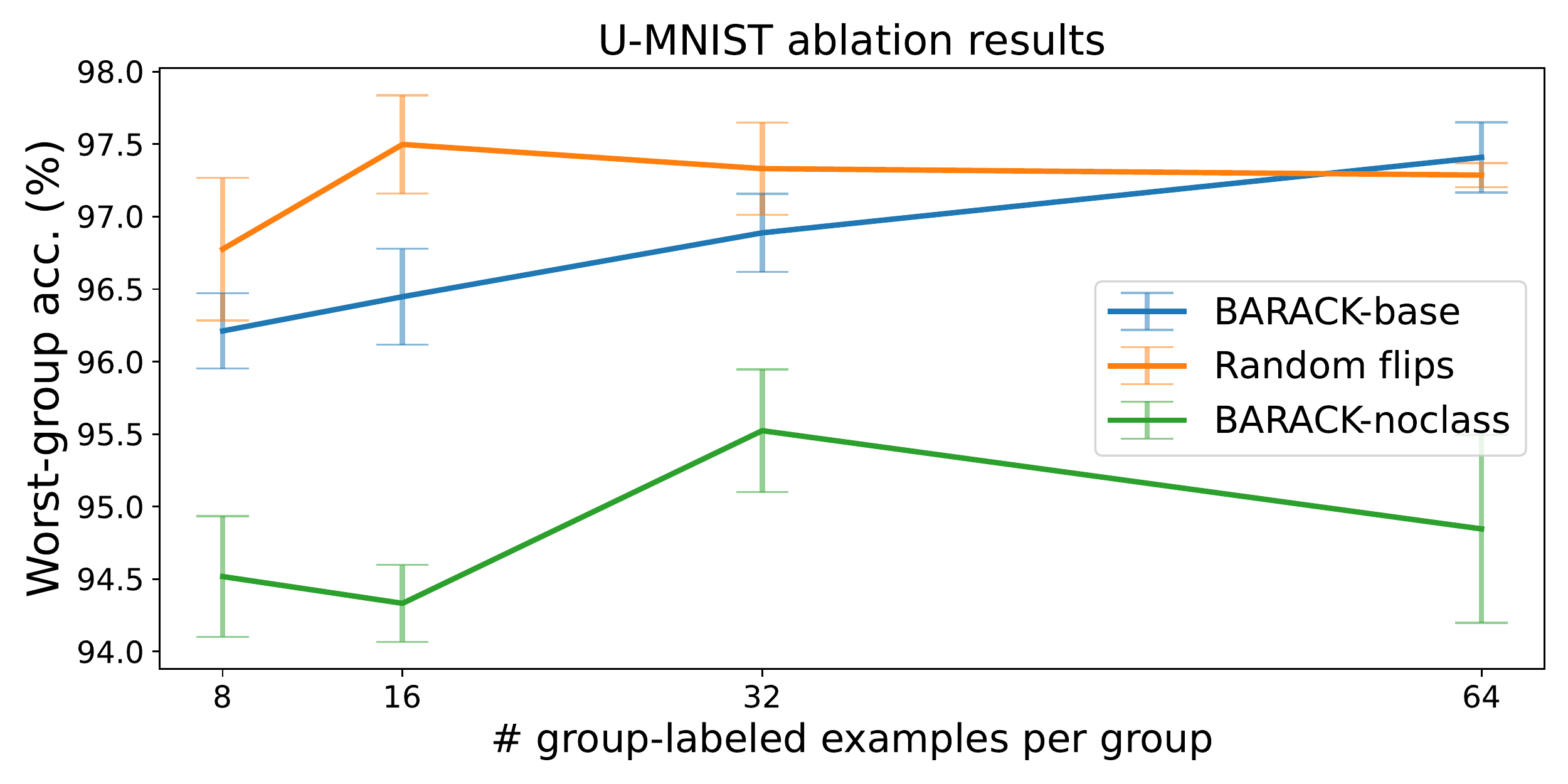}\quad\includegraphics[width=0.42\textwidth]{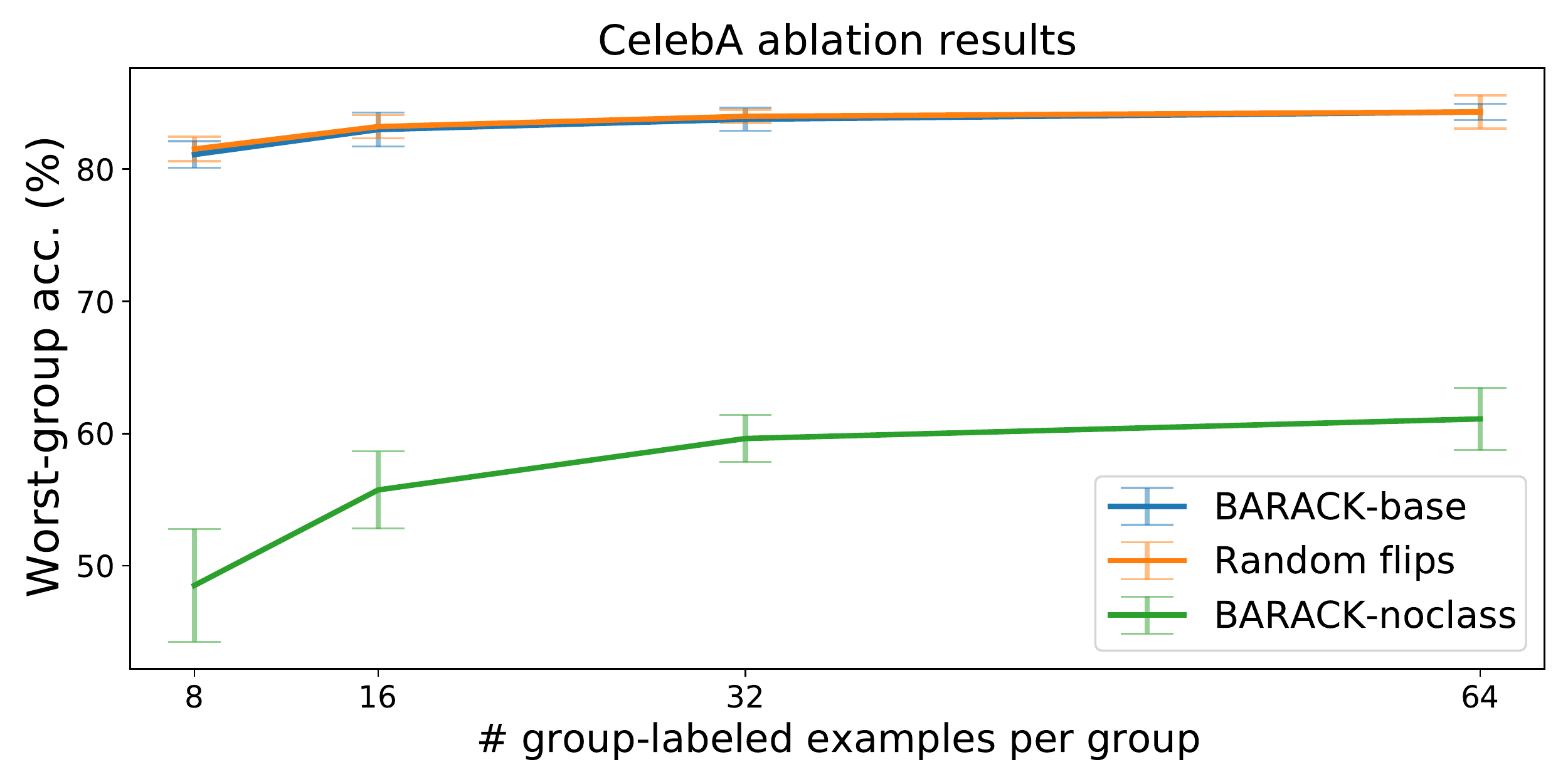}
    \caption{Worst-group accuracy for random flipping experiment (Section~\ref{sec:random-flip}, orange line) and no-class experiment (Section~\ref{sec:no-class}, green line).}
    \label{fig:ablations}
\end{figure*}

\subsubsection{Random Flipping}
\label{sec:random-flip}
To better understand why \name{} can achieve worst-group accuracy close to that of GDRO even with fairly inaccurate predicted group labels, we run a synthetic experiment: we run GDRO with a varying fraction of the group labels randomly flipped, and compare the performance of this to that of \name{} at an equivalent group prediction error rate. Results are in Figure~\ref{fig:ablations}; GDRO with the randomly perturbed group labels performs similarly to \name.

Specifically, we take the ground-truth group labels, randomly flip them to get the same error rate and confusion matrix as the predicted group labels from \name's group prediction model, and then use these ``noisy group labels'' as the groups for GDRO. For most settings, the final worst-group error is quite similar to that of \name, which suggests that the errors made by our group prediction model are indeed ``sufficiently random'' to not adversely affect the downstream worst-group accuracy too much. However, on U-MNIST, \name{} does underperform the random flipping version with a small number of group-labeled points, although this difference decreases as the number of group-labeled points increases. Further details on the random flipping experiments are in Appendix \ref{app:experiments}.

\subsubsection{Class Label Input}
\label{sec:no-class}
To evaluate the importance of using the \emph{class} label as input in the first stage of \name{}, we run the standard \name{} procedure except without using the class label as input. This substantially decreases the final worst-group accuracy on all datasets  (often by 10\% or more); see Figure~\ref{fig:ablations}. On most datasets, this drop can largely be explained by the reduced group prediction accuracy when the class feature is not used (the worst-group accuracy of the group classifier is up to 20 points worse when the class is not used). Interestingly, on U-CIFAR10, the group classifier's worst-group accuracy drops only 1-5\% when it does not use the class label, but this still results in a drop of 8-11\% in the worst-group accuracy of the final \name{} model.
(Additional results and plots can be found in Appendix \ref{app:experiments}.)

Note that for all datasets considered, the classes are disjoint unions of the groups, meaning that knowing the class narrows down the possibilities for the group label. Thus, it is unsurprising that using the class label in the group prediction model significantly improves the worst-group accuracy of both the group prediction model and the final model. We hypothesize that the class label is more essential for group prediction in the spurious correlation setting (as suggested by the results in Figure~\ref{fig:ablations}), because it enables the group prediction model to disambiguate between examples with the same spurious attribute but different classes, allowing it to focus on identifying the spurious attribute itself. By contrast, there is no clear ``spurious attribute'' on U-MNIST and U-CIFAR10; knowing the class label only reduces the number of candidate group labels for each example from 10 to 5.

\subsubsection{Pretrained Model Choice}
\begin{table*}[t]
\caption{Results with self-supervised pretrained model (RotNet), for 32 group-labeled examples per group as in Table \ref{tab:wg-results}. Worst-group accuracies are shown with and without the use of data augmentation. Additional results on pretraining are in Appendix \ref{app:experiments}. }
\label{tab:rotnet}
\centering
{\small
\begin{tabular}{@{}lccccccccc@{}}
\toprule
Method              & \multicolumn{2}{c}{Waterbirds} & \multicolumn{2}{c}{CelebA} & \multicolumn{2}{c}{U-CIFAR10} \\
Worst-group Acc. ($\%$)     & No aug.       & Aug.    & No aug.     & Aug.  &  No aug.     & Aug.
\\ \midrule
ERM          &  $38.4 \pm 2.2$  &  $56.0 \pm 3.8$ &  $39.3 \pm 1.0$  &  $39.3 \pm 2.1$  &  $79.3 \pm 1.2$ &  $82.3 \pm 2.3$  \\
\name{}-Base    &  $73.3 \pm 0.6$  &  $83.0 \pm 1.1$  & $84.4 \pm 1.5$ & $84.6 \pm 1.3$ &  $88.9 \pm 1.5$ &  $89.7 \pm 0.6$ \\ 
GDRO (full dataset)       &  $79.6 \pm 2.5$  &  $85.1 \pm 0.8$  &  $82.6 \pm 1.5$ &  $89.8 \pm 0.5$ &  $93.6 \pm 0.9$ &  $96.4 \pm 0.4$  \\
\midrule
\end{tabular}
}
\end{table*}

\label{sec:pretraining}
As described in Section \ref{sec:method}, on all tasks except U-MNIST, we start from a pretrained model trained on the supervised ImageNet task for all methods (and for both the group classifier and final model in \name{}). This is the standard approach when training on Waterbirds and CelebA, as in previous works \citep{sagawa2019distributionally,sohoni2020no,levy2020large,liu2021just}.
In this subsection, we investigate how the choice of pretrained model affects performance (Table \ref{tab:rotnet}). One motivation for this experiment is the potential for overlap or ``leakage" between ImageNet and other image classification datasets \citep{kolesnikov2019bit}. 
Moreover, we seek to confirm that the observations in previous sections regarding the performance of \name, compared to the baselines, are robust to the choice of pretrained model initialization. Thus, in this section we use pretrained models that were trained only with self-supervision (i.e., not using any labels). To be specific, in this section we use RotNet \citep{gidaris2018unsupervised} from the VISSL library \citep{goyal2021vissl} as opposed to the ResNet-50 pretrained on supervised ImageNet (from PyTorch) used in previous sections and previous works. Results are reported in Table~\ref{tab:rotnet}.

Overall, we observe that using the RotNet model achieves somewhat worse accuracies (both worst-group and average) for \emph{all} methods, although  this gap can be reduced or eliminated by using data augmentation.
This aligns with the findings of the original RotNet paper \citep{gidaris2018unsupervised} that RotNet has somewhat worse transfer performance compared to supervised pretrained models.
Despite the gap, the key takeaway is that \name{} still outperforms the baselines and remains competitive with full-dataset GDRO with the same initialization. 

\section{Conclusion}
We present \name, a two-stage approach to improve group robustness in the setting when only a small number of group labels are known. We empirically validate that \name{} outperforms methods that do not use training group labels, even with just a small number of group-labeled examples. We theoretically provide generalization bounds on the worst-group performance of \name{}. Our results indicate that even a small number of group labels can be helpful for substantially improving worst-group performance.

\newpage
\bibliographystyle{abbrvnat}
\bibliography{barack.bib}

\clearpage
\onecolumn
\setcounter{section}{0}
\renewcommand{\thesection}{\Alph{section}}

\section{Theoretical Proofs}
\label{app:theory}

\subsection{Notation}
The notation ``$i \in [b]$'' means $i \in \{1,2,\dots,b\}$. $\log$ denotes the natural logarithm. Where unspecified, $\norm{\cdot}$ denotes the Euclidean norm $\norm{\cdot}_2$. $\Delta^{d}$ denotes the $d$-coordinate ($d-1$ dimensional) simplex.

We have a training dataset $\{(x_i,y_i,z_i)\}_{i=1}^n \in \mathcal{X} \times \mathcal{Y} \times \mathcal{G}$. Here $x_i$ is a datapoint, $y_i$ is a discrete class label, and $z_i$ is a discrete group label.
Let $\mathcal{F}_1$ be a function class $\{f_\T : \X\times \Y \rightarrow \Delta^{|\G|}\}$, where each $f_\T$ is a member of $\mathcal{F}_1$ parameterized by the vector $\theta \in \R^d$. (The inputs to $f_\T$ are the features $x$ and class label $y$, and the output is a vector of predicted probabilities for each group.) Define $\mathcal{F}^R_1 := \{f_\T \in \mathcal{F}_1: \norm{\T}_2 \le R\}$.
Similarly let $\mathcal{F}_2$ be a function class $\{f_\T : \X \rightarrow \Delta^{|\Y|}\}$ and $\mathcal{F}^R_2 := \{f_\T \in \mathcal{F}_2: \norm{\T}_2 \le R\}$.
Let $\ell(\cdot, \cdot)$ be a nonnegative loss function globally bounded by $B$; $\ell$ is either in $\R^{|\Y|} \times \mathcal{Y} \ra \R$ or $\R^{|\G|} \times \mathcal{G} \ra \R$ depending on whether it is for the group classification task or target classification task, and will be clear from context. We will use the notations $\ell(f_\T(x), y)$ and $\ell(x, y; \theta)$ interchangeably (or even $\ell(f(x), y)$ or $\ell(x,y)$ when clear from context).

We assume that the datapoints are IID samples from a distribution $P$, i.e., $(x_i,y_i,z_i) \sim P$. We can write $P$ as a mixture of distributions $p_1,\dots,p_{|\mathcal{G}|}$, where $p_g$ is the distribution conditioned on the group label being equal to $g$. By overloading notation, we will also write $\E_{(x,y) \sim p_g}$ to denote the expectation of a quantity conditioned on the group label being $g$ (i.e., we will not write $(x,y,z) \sim p_g$ because $z = g$ always for points sampled from $p_g$). Let $q_1,\dots,q_{|\mathcal{G}|}$ be the corresponding mixture weights, i.e., $q_g = P(z = g)$, and let $q := \min\limits_{g \in \mathcal{G}} q_g$.

Recall that we denote $\D_1 := \{(x_i,y_i,z_i)\}_{i=1}^m$ to be the group-labeled dataset of $m$ points, and $\D_2 := \{(x_i,y_i)\}_{i=m+1}^n$ to be the dataset of $n-m$ group-unlabeled points. We are interested in the regime where $m$ is small compared to $n$, so we shall implicitly assume that $n-m$ is $\Omega(n)$. In our theoretical results we assume for simplicity that $\D_1$ and $\D_2$ are samples from the same distribution (although the $z_i$'s are unobserved on $\D_2$). However, our analysis easily extends to the case where $\D_1,\D_2$ are sampled from distributions on which the group proportions $q_1, \dots, q_{|\mathcal{\G}|}$ differ, but the per-group distributions $p_g = P(x,y | z = g)$ are the same. (This only complicates notation, as we need to distinguish the different $q = \min_{g \in \G} q_g$ between $\D_1$ and $\D_2$.)

We will use the notation $\L_{robust}(\theta)$ to denote either $\max_{g \in \G} \E[\ell((x, y), z; \theta) | z = g]$ (worst-group population loss for the task of classifying the \emph{groups}) or $\max_{g \in \G} \E[\ell(x, y; \theta) | z = g]$ (worst-group population loss for the task of classifying the \emph{classes}), which will be clear in context. We will also use $\L_{robust}^*$ as shorthand for $\min\limits_{\theta} \L_{robust}(\theta)$.

\subsection{``Helper" Results}
We will use the following standard result from learning theory \citep{229t}:

\begin{theorem}
\label{thm:generalization}
Suppose $\ell(x, y; \theta)$ is nonnegative, globally bounded by $B$ and $L$-Lipschitz continuous. Define $L(\theta) := \E_{(x,y) \sim P}[\ell(x, y; \theta)]$ and $\hat{L}(\theta) := \f{1}{n} \su{i=1}{n}\ell(x_i, y_i; \theta)$, where $\{(x_i,y_i)\}_{i=1}^n$ are sampled IID from distribution $P$. Let $p > 0$. Then, with probability $\ge 1 - O(e^{-p})$, for all $\theta$ such that $\norm{\theta}_2 \le R$ we have $|\hat{L}(\theta) - L(\theta)| \le O \lt B\sqrt{\f{p \max (\log (LRn), 1)}{n}} \rt$.
\end{theorem}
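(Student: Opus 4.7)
The plan is to establish uniform convergence of $\hat{L}$ to $L$ over the parameter ball $B_R := \{\theta \in \mathbb{R}^d : \norm{\theta}_2 \le R\}$ via a standard covering-number argument combined with Hoeffding's concentration inequality, and then set the net resolution so that the discretization slack matches the desired rate.

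First, I would exploit the $L$-Lipschitz assumption to reduce the problem from uniform convergence over the continuous set $B_R$ to uniform convergence over a finite $\epsilon$-net. By a standard volumetric argument, $B_R$ admits an $\epsilon$-net $N_\epsilon \subset B_R$ with $|N_\epsilon| \le (3R/\epsilon)^d$. For any $\theta \in B_R$, choose $\theta' \in N_\epsilon$ with $\norm{\theta-\theta'} \le \epsilon$; Lipschitzness of $\ell$ in $\theta$ gives $|\ell(x,y;\theta)-\ell(x,y;\theta')| \le L\epsilon$ pointwise, hence
\[
|\hat{L}(\theta)-L(\theta)| \;\le\; |\hat{L}(\theta')-L(\theta')| + 2L\epsilon.
\]
So it suffices to control $\sup_{\theta' \in N_\epsilon}|\hat{L}(\theta')-L(\theta')|$ and then add $2L\epsilon$.

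Second, for each fixed $\theta' \in N_\epsilon$, the summands $\ell(x_i,y_i;\theta')$ are i.i.d.\ bounded in $[0,B]$, so Hoeffding's inequality yields $\Pr[|\hat{L}(\theta')-L(\theta')|\ge t] \le 2\exp(-2nt^2/B^2)$. Union-bounding over $N_\epsilon$ and setting $t$ of order $B\sqrt{(p + d\log(3R/\epsilon))/n}$ makes the overall failure probability $O(e^{-p})$. I would then pick $\epsilon = 1/(Ln)$ so that $2L\epsilon = 2/n$ is negligible compared to the $1/\sqrt{n}$-scale fluctuations, and so that $\log(R/\epsilon) = \log(LRn)$; the $\max(\cdot,1)$ in the stated bound absorbs the case $LRn \le e$ where the logarithm would otherwise be non-positive. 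Assembling these pieces gives the bound in the form $O\bigl(B\sqrt{p\max(\log(LRn),1)/n}\bigr)$, up to an implicit $\sqrt{d}$ factor absorbed into the $O(\cdot)$.

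The main obstacle is cosmetic rather than substantive: the naive $\epsilon$-net contributes a factor of $d$ inside the square root, whereas the stated bound suppresses $d$ entirely. Either (i) the $O(\cdot)$ in the theorem is meant to hide the ambient-dimension constant, which is the standard convention in many learning-theory notes, or (ii) the intended argument is a sharper route via the Rademacher complexity of the induced loss class $\{(x,y)\mapsto\ell(x,y;\theta):\theta\in B_R\}$, controlled by Ledoux--Talagrand contraction and Dudley's entropy integral, followed by McDiarmid's inequality to promote the in-expectation bound to a high-probability statement. Either route produces a $B/\sqrt{n}$-scale deviation with a logarithmic $\log(LRn)$ factor and the stated $e^{-p}$ tail; I would present the net-and-Hoeffding version for transparency and note the Rademacher route as the tighter alternative.
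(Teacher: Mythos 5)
The paper does not actually prove this statement---it imports it verbatim as a standard uniform-convergence result cited from lecture notes---so there is no in-paper argument to compare against; your $\epsilon$-net-plus-Hoeffding derivation is the canonical proof of exactly this kind of bound and is correct. Your one caveat is also the right one: the union bound over a net of size $(3R/\epsilon)^d$ necessarily places a $d\log(LRn)$ term (hence a $\sqrt{d}$ factor) inside the square root, which the theorem's $O(\cdot)$ must be read as absorbing---consistent with how the paper suppresses $B$, $L$, $R$ and other constants in its downstream applications---and, as a minor polish, choosing $\epsilon = B/(Ln)$ rather than $1/(Ln)$ makes the discretization term $2B/n$ dominated by the $B\sqrt{p/n}$ fluctuation scale without any implicit side condition on the size of $B$.
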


We will also use the following simple lemma relating the minimizer of a ``perturbed'' GDRO-style objective to the minimizer of the unperturbed version.
\begin{lemma}
\label{lem:perturb}
Define $f(\theta) = \max\limits_{k \in [G]} f_k(\theta)$ and $\tilde{f}(\theta) = \max\limits_{k \in [G]} \lt f_k(\theta) + \ep_k \rt$ where $|\ep_k| \le \ep$. Let the minimizers of $f,\tilde{f}$ be $\theta^*, \tilde{\theta}^*$ respectively. Then ${f}(\tilde{\theta}^*) - {f}({\theta}^*) \le 2\ep$.
\begin{proof}
Note that $\tilde{f}(\theta^*) \le f(\theta^*) + \ep$. Similarly $f(\tilde{\theta}^*) \le \tilde{f}(\tilde{\theta}^*) + \ep$. So $f(\tilde{\theta}^*) \le \tilde{f}(\tilde{\theta}^*) + \ep \le \tilde{f}({\theta}^*) + \ep \le {f}({\theta}^*) + 2\ep$. \end{proof}
\end{lemma}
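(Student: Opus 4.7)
The plan is to exploit the fact that $f$ and $\tilde{f}$ are uniformly close as functions of $\theta$. Since each coordinate perturbation satisfies $|\ep_k| \le \ep$, we have the pointwise sandwich $f_k(\theta) - \ep \le f_k(\theta) + \ep_k \le f_k(\theta) + \ep$ for every $k$ and every $\theta$. Taking the maximum over $k$ preserves these inequalities (the $\max$ operator is monotone in each of its arguments), so $f(\theta) - \ep \le \tilde{f}(\theta) \le f(\theta) + \ep$ for all $\theta$. This uniform closeness in sup-norm is the key lever.

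Given this, I would chain three short inequalities. First, $f(\tilde{\theta}^*) \le \tilde{f}(\tilde{\theta}^*) + \ep$ by uniform closeness applied at $\tilde{\theta}^*$. Second, $\tilde{f}(\tilde{\theta}^*) \le \tilde{f}(\theta^*)$ since $\tilde{\theta}^*$ minimizes $\tilde{f}$. Third, $\tilde{f}(\theta^*) \le f(\theta^*) + \ep$ by uniform closeness applied at $\theta^*$. Composing these three steps yields $f(\tilde{\theta}^*) \le f(\theta^*) + 2\ep$, which rearranges to the claim.

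There is no real obstacle: this is the classical ``perturbation of the argmin'' bound that arises whenever two objectives agree up to $\ep$ in sup-norm, and the factor of $2$ is the standard price one pays for swapping optimizers. The only subtlety worth flagging is that existence of the exact minimizers $\theta^*, \tilde{\theta}^*$ is being assumed; if they are only achieved up to $\delta$-approximation, the bound degrades to $2\ep + O(\delta)$, but the three-step chain is otherwise unchanged.
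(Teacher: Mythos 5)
Your proof is correct and follows essentially the same route as the paper's: establish that $f$ and $\tilde{f}$ are within $\ep$ of each other pointwise, then chain $f(\tilde{\theta}^*) \le \tilde{f}(\tilde{\theta}^*) + \ep \le \tilde{f}(\theta^*) + \ep \le f(\theta^*) + 2\ep$. Your explicit justification of the uniform closeness via monotonicity of $\max$ is a slightly more detailed version of the step the paper leaves implicit.
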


\subsection{Warm-up: Proof of Group DRO Generalization Bound (Lemma \ref{lem:gdro})}
\label{app:gdro_proof}
We restate Lemma \ref{lem:gdro} below:
\gdro*

\emph{Proof}. 
The population group DRO loss is $\L_{robust}(\theta) :=  \max\limits_{g \in \mathcal{G}} \lt \E_{(x,y) \sim p_g} [\ell(x, y; \theta)]\rt$, i.e. the maximum of the average per-group losses. Here $p_g$ denotes the conditional distribution $P(x, y | z = g)$.
For $g \in \G$, denote $S_g$ to be the subset of points on $\D_1$ such that the group label is $g$. The empirical GDRO loss on $\D_1$ is $\L_{robust,\D_1}(\theta) := \max\limits_{g \in [G]} \lt \E_{(x,y) \sim S_g} [\ell(x, y; \theta)]\rt$. Note that each of the per-group losses is simply the empirical estimate of the corresponding population per-group loss, over the set $S_g$.

We can apply Theorem \ref{thm:generalization} to each of the groups individually, since each set $S_g$ is an IID sample from $p_g$. Thus, we obtain that for a given group $g$, $\left|\E_{(x,y) \sim S_g} [\ell(x, y; \theta)] - \E_{(x,y) \sim p_g} [\ell(x, y; \theta)]\right| \le O(\sqrt{\ff{p \log |S_g|}{|S_g|}})$ for \emph{all} $\theta$ such that $\norm{\theta}_2 \le R$, with probability $\ge 1-O(e^{-p})$. ($|S_g|$ is the size of set $S_g$.) Thus by union bound, $\max\limits_{g \in \G} \left|\E_{(x,y) \sim S_g} [\ell(x, y; \theta)] - \E_{(x,y) \sim p_g} [\ell(x, y; \theta)]\right| \le O\!\lt \!\max\limits_{S_g}\! \sqrt{\ff{p \log |S_g|}{|S_g|}}\rt$ w.p. $\ge 1-O(|\mathcal{G}|e^{-p})$.
(Here and henceforth we omit the constants $B,L,R$ from the notation.)

Let $q_g := P(z = g)$, i.e., the population fraction of group $g$. By Hoeffding's inequality, for a given group $g$ we have $|S_g-q_g m| \le \sqrt{m \log m}$ with probability $\ge 1 - 2/m^2$. Thus by union bound and the fact that $q_g \ge q$, we have 
\begin{align*}
   \min\limits_{g \in \G} |S_g| \ge q m - \sqrt{m \log m} = (1 - \ff{\log m}{q\sqrt{m}})qm 
\end{align*} with probability greater than $1 - 2|\G| / m^2$. Thus $\max\limits_{g \in \G} \ff{\log |S_g|}{|S_g|} \le \f{\log m}{qm} \cdot \f{1}{1 - \ff{\log m}{q\sqrt{m}}}$ with probability greater than $1 - 2|\G| / m^2$. As $\ff{1}{1-x} \le 1+2x$ for all $x \in [0, 1/2]$, we thus have
\begin{align*}
    \max\limits_{g \in \G} \f{\log |S_g|}{|S_g|} \le \min\left\{1, \f{\log m}{qm} + 2\f{\log m}{qm}  \cdot \f{\log m}{q\sqrt{m}}\right\}
\end{align*} 
for sufficiently large $m$ (relative to $q$). Thus \begin{align*}
    \max_g \sqrt{\ff{p \log |S_g|}{|S_g|}}= O\Bigg(\sqrt{\ff{\log m}{qm}}\Bigg)
\end{align*} with probability greater than $1-2|\G|/m^2$.

To recap, the training loss is $\max_{g \in \mathcal{G}} \E_{(x,y) \sim S_g} \E[\ell(x,y;\theta)]$ and we showed that 
\begin{align*}
    |\E_{(x,y) \sim S_g} \E[\ell(x,y;\theta)] - \E_{(x,y) \sim p_g} \E[\ell(x,y;\theta)]| \le  O\Bigg(\sqrt{\ff{\log m}{qm}}\Bigg)
\end{align*} for all $g \in \G$ and all $\theta$ with $\norm{\theta}_2 \le R$ with high probability, so applying Lemma~\ref{lem:perturb} and union bound, we have that 
\begin{align*}
    \L_{robust}(\hat{\theta}) - \L_{robust}^* = |\L_{robust}(\hat{\theta}) - \L_{robust}^*| \le  O\Bigg(\sqrt{\ff{\log m}{qm}}\Bigg) = \tilde{O}(\ff{1}{\sqrt{qm}})
\end{align*}
with high probability, as desired. Note that as long as we optimize over a bounded domain that is sufficiently large to contain $\theta^*$, there exists a valid norm constraint $R$ such that all $\theta$ under consideration have $\norm{\theta}_2 \le R$.

\subsubsection{Relating loss to classification error}
\label{app:losses}
For example, we could let $\ell$ be the truncated cross-entropy loss (i.e., the cross-entropy loss clipped to remain in $[0,B]$ for some large constant $B$ in order to ensure boundedness). Observe that the classification error is upper bounded by the cross-entropy loss divided by $\log(|\mathcal{Y}|)$, where $|\mathcal{Y}|$ is the number of classes: as the class prediction is the class with highest predicted probability, if the model makes an error  then the predicted probability of the correct class is at most $1/|\Y|$, which means that the cross-entropy loss for that example is $\ge -\log(1/\mathcal{\Y}) =  \log |\Y|$. In other words, if $\argmax \{f(x)\} \ne y$ then $\l(f(x), y) \ge \log |\Y|$. Thus $\mathbf{1}(\argmax \{f(x)\} \ne y) \le \ff{\l(f(x), y)}{\log |\Y|}$, and so $\E[\mathbf{1}(\argmax \{f(x)\} \ne y)] \le \ff{1}{\log |\Y|}\E[\l(f(x), y)]$. (Thus, the clipping constant $B$ just needs to be $\ge \log |\Y|$.)

We could also let $\ell$ be the squared loss (the square of 1 minus the predicted probability of the correct class), which is 2-Lipschitz and bounded by 1. The classification error is upper bounded by $\ff{|\Y|}{|\Y|-1}$ times the squared loss (since when the model makes an error, the loss on that example must be at least $(1-1/|\Y|)^2 + (|\Y|-1)/|\Y|^2 = 1 - 2/|\Y| + 1/|\Y| = \ff{|\Y|-1}{|\Y|}$).

\subsection{Proof of Theorem~\ref{thm:basic}} 
We restate Theorem~\ref{thm:basic} below, and prove it in Appendix \ref{app:pg}-\ref{app:eg}.
\vspace{5pt}
\basic*

\subsubsection{Per-group population loss vs. per-estimated-group population loss}
\label{app:pg}
Let $\tilde{z}$ denote the prediction of the Stage 1 group classifier, i.e., $\tilde{z} = \argmax\{\hat{f}_{group}(x, y)\}$. The assumption on the error rate on each group is equivalent to assuming $P(\tilde{z} \ne z | z = g) \le r$ for each $g \in \G$. Notice that $P(z \ne \tilde{z} | \tilde{z} = g) = \sum_{g' \in |\G|, g' \ne g}{} {P(z = g' | \tilde{z} = g)} = \sum_{g' \in |\G|, g' \ne g}{} \f{P(\tilde{z} = g | z = g') P(z = g')}{P(\tilde{z} = g)}$. Now, $P(\tilde{z}  = g) \ge P(\tilde{z} = g | z = g)P(z = g) \ge (1-r)q$, and ${P(\tilde{z} = g | z = g')} \le r$ for $g \ne g'$, and $\sum_{g' \ne g}{} P(z = g') \le 1-q$, so $P(z \ne \tilde{z} | \tilde{z} = g) \le \f{(1-q)r}{(1-r)q} \le \f{r}{(1-r)q}$.  Also, of course $P(z \ne \tilde{z} | \tilde{z} = g) \le 1$ as well, so $P(z \ne \tilde{z} | \tilde{z} = g) \le \min\{\ff{r}{q(1-r)}, 1\}$.

\begin{align*}
|\E[\l(x,y) | \tilde{z} = g] - \E[\l(x,y) | {z} = g]| &= \\
|\E[\l(x, y) | \tilde{z} = g, z = g]P(z = g | \tilde{z} = g) + \E[\l(x, y) | \tilde{z} = g, z \ne g]P(z \ne g | \tilde{z} = g) - \E[\l(x,y) | {z} = g]| &\le \\
|\E[\l(x, y) | \tilde{z} = g, z = g]P(z = g | \tilde{z} = g) - \E[\l(x,y) | {z} = g]| +  \E[\l(x, y) | \tilde{z} = g, z \ne g]P(z \ne g | \tilde{z} = g) &\le \\
|\E[\l(x, y) | \tilde{z} = g, z = g]P(z = g | \tilde{z} = g) - \E[\l(x,y) | {z} = g]| +  B\min\{\ff{r}{q(1-r)}, 1\}.
\end{align*}

\noindent Similarly
$\E[\l(x,y) | {z} = g] =
\E[\l(x, y) | \tilde{z} = g, z = g]P(\tilde{z} = g | {z} = g) + \E[\l(x, y) | \tilde{z} \ne g, z = g]P(\tilde{z} \ne g | {z} = g)$, so
\\
$
|\E[\l(x, y) | \tilde{z} = g, z = g]P(z = g | \tilde{z} = g) - \E[\l(x,y) | {z} = g]| = \\
{|\E[\l(x, y) | \tilde{z} = g, z = g]P(z = g | \tilde{z} = g) - \E[\l(x, y) | \tilde{z} = g, z = g]P(\tilde{z} = g | {z} = g) - \E[\l(x, y) | {z} \ne g, z = g]P(\tilde{z} \ne g | {z} = g)|} \\ \le
\E[\l(x, y) | \tilde{z} = g, z = g]|P(z = g | \tilde{z} = g)-P(\tilde{z} = g | {z} = g)| + Br.
$

Finally, we have $1 \ge P(z = g | \tilde{z} = g) = 1 - P(z \ne \tilde{z} | \tilde{z} = g) \ge \max\{1-\ff{r}{q(1-r)}, 0\}$, and $1 \ge P(\tilde{z} = g | z = g) \ge 1-r$, so $|P(z = g | \tilde{z} = g)-P(\tilde{z} = g | {z} = g)| \le \min\{\ff{r}{q(1-r)}, 1\}$. Thus, altogether we have \begin{align*}
    |\E[\l(x,y) | \tilde{z} = g] - \E[\l(x,y) | {z} = g]|  \le B\min\{\ff{r}{q(1-r)}, 1\} + B\min\{\ff{r}{q(1-r)}, 1\} + Br \le 
3B\min\{\ff{r}{q(1-r)}, 1\},
\end{align*}
which is $O(r/q)$ in terms of $r$ and $q$.

\subsubsection{Per-estimated-group population loss vs. training loss}
\label{app:eg}
Let $\tilde{S}_g$ denote the set of training points with predicted group label $g$. The datapoints in $\tilde{S}_g$ are independent samples from ${P(x, y | \tilde{z} = g)}$. So by Theorem~\ref{thm:generalization}, for a given $g$ we have that the difference between the per-group training and population losses on the Stage 2 classification task is $|\E_{(x,y) \sim \tilde{S}_g}[\ell(x, y; \theta)] - \E[\ell(x, y; \theta) | \tilde{z} = g]| \le O\lt \sqrt{\ff{\log |\tilde{S}_g|}{|\tilde{S}_g|}}\rt$ with high probability.

By Hoeffding's inequality we have $O\lt \sqrt{\ff{\log |\tilde{S}_g|}{|\tilde{S}_g|}}\rt \le O\lt \sqrt{\ff{\log n}{q(1-r)n}}\rt=O\lt \sqrt{\ff{\log n}{qn}}\rt$.
Thus by triangle inequality and union bound, we have 
$|\E_{(x,y) \sim \tilde{S}_g}[\ell(x, y; \theta)] - \E[\ell(x, y; \theta) | {z} = g]| \le O(r/q) + O\lt \sqrt{\ff{\log n}{qn}}\rt = \tilde{O}\lt \ff{r}{q} + \ff{1}{\sqrt{qn}}\rt$ for all $g \in \G$ and all $\theta$ with $\norm{\theta}_2 \le R$ with high probability. Finally, applying Lemma~\ref{lem:perturb} yields the desired result, as the training loss is $\max\limits_{g \in \G} \E_{(x,y) \sim \tilde{S}_g}[\l(x, y; \theta)]$. \qquad $\square$

\subsection{Extension of Theorem \ref{thm:basic}: Non-realizable case}
In this section, we show that if the group classifier makes randomized predictions according to the predicted probabilities (rather than classifying groups by picking the group with maximum predicted probability), we can yield a more general bound than Theorem~\ref{thm:basic} that is better in the non-realizable case (i.e., when there does not exist a perfect group classifier). We relax the realizability requirement stated in Section~\ref{sec:analysis} to the  requirement that there must exist some $f^* \in \mathcal{F}_1$ such that $f(x,y)$ is the vector of true probabilities $P(z | x,y)$. This theorem is stated below.

\begin{theorem}
\label{thm:bayes}
Suppose there exists $f^* \in \mathcal{F}_1$ such that $f(x,y)$ is the vector of true probabilities $P(z | x,y)$. 
Let $\hat{P}(z | x,y)$ denote the probabilities output by the group classifier from "Stage 1," and suppose for each datapoint $(x,y)$ the predicted group label $\tilde{z}$ is sampled from $\hat{P}(z | x,y)$. 
Suppose that the total variation between $\hat{P}(z | x,y)$ and $P(z | x,y)$ is bounded by $r$, for all $x,y$ in the support of $P$.
Then with high probability, $\L_{robust}(\hat{\theta}_{robust}) \le \L^*_{robust} + \tilde{O} \lt \ff{r}{q} + \ff{1}{\sqrt{qn}} \rt$.
\end{theorem}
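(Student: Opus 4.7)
The plan is to adapt the two-step argument from Appendices~\ref{app:pg}--\ref{app:eg} (the proof of Theorem~\ref{thm:basic}), modifying Step~1 to handle the randomized prediction and total-variation assumption, while Step~2 (uniform concentration of empirical per-group losses to their population counterparts) goes through essentially unchanged.

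In Step~1, I would relate $\E[\ell(x,y)\mid \tilde z = g]$ to $\E[\ell(x,y)\mid z = g]$ directly via Bayes' rule. Writing
\begin{equation*}
\E[\ell \mid \tilde z = g] \;=\; \frac{\E_{(x,y)\sim P}\bigl[\ell(x,y)\,\hat P(z = g \mid x,y)\bigr]}{\E_{(x,y)\sim P}\bigl[\hat P(z = g \mid x,y)\bigr]},
\end{equation*}
and likewise for $\E[\ell\mid z = g]$ with $P(\cdot\mid x,y)$, the assumption that the TV distance between $\hat P(\cdot\mid x,y)$ and $P(\cdot\mid x,y)$ is at most $r$ gives $|\hat P(z = g\mid x,y) - P(z = g\mid x,y)| \le 2r$ pointwise, for every $g$. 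Hence the numerators differ by at most $2Br$ (using $\ell \le B$) and the denominators by at most $2r$. Applying the elementary inequality $|a/b - c/d| \le (|a-c|\,d + c\,|b-d|)/(bd)$, together with the lower bound $P(\tilde z = g) \ge q - 2r$, yields $|\E[\ell\mid\tilde z = g] - \E[\ell\mid z = g]| = O(Br/q)$ for every $g$. (If $r > q/4$ the claim is vacuous, since then $r/q = \Omega(1)$ and the trivial bound $B$ already suffices.)

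In Step~2, conditional on the trained group classifier $\hat f_{group}$, the triples $(x_i, y_i, \tilde z_i)$ for $i \in \mathcal{D}_2$ are IID draws from the joint distribution $P(x,y)\cdot \hat P(\tilde z\mid x,y)$. Restricting to indices with $\tilde z_i = g$ gives an IID sample from $P(x,y\mid \tilde z = g)$, of size $\Omega(qn)$ with high probability by Hoeffding (using $P(\tilde z = g) \ge q/2$). Points in $\mathcal{D}_1$ assigned to group $g$ by their ground-truth labels provide additional IID samples from $P(x,y\mid z = g)$, whose expected loss differs from that under $P(x,y\mid \tilde z = g)$ by only $O(r/q)$ by Step~1. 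Invoking Theorem~\ref{thm:generalization} within each group, weighting the two empirical averages by their respective sample sizes, and union-bounding over $g \in \mathcal{G}$ yields uniform concentration of the empirical per-group loss to $\E[\ell\mid\tilde z = g]$ at rate $\tilde O(1/\sqrt{qn})$ for all $\theta$ with $\|\theta\|_2 \le R$. Combining with the Step~1 bias via the triangle inequality gives uniform $\tilde O(r/q + 1/\sqrt{qn})$ approximation to $\L_{robust}(\theta)$, and a final application of Lemma~\ref{lem:perturb} produces the stated excess robust risk bound.

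The main obstacle I anticipate is the algebraic bookkeeping in Step~1, specifically tracking the constant factors coming from the TV bound (the factor of $2$ in $|\hat P - P| \le 2 \cdot \mathrm{TV}$) and avoiding any spurious dependence on $|\mathcal{G}|$; the ratio bound is the place where a careless argument would pick up an extra $1/q$ or $|\mathcal{G}|$ factor. A secondary subtlety in Step~2 is that the $\mathcal{D}_1$ samples concentrate to $\E[\ell\mid z = g]$ whereas the $\mathcal{D}_2$ samples concentrate to $\E[\ell\mid \tilde z = g]$, so the combined per-group empirical average converges to a convex combination of the two; because Step~1 already bounds the gap between these population quantities by $O(r/q)$, the combined average still approximates either one at the same asymptotic rate, but this has to be made explicit in the triangle-inequality step rather than glossed over as in the realizable case.
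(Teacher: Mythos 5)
Your proposal is correct, but it reaches the conclusion by a genuinely different route from the paper. The paper's proof constructs a maximal coupling of the true conditional $P(z\mid x,y)$ and the predicted conditional $\hat P(z\mid x,y)$: it first proves a small Bayes-rule identity (Lemma~\ref{lem:help}) showing that a label sampled from the \emph{true} conditional is distributionally interchangeable with the actual group label for the purpose of per-group expected losses, then uses the coupling to argue that the sampled prediction $\tilde z$ disagrees with such an ``equivalent'' label with probability at most the total variation, i.e.\ at most $r$ per point, and finally reduces to the machinery of Theorem~\ref{thm:basic}. You instead compute the per-group population loss gap directly from the ratio representation $\E[\ell\mid\tilde z=g]=\E[\ell\,\hat P(g\mid x,y)]/\E[\hat P(g\mid x,y)]$ together with the pointwise bound $|\hat P(g\mid x,y)-P(g\mid x,y)|\le 2r$; this is a clean, self-contained replacement for the paper's Appendix~\ref{app:pg} step, and your arithmetic (numerator off by $2Br$, denominator off by $2r$, denominator bounded below by $q-2r$, with the case $r=\Omega(q)$ dispatched by the trivial bound $B$) checks out and yields the same $O(r/q)$ without the coupling construction. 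What the paper's route buys is reuse of the already-proven Theorem~\ref{thm:basic} and the conceptually appealing observation that a correctly calibrated randomized labeler is ``as good as'' the truth; what your route buys is directness and, arguably, a cleaner handling of the conditioning (the paper's reduction requires converting an unconditional disagreement bound into a per-group one, which silently costs a $1/q$ that your ratio argument accounts for explicitly). Your Step~2 is also slightly more careful than the paper's own treatment: you explicitly separate the $\mathcal{D}_1$ points (concentrating to $\E[\ell\mid z=g]$) from the $\mathcal{D}_2$ points (concentrating to $\E[\ell\mid\tilde z=g]$) and note that the resulting convex combination is within the already-controlled $O(r/q)$ of either target, a point the paper glosses over by treating $\tilde S_g$ as a homogeneous sample. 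No gaps.
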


Theorem~\ref{thm:bayes} shows that even if it is impossible to perfectly distinguish the groups, this is not necessarily an obstacle to the downstream robust performance.
To prove Theorem~\ref{thm:bayes}, we first prove the following lemma.

\begin{lemma}
\label{lem:help}
Suppose $P(\tilde{z}|x,y) = P(z | x,y)$. Then $\E[\l(x,y;\theta) | z = g] = \E[\l(x,y;\theta) | \tilde{z} = g]$, for all $g \in \mathcal{G}$.
\end{lemma}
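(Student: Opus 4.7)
The plan is a short measure-theoretic argument: exhibit that the triples $(x,y,z)$ and $(x,y,\tilde z)$ have the same joint distribution, and then invoke the fact that $\ell(x,y;\theta)$ depends only on $(x,y)$ to conclude that conditioning on $z=g$ or on $\tilde z=g$ yields identical expectations.

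More concretely, I would proceed in three steps. First, note that by construction $\tilde z$ is generated from $(x,y)$ alone, with conditional law $P(\tilde z \mid x,y) = P(z \mid x,y)$ by hypothesis. Since the marginal of $(x,y)$ is the same under $P$ regardless of whether we then sample $z$ or $\tilde z$, the joint laws $(x,y,z)$ and $(x,y,\tilde z)$ coincide. Second, integrate the conditional equality to get $P(\tilde z = g) = \int P(z = g \mid x,y)\, dP(x,y) = P(z = g)$ for every $g \in \gG$, so the event $\{\tilde z = g\}$ has positive probability exactly when $\{z=g\}$ does (so all conditional expectations are well-defined for the groups of interest, since $q > 0$). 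Third, apply the definition of conditional expectation: for any $g$,
\begin{equation*}
\E[\ell(x,y;\theta) \mid \tilde z = g] = \frac{\E[\ell(x,y;\theta)\, \mathbf{1}(\tilde z = g)]}{P(\tilde z = g)} = \frac{\E[\ell(x,y;\theta)\, P(z = g \mid x,y)]}{P(z = g)} = \E[\ell(x,y;\theta) \mid z = g],
\end{equation*}
where the middle equality uses the tower property together with $P(\tilde z = g \mid x,y) = P(z = g \mid x,y)$, and the last equality is the same tower identity run in reverse for $z$.

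There is no real obstacle here — the only thing to be careful about is making sure the conditioning is well-defined (handled by the observation $P(\tilde z=g)=P(z=g) \ge q > 0$) and that $\ell$ is indeed a function only of $(x,y,\theta)$ and not of the group label, which is implicit in the setup of Section~\ref{sec:analysis}. The lemma is really just the statement that $\tilde z$ is a distributional copy of $z$ coupled through $(x,y)$, so any statistic of $(x,y)$ has the same conditional law under either.
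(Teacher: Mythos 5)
Your proof is correct and is essentially the paper's argument in a different notation: the paper applies Bayes' rule twice to show $P(x,y\mid \tilde z = g) = P(x,y\mid z = g)$, which is exactly your tower-property computation $\E[\ell\,\mathbf{1}(\tilde z = g)] = \E[\ell\,P(z=g\mid x,y)] = \E[\ell\,\mathbf{1}(z=g)]$ divided through by the common marginal $P(\tilde z = g) = P(z = g)$. Your explicit check that the marginals agree (so the conditioning is well-defined) is a small point the paper leaves implicit, but the route is the same.
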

\begin{proof}
By Bayes' rule $P(x,y | \tilde{z} = g) = P(\tilde{z} = g | x,y) P(x,y) / P(\tilde{z} = g)$. By assumption, for any $(x,y)$, $P(\tilde{z} = g | x,y) = P(z = g | x,y)$. Therefore $P(x,y | \tilde{z} = g) = P(\tilde{z} = g | x,y) P(x,y) / P(\tilde{z} = g) = P(z = g | x,y) P(x,y) / P(z = g) = P(x,y | z = g)$ by applying Bayes' rule again. The claim follows.
\end{proof}

Lemma \ref{lem:help} implies that if we use the predicted group labels $\tilde{z}$ rather than the ``true'' group labels $z$, there is essentially no difference since $\tilde{z}$ and $z$ have the same distribution conditioned on $x,y$. Thus, samples from $(x,y,\tilde{z})$ and $(x,y,z)$ are equivalent for our purposes, and applying Lemma~\ref{lem:gdro} shows that the minimizer of $\hat{\tilde{\theta}}$ of $\max\limits_g \E[\l(x,y;\theta) | \tilde{z} = g]$ satisfies $\L_{robust}(\hat{\tilde{\theta}}) \le \L_{robust}^* + \tilde{O}(\ff{1}{\sqrt{qn}})$ with high probability (since in this case we have $n$ total datapoints).

\noindent \emph{Proof of Theorem~\ref{thm:bayes}}. Note that if $r = 1$ the statement follows trivially from boundedness. Similarly if $r = 0$ the statement follows from the argument above. So assume $0 < r < 1$.

Given $x,y$, suppose we sample $z',\tilde{z}'$ in the following ``coupled'' fashion.
Flip a biased coin with probability of heads being $\sum_{g \in \G} \min\big\{\hat{P}(z = g | x,y), P(\tilde{z} = g | x,y)\big\}$.

If the coin is heads, then sample $z''$ from the distribution where \begin{align*}
    P(z''=g) = \f{\min\{\hat{P}(z = g | x,y), P(\tilde{z} = g | x,y)\}}{\sum_{g \in \G} \min\{\hat{P}(z = g | x,y), P(\tilde{z} = g | x,y)\}}.
\end{align*} 
and set $z' = \tilde{z}' = z''$.
Note that by the assumption $r < 1$, the denominator is nonzero.

If the coin is tails, then sample $z'$ from the distribution where
\begin{align*}
P(z'=g) = \f{P(z = g | x,y) - \min\{\hat{P}(z = g | x,y), P(z = g | x,y)\}}{1-\sum_{g \in \G} \min\{\hat{P}(z = g | x,y), P(\tilde{z} = g | x,y)\}},    
\end{align*}
and independently sample $\tilde{z}'$ from the distribution where 
\begin{align*}
{P(\tilde{z}'=g)} = \f{P(\tilde{z} = g | x,y) - \min\{\hat{P}(z = g | x,y), P(\tilde{z} = g | x,y)\}}{1-\sum_{g \in \G} \min\{\hat{P}(z = g | x,y), P(\tilde{z} = g | x,y)\}}.
\end{align*}
Also note that by the assumption $r > 0$, the denominator is nonzero. 

Notice that using this sampling procedure, $P(\tilde{z}' | x,y) = P(z | x,y)$. Similarly, $P(z' | x,y) = \hat{P}(z' | x,y)$. Thus by Lemma~\ref{lem:help}, $\E[\l(x,y;\theta)|z' = g] = \E[\l(x,y;\theta)|z= g] = \L_{robust}(\theta)$, and $\E[\l(x,y;\theta)|\tilde{z}' = g] = \E[\l(x,y;\theta)|\tilde{z}= g]$. Also observe that the probability that $z' \ne \tilde{z}'$ is $\le 1-\sum_{g \in \G} \min\{\hat{P}(z = g | x,y), P(\tilde{z} = g | x,y)\}$ which is precisely the total variation between $\hat{P}(z = g | x,y)$ and $P(\tilde{z} = g | x,y)$. From here, the remainder of the proof is essentially identical to the proof of Theorem~\ref{thm:basic}, as the sampled group labels $z'$ from the true conditional distribution are ``equivalent'' to ``correct'' group labels, and the disagreement rate between $z'$ and the estimated group labels $\tilde{z}'$ is $\le r$ by the total variation assumption.

\subsection{Proof of Corollary~\ref{coro:base}}
\base*

\emph{Proof}. First, by Lemma~\ref{lem:gdro}, we have that if we train a classifier with group DRO on $\D_1$ to classify the \emph{group labels}, the worst-group population loss is $\tilde{O}(1/\sqrt{qm})$ with high probability (since in the realizable case, there exists a group classifier with 0 population loss). As discussed in Appendix \ref{app:losses}, for both the cross-entropy and the squared loss this translates to a (population) worst-group misclassification \emph{error} of $\tilde{O}(1/\sqrt{qm})$ as well (when the classifier prediction is the group with maximum predicted probability).

In fact, by inspecting the proof of Lemma~\ref{lem:gdro}, we can make the slightly stronger statement that the population classification error on group $g$ is $\tilde{O}(1/\sqrt{q_g m})$ with high probability.

So, $P(z \ne g | \tilde{z} = g) = \su{g' \in |\G|, g' \ne g}{} \f{P(\tilde{z} = g | z = g') P(z = g')}{P(\tilde{z} = g)} = \su{g' \in |\G|, g' \ne g}{} \f{\tilde{O}(1/\sqrt{q_{g'} m}) \cdot q_{g'}}{P(\tilde{z} = g)} \le O \lt \f{1}{(1-\tilde{O}(1/\sqrt{q_g m}))q_g\sqrt{m}} \sum \sqrt{q_{g'}}\rt = \tilde{O}(\ff{1}{q\sqrt{m}})$
since $P(\tilde{z} = g) \ge P(\tilde{z} = g | z = g)P(z = g) \ge (1-\tilde{O}(1/\sqrt{q_g m}))q_g$ with high probability and $\sum\sqrt{q_g'} \le \sum 1/\sqrt{|\G|} = \sqrt{|\G|}$.
Thus by a similar argument to the proof of Theorem~\ref{thm:basic}, we have $|\E[\l(x,y) | \tilde{z} = g] - \E[\l(x,y) | {z} = g]| \le  \tilde{O}(\ff{1}{q\sqrt{m}})$, and in turn can obtain the desired result (by an analogous argument as Appendix \ref{app:eg}).

\subsection{Proof of Corollary~\ref{coro:ssl}}
Corollary~\ref{coro:ssl} is restated below; recall that it assumes that the group classification problem is realizable (i.e., there exists a classifier with 0 training loss on the group classification task) and that $m = \Omega(\sqrt{n})$.

\ssl*

\noindent To prove Corollary~\ref{coro:ssl}, we make use of the following theorem (see Theorem~\ref{thm:ssl_generic} below, a restated version of Theorem 21.8 from \citep{balcan2009discriminative}). First, we need some additional notation: let $\chi : \mathcal{F}_1 \times \X\ra \{0, 1\}$ be a function and define the overloaded notation $\chi(f, P) = \E_{x \sim P}[\chi(f, x)]$. Let $VC(\mathcal{C})$ denote the VC-dimension of a function class $\mathcal{C}$. For function classes that output probabilities rather than labels directly, we overload notation so that $VC(\mathcal{F}) = VC(\{\argmax\{f\} \,|\, f \in \F\})$. Recall that $\mathcal{D}_2$ denotes the training set of group-unlabeled points. 

\begin{theorem}
\label{thm:ssl_generic}
Given 
$\de \in (0, 1)$, if
$n = \Omega \lt \ff{1}{\ep^2} \lt VC(\chi(\F_1)) \log \ff{1}{\ep} + \log \ff{1}{\de} \rt\rt$ and
$m \ge \ff{2}{\ep}\lt \log(2s(2m,t+2\ep)) + \log \ff{2}{\de}\rt$,
then with probability $\ge 1-\de$ it holds that all $f \in \mathcal{F}_1$ with zero training error and $1-\chi(f, \mathcal{D}_2) \le t+ \ep$
have population error $\le \ep$. Here, $s(2m,t+2\ep)$ denotes the expected number of splits when $2m$ points are drawn IID from $\P$ with concepts $f \in \F_1$ having $\chi(f,\P) \le t + 2\ep$.
\end{theorem}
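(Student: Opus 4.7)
My plan is to prove Theorem~\ref{thm:ssl_generic} by following the two-stage semi-supervised PAC-learning argument of Balcan and Blum: first, use the large unlabeled sample $\D_2$ to effectively restrict the hypothesis class via the compatibility notion $\chi$; second, apply uniform convergence on the (smaller) labeled sample $\D_1$ only over this restricted class. The two sample-complexity assumptions of the theorem, on $n$ and on $m$, arise naturally from these two steps.

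\textbf{Step 1 (Restriction via unlabeled data).} The first step establishes uniform convergence of the empirical compatibility $\chi(f, \D_2)$ to the true compatibility $\chi(f, \P)$ over all $f \in \F_1$. Because $\chi(\F_1)$ has VC dimension $V := VC(\chi(\F_1))$, a standard VC uniform-convergence bound yields
\[
\sup_{f \in \F_1} \big| \chi(f, \D_2) - \chi(f, \P) \big| \le \epsilon
\]
with probability $\ge 1 - \delta/2$, provided $n = \Omega\!\lt \ff{1}{\epsilon^2}\lt V \log \ff{1}{\epsilon} + \log \ff{1}{\delta} \rt\rt$; this is exactly the first sample-complexity assumption of the theorem. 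On this high-probability event, every $f \in \F_1$ satisfying $1-\chi(f, \D_2) \le t + \epsilon$ also satisfies $\chi(f, \P) \le t + 2\epsilon$, and hence lies in the restricted class $\F^{\chi} := \{ f \in \F_1 : \chi(f,\P) \le t + 2\epsilon \}$.

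\textbf{Step 2 (Uniform convergence on the restricted class).} The second step performs PAC uniform convergence on the labeled sample $\D_1$, but only over the subclass $\F^{\chi}$. By the definition of $s$ given in the theorem, the expected growth function of $\F^{\chi}$ on $2m$ IID samples from $\P$ is $s(2m, t + 2\epsilon)$. A classical double-sample symmetrization argument for realizable PAC learning then yields that, with probability $\ge 1 - \delta/2$, every $f \in \F^{\chi}$ with zero empirical error on $\D_1$ has population error $\le \epsilon$, provided
\[
m \ge \ff{2}{\epsilon}\lt \log\!\big(2 s(2m, t+2\epsilon)\big) + \log \ff{2}{\delta} \rt,
\]
which matches the theorem's second assumption. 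A union bound combining the two events then delivers the theorem with total failure probability $\le \delta$: any $f \in \F_1$ with zero training error on $\D_1$ and $1 - \chi(f, \D_2) \le t + \epsilon$ must lie in $\F^{\chi}$ (by Step~1) and therefore have population error $\le \epsilon$ (by Step~2).

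\textbf{Main obstacle.} The nonstandard ingredient is replacing the worst-case growth function of $\F_1$ with the expected growth function $s(2m, t+2\epsilon)$ of only the ``$\chi$-compatible'' subclass $\F^{\chi}$ during the symmetrization. The subtlety is that $\F^{\chi}$ is defined through the true, unknown distribution $\P$, so the argument must carefully condition on Step~1 and show that classifiers passing the empirical compatibility filter $1-\chi(f, \D_2) \le t + \epsilon$ can legitimately be treated as members of $\F^{\chi}$ when we run the ghost-sample counting on $2m$ labeled points. Matching the constants (the factor of $2$ inside $\log(2s)$ and the $\log(2/\delta)$ term) to the theorem's $m$ bound relies on the standard sample-doubling from symmetrization, but the conceptual heart of the proof is transferring the restriction from the population level to the sample level using the uniform guarantee from Step~1.
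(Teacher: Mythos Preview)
The paper does not actually prove this theorem: it is explicitly stated as ``a restated version of Theorem~21.8 from \citep{balcan2009discriminative}'' and is invoked as a black-box tool in the proof of Corollary~\ref{coro:ssl}. So there is no in-paper proof to compare against.

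That said, your two-stage sketch is precisely the Balcan--Blum argument that underlies the cited result: (i) VC uniform convergence of the compatibility functional over the unlabeled sample to pass from the empirical filter $1-\chi(f,\D_2)\le t+\ep$ to the population-level restricted class, and (ii) a realizable double-sample bound on the labeled data using the expected growth function $s(2m,t+2\ep)$ of that restricted class. Your identification of the ``main obstacle''---that $\F^{\chi}$ is defined via the unknown $\P$, so one must use Step~1 to justify counting splits only over $\F^{\chi}$ in the ghost-sample argument---is exactly right. One small slip: from $1-\chi(f,\D_2)\le t+\ep$ and $|\chi(f,\D_2)-\chi(f,\P)|\le\ep$ you conclude $1-\chi(f,\P)\le t+2\ep$, not $\chi(f,\P)\le t+2\ep$ as you wrote (the paper's own statement of $s$ has the same notational slip, so this is harmless for matching the theorem as stated).
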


\noindent \emph{Proof of Corollary~\ref{coro:ssl}}.
FixMatch \citep{sohn2020fixmatch} (which we use for Stage 1 of \name-SSL, as described in \ref{alg:bssl}) minimizes a weighted sum of the \emph{supervised} loss (computed on the training points with group labels) and an \emph{unsupervised} loss (consistency of predictions between examples and augmented versions of the same example, computed on all training points). Concretely, in our case the FixMatch loss for a group classifier $f \in \F_1$ is ${\L}_{fixmatch} = \M \L_{sup} + (1-\M) \L_{consistency}$, where $\M \in [0, 1]$, $\L_{sup} = \E_{(x,y,z) \sim \D_1}[\ell(f(x,y), z)]$, and $\L_{consistency} = \E_{(x,y) \sim \D_2}[\mathbf{1}(\max\{f(x,y)\} \ge \tau) \cdot \ell(f(aug(x), y), f(x,y))]$, where $\tau \in [0, 1]$ is a predefined constant.

Here, $aug(\cdot)$ denotes the augmentation function; for simplicity assume it is a fixed (non-random) function. Let $\chi(f, x)$ be 0 if $\max\{ f(x)\} \ge \tau$ and $\argmax \{f(x)\} \ne \argmax \{f(aug(x))\}$, and 1 otherwise (in other words, $\chi(f,x)$ is 1 unless $f$ makes a confident prediction on $x$ but makes a different prediction on the augmented version of $x$). By the realizability assumption, there exists $f \in \F_1$ with zero loss (and therefore zero training loss) on the supervised task of classifying the groups. Note that if $aug$ is the identity function, then $f$ also attains zero consistency loss; more generally, given $\xi \ge 0$ we can choose $aug$ to be a ``weak enough'' augmentation such that there exists $f \in \F_1$ such that $\L_{consistency} \le \xi$. If we choose $\xi$ to be sufficiently small, we can guarantee that $\L_{consistency} \le \delta$ implies $\chi(f, \D_2) \le t + \ep$, since $t,\ep \ge 0$.

Because there exists $f \in \F_1$ with zero loss, this implies that if we set the weight $\M$ on the supervised part of the loss to be large enough, the function $\F_1$ we learn (corresponding to the minimizer of $\L_{fixmatch}$) will have zero training error. (If we find a function with nonzero training error, we can increase the weight of the supervised loss and rerun.) 
Then, invoking Theorem~\ref{thm:ssl_generic}, we get that the \emph{population} error of the returned group classification model will be $\le O(\ep)$ with high probability as long as $m,n$ satisfy $m = \Omega(1/\ep^2)$ and $n = \Omega(1/\ep)$. 
The value of the term $t$ in Theorem~\ref{thm:ssl_generic} depends on the consistency error $\chi(f, \D_2)$ on the training data, which in turn will depend on the augmentation chosen, the threshold $\tau$, and the weight $1-\M$ (smaller $\M$ will encourage lower consistency loss). 
Similarly, the constants in the preceding $\Omega(\cdot)$ terms depend on the choice of augmentation function $aug(\cdot)$; intuitively, we would like to choose an augmentation function that is as strong as possible while still being label-preserving (so as not to make the consistency loss large). 

In summary, given the required conditions, if $m = \Omega(1/\ep^2)$ and $n = \Omega(1/\ep)$ then the population error of the returned group classification model is $\le O(\ep)$ with high probability. The population error of the group classifier on each group $g$ will then be $\le O(\ep / q_g)$ with high probability. Equivalently, with high probability, for all groups $g$ the population error of the group classifier on that group will be $O(\ff{1}{q_gm}+ \ff{1}{q_g\sqrt{n}})$ [as long as $m = \Omega(\sqrt{n})$].
Finally, the desired result now follows by invoking Theorem~\ref{thm:basic}. 

\subsection{Proof of Corollary~\ref{coro:strong}}
Corollary~\ref{coro:strong} is restated below.
\strong*

\emph{Proof}.
In words, the assumption that $\ell(f(x), y) \perp \argmax\{\hat{f}_{group}(x, y)\}\, \big|\, z$ for all $f \in \mathcal{F}_2$ says that for all classifiers in $\mathcal{F}_2$, the loss on the \emph{target task} (of classifying the class labels) is independent of the prediction of the group classifier, when conditioned on the actual group label.

Using this assumption, we have that $\E_{(x,y)}[\l(x,y;\theta) | \tilde{z} = g] = \\
\E_{(x,y)}[\l(x,y;\theta) | \tilde{z} = g, z = g]P(z = g | \tilde{z} = g) +  \E_{(x,y)}[\l(x,y;\theta) | \tilde{z} = g, z \ne g] \cdot {P(z \ne g | \tilde{z} = g)} = \\
\E_{(x,y)}[\l(x,y;\theta) | z = g]P(z = g | \tilde{z} = g) +  \E_{(x,y)}[\l(x,y;\theta) | z \ne g]P(z \ne g | \tilde{z} = g)$.

Denote $\L_{avg}(\theta) = \E_{(x,y)}[\l(x,y;\theta)]$. Note that $\E_{(x,y)}[\l(x,y;\theta) | z \ne g]P(z \ne g) + \E_{(x,y)}[\l(x,y;\theta) | z = g]P(z = g) = L_{avg}(\theta)$. Thus, 
\begin{align*}
    \E_{(x,y)}[\l(x,y;\theta) | z \ne g]P(z \ne g | \tilde{z} = g) = \f{P(z \ne g | \tilde{z} = g)}{P(z \ne g)} \lt L_{avg}(\theta)- \E_{(x,y)}[\l(x,y;\theta) | z = g]P(z = g)\rt.
\end{align*}
As a result, 
\begin{align*}
    \E_{(x,y)}[\l(x,y;\theta) | \tilde{z} = g]  =
(1-a)\E_{(x,y)}[\l(x,y;\theta) | z = g] + a L_{avg}(\theta),
\end{align*}
where $a = \f{P(z \ne g | \tilde{z} = g)}{P(z \ne g)} \le \tilde{O}(\ff{1}{\sqrt{qm}})$ with high probability (as shown in previous sections).

Let $\tilde{\theta}_{pop}$ denote the minimizer of $\E_{(x,y)}[\l(x,y;\theta) | \tilde{z} = g]$.
Suppose for contradiction that $\L_{robust}(\tilde{\theta}_{pop}) > \L_{robust}(\theta^*_{avg})$.
Let 
\begin{align*}
    k_{max} \in \argmax_{k \in \G} \lt (1-a)\cdot\E_{(x,y)\sim p_k} [\l(x, y; \theta)] + a \cdot \E_{(x,y)\sim P}[\l(x,y;\theta)] \rt.
\end{align*}
Then we have $\lt (1-a)\cdot\E_{(x,y)\sim p_{k_{max}}} [\l(x, y; \theta)] + a \cdot \E_{(x,y)\sim P}[\l(x,y;\theta)] \rt > \E_{(x,y)\sim p_{k_{max}}} [\l(x, y; \theta)]$, i.e. that $\E_{(x,y)\sim P}[\l(x,y;\theta)] > \E_{(x,y)\sim p_{k_{max}}}[\l(x,y;\theta)] $ and thus that $\lt (1-a)\cdot\E_{(x,y)\sim p_{k_{max}}} [\l(x, y; \theta)] + a \cdot \E_{(x,y)\sim P}[\l(x,y;\theta)] \rt < \E_{(x,y)\sim P}[\l(x,y;\theta)]$. But there must be some $k \in [\G]$ such that $\lt (1-a)\cdot\E_{(x,y)\sim p_{k_{max}}} [\l(x, y; \theta)] + a \cdot \E_{(x,y)\sim P}[\l(x,y;\theta)] \rt \ge \E_{(x,y)\sim P}[\l(x,y;\theta)]$, which contradicts the definition of $k_{max}$.
Thus, $\L_{robust}(\tilde{\theta}_{pop}) \le \L_{robust}(\theta^*_{avg})$. 

Using the fact that the datapoints in $\tilde{S}_g$ are independent samples from ${P(x, y | \tilde{z} = g)}$, we have for all $g$ that $|\E_{(x,y) \sim \tilde{S}_g}[\ell(x, y; \theta)] - \E[\ell(x, y; \theta) | \tilde{z} = g]| \le O\lt \sqrt{\ff{\log |\tilde{S}_g|}{|\tilde{S}_g|}}\rt$ with high probability (as argued in Appendix \ref{app:eg}). By combining this with the result of Corollary~\ref{coro:base}, we obtain the desired result. $\square$

\subsubsection{Proof of Lemma~\ref{lem:erm}}
\erm*

\emph{Proof.} Consider the following simple distribution. Suppose that the distribution of $x$ is a point mass on a single point. Suppose that there are $k$ classes and the classes are identical to the groups (that is, $y = z$ always). If cross-entropy loss is used, then clearly to minimize worst-group loss one should predict a uniform distribution over each class (since the point $x$ gives no information about the class or group). Thus, the loss on each point would be $\log k$, so $\L_{robust}^* = \log k$ in this case.

Recall that the cross-entropy loss has the property that the \emph{average} population loss is minimized when the predicted probability of each class is simply the true probability of that class conditioned on the features $x$. In our case, the latter is simply the probability of the class. Thus, for the minimizer of the population average cross entropy loss, the predicted class probabilities for any point are the true class probabilities $P(c)$, and so the loss given that the true class is $c$ is $-\log P(c)$ by definition. So the worst-group loss is $\max_c -\log P(c) = -\log \min_c P(c) = \log(1/q)$ [note that $1/q \ge k$].  The lemma now follows by a simple application of Hoeffding's inequality, since the training dataset has $n$ points. 

\section{Experimental Details and Additional Results}
\label{app:experiments}
\subsection{Pseudocode}
We present Algorithm~\ref{alg:bssl} to explain $\name$-SSL in more detail. We also include algorithm boxes for the subroutines used in Algorithm~\ref{alg:bbase}.

\begin{figure}
\begin{algorithm}[H]
\caption{\name-SSL}
\label{alg:bssl}
\begin{algorithmic}[1]
\Require Group-labeled data $\mathcal{D}_1 = \{(x_i,y_i,z_i)\}_{i=1}^m$, group-unlabeled data $\mathcal{D}_2 = \{(x_i,y_i)\}_{i=m+1}^n$.
\State $\hat{f}_{group} \leftarrow$ {\sc Train\_FixMatch}$(\mathcal{D}_1,\mathcal{D}_2)$
\State $(\hat{z}_{m+1},\dots,\hat{z}_n) \leftarrow \textsc{Predict}(\hat{f}_{group},\mathcal{D}_2)$
\State $(\hat{z}_1,\dots,\hat{z}_m) \leftarrow (z_1,\dots,z_m)$
\State $\hat{f}_{robust} \leftarrow$  {\sc{GDRO}}$\lt \{(x_i,y_i,\hat{z}_i)\}_{i=1}^n \rt$
\Ensure Final model  $\hat{f}_{robust}$.
\end{algorithmic}
  \end{algorithm}
    \vspace{-2em}
  \caption{\name-SSL. A semi-supervised model is trained on both the group-labeled training examples and the training examples with only class labels, and generates group pseudolabels for the training data without group labels. These pseudolabels are used to train a more robust model using GDRO \citep{sagawa2019distributionally}.
  For FixMatch training, we use the standard FixMatch algorithm \citep{sohn2020fixmatch} except we modify it to take the class label $y$ as input as well, similar to how we train $\name$-\textsc{Base}.
}
\end{figure}

\begin{figure}
\begin{algorithm}[H]
\caption{\textsc{Train\_Supervised}}
\begin{algorithmic}[1]
\Require Group-labeled data $\mathcal{D}_1 = \{(x_i,y_i,z_i)\}_{i=1}^m$, group-labeled validation data $\mathcal{D}_{val}$
\State Initialize model $f_\T \in \mathcal{F}_1$
\State Initialize acc\_best = 0
\For {epoch in $1,\dots,T$}
\For {$i$ in $1,\dots,m$}
\State $\T \leftarrow \text{GDRO\_Update}(\ell, f_\theta(x_i, y_i), z_i)$
\EndFor
\If {ValidationSubsetWorstGroupAcc($f_\theta$) $>$ acc\_best}
\State $\hat{f}_{group} \leftarrow f_\theta$
\State acc\_best $\leftarrow$ ValidationSubsetWorstGroupAcc($f_\theta$)
\EndIf
\EndFor
\Ensure Group prediction model  $\hat{f}_{group}$.
\end{algorithmic}
  \end{algorithm}
    \vspace{-2em}
  \caption{Details for training the supervised \emph{group} classifier. In our experiments, it is typically initialized from a pretrained ResNet-50 model, except for MNIST where it is a randomly initialized LeNet. We minimize the GDRO training loss (in our experiments, $\ell$ is the cross-entropy loss), and select the model that does best on the validation subset with group labels, in terms of the worst-group accuracy on that subset. The group classifier takes both the features $x$ and class label $y$ as input. (For the specific equation of the GDRO update, see \citep{sagawa2019distributionally}. Note that in practice, we do a minibatched version of the above.)}
\end{figure}

\begin{figure}
\begin{algorithm}[H]
\caption{\textsc{Predict}}
\begin{algorithmic}[1]
\Require Group prediction model $\hat{f}_{group}$, group-unlabeled training data $\mathcal{D}_2 = \{(x_i,y_i)\}_{i=m+1}^n$.
\For {$i=1,\dots,m$}
\State $\hat{z}_i \leftarrow \argmax_{g \in \G} \hat{f}_{group}(x_i, y_i)$
\EndFor
\Ensure $\{\hat{z}_i\}_{i=1}^m$
\end{algorithmic}
  \end{algorithm}
    \vspace{-2em}
  \caption{Details for extracting group predictions from the trained group prediction model.}
\end{figure}
\subsection{Training Details}
\label{app:train-details}
For all datasets and methods, we use a fixed training/validation/test split. Models are trained on the training set, and the validation set is used for model selection (both selecting the best model during training, and for hyperparameter selection). All results reported in plots and tables are on the test set; the test set is \textbf{not} used for any model selection or tuning purposes. For consistency and direct comparability with prior works, we do not use data augmentation for any of the baselines or while training the ``Stage 2'' robust models, except when explicitly specified otherwise. For all experimental settings, reported means and standard deviations are over 5 trials with different random seeds.

\paragraph{U-MNIST.} This task is based on the MNIST dataset \citep{lecun2010mnist} (available under the Creative Commons Attribution-Share Alike 3.0 license). The U-MNIST task is to classify digits between $\le 4$ and $\ge 5$; the groups are the individual digits. We use a fixed training-validation split for all methods, to set aside $20\%$ of the original MNIST training set (12,000 points) for validation. On the training set, the `8' digits are subsampled such that only 5\% of them are kept. The total number of points with each digit label in the training set are $0: 4769, 1: 5382, 2: 4845, 3: 4950, 4: 4662, 5: 4300, 6: 4728, 7: 4980, 8: 234, 9: 4720$. 
The validation set is approximately balanced. Without modification, this can actually help methods that do not require group labels on the full validation set (such as ERM, \name, \textsc{George}) to more easily select models with good worst-group performance, because the rare `8' group is overrepresented in the validation set compared to the training set, so it is easier to detect poor performance on that group even when looking at overall performance or performance using noisy group labels. Thus, when computing an average metric (loss or accuracy) on a subset of the validation data, we compute a weighted average which is the sum over each group $g$ of: the average of that metric for all points in the subset whose true group label is $g$, times the proportion of group $g$ in the training dataset. This reweighting procedure is the same as what is done in \citep{sagawa2019distributionally, sohoni2020no}.

We use a 4 layer LeNet \citep{lecun1998} and the Adam optimizer for all methods. For U-MNIST, for training the robust model we train for 100 epochs with a batch size of 128, the Adam optimizer \citep{adam}, and decay the learning rate by a factor of 0.1 at epochs 50 and 75. These hyperparameters were taken from \citep{sohoni2020no}. We tune all methods over the cross product of learning rates [2e-3, 2e-4] and weight decays [1e-4, 3e-4, 1e-5]. For methods using GDRO for the second stage, we also tune the GDRO group adjustment parameter in the set \{0, 3\}, and use uniform per-group sampling, as described in \citep{sagawa2019distributionally}.

\paragraph{U-CIFAR10.} This task is based on the publicly available CIFAR-10 dataset \citep{krizhevsky2009learning} (license unknown). The U-CIFAR10 task is to classify images as ``vehicle'' or ``animal''; the groups are the original CIFAR-10 classes. We use a fixed training-validation split for all methods, to set aside $20\%$ of the original MNIST training set (10,000 points) for validation. On the training set, the `airplane' images are subsampled such that only 5\% of them are kept. The total number of points with each group label in the training set are airplane: 204, automobile: 4004, bird: 3976, cat: 4017, deer: 3997, dog: 3999, frog: 4000, horse: 3976, ship: 3957, truck: 4003.
The validation set is approximately balanced, so as described for U-MNIST we compute reweighted metrics where appropriate.

We use a ResNet-50 \citep{resnet} model and train for 200 epochs with a batch size of 128, SGD with momentum 0.9, and a cosine learning rate schedule. These hyperparameters were taken from the implementation at \url{https://github.com/kuangliu/pytorch-cifar}. We tune all methods over the cross product of learning rates [1e-2, 1e-3] and weight decays [1e-3, 3e-3, 1e-2, 3e-2, 1e-1]. For methods using GDRO for the second stage, we also tune the GDRO group adjustment parameter in the set \{0, 3\}, and use uniform per-group sampling, as described in \citep{sagawa2019distributionally}.

\paragraph{Waterbirds.} 
The Waterbirds dataset was created by \cite{sagawa2019distributionally} as a modification of the CUB dataset \cite{WahCUB_200_2011} (license unknown). It consists of different bird species (with class labels either ``waterbird'' or ``landbird'') on either a land or water background. There are 3498 training images of landbirds on land, 184 of landbirds on water, 56 of waterbirds on land, and 1057 of landbirds on land (these are the four groups).
The validation set is more balanced, so as described for U-MNIST we compute reweighted metrics where appropriate.

We use a ResNet-50 \citep{resnet} model and train for 300 epochs with a batch size of 128 and SGD with momentum 0.9. These hyperparameters were taken from \citep{sagawa2019distributionally}. We tune all methods over the (learning rate, weight decay) pairs (1e-4, 1e-1), (1e-3, 1e-4), and (1e-5, 1.0), as done in \citep{sagawa2019distributionally,liu2021just}. For methods using GDRO for the second stage, we set the GDRO group adjustment parameter to 2 as in \citep{sagawa2019distributionally}, and use uniform per-group sampling.

\paragraph{CelebA.} 
The CelebA dataset \citep{liu2015faceattributes} (license unknown) is a dataset of celebrity faces annotated with several descriptors (such as gender, hair color, wearing glasses). It is often used as a benchmark for robustness to spurious correlations. The task we consider is classifying the hair color of the person in the image as blond or non-blond, as in prior works such as \citep{sagawa2019distributionally,sohoni2020no,liu2021just}. In this dataset, hair color is spuriously correlated with gender: there are 71629 images in the ``female, non-blond'' group, 66874 in the ``female, blond'', 22880 ``male, non-blond'', and just 1387 ``male, blond''.

We use a ResNet-50 \citep{resnet} model and train for 50 epochs with a batch size of 128 and SGD with momentum 0.9. These hyperparameters were taken from \citep{sagawa2019distributionally}. We tune all methods over the (learning rate, weight decay) pairs (1e-4, 1e-2), (1e-4, 1e-4), and (1e-5, 0.1), as done in \citep{sagawa2019distributionally,liu2021just}. For methods using GDRO for the second stage, we set the GDRO group adjustment parameter to 3 as in \citep{sagawa2019distributionally}, and use uniform per-group sampling.

\subsubsection{Baseline details}
We reimplemented ERM and GDRO (and subset-GDRO) ourselves, along with \name. For the other baseline methods (George \citep{sohoni2020no}, JTT \citep{liu2021just}, and EIIL \citep{creager2021environment}), we use the authors' publicly available repositories, adapting the code (such as to plug in our dataloaders) where necessary.

All three methods require first training an ERM model. For \textsc{George}, this model is either a standard ERM model or one trained with high regularization, and is selected based on the Silhouette score of the clustered activations. For U-CIFAR10 (the only dataset which George did not evaluate on originally), we used a learning rate of 1e-3 for this model and tuned the weight decay in [1e-3, 1e-1] based on this Silhouette score criterion. For JTT and EIIL, the model is an ERM model trained with high regularization for a shorter number of epochs. For U-MNIST and U-CIFAR10 (which the JTT paper did not evaluate on originally) we used a learning rate of 2e-3 and tuned the weight decay in [1e-1, 1e-3] for this ERM model, and tuned the number of training epochs in \{1, 50\} (the tuning criterion in this case was the validation worst-group accuracy of the final robust model), for both EIIL and JTT. EIIL also did not evaluate on CelebA, so we use the same ERM model as in JTT.

For JTT and EIIL, we assume the group labels on the entire validation set are known, and use these for model selection (as done in the respective papers). This gives them a slight advantage for model selection compared to \name, in which we only use a small number of group-labeled validation examples (the same number as we use for training).
On the other hand, George does not assume validation set group labels, but rather estimates them the same way the training group labels are estiated (via clustering the activations of the last layer).

\subsubsection{\name{} details}
For \name-Base, we train a supervised group classifier model (with GDRO) to predict group pseudolabels, as described in Section \ref{sec:method}. This is challenging due to the low number of points with known group labels: for instance, with 8 group-labeled points per group on Waterbirds or CelebA, the entire training dataset for this stage is 32 examples. Thus, we generally need to train for more epochs to obtain reasonable results. We also use mild data augmentation (random crops and flips) for the Stage 1 group classifier only to help deal with this lack of data (although, similarly to prior works, we do not use data augmentation in Stage 2, except for the experiment with RotNet with data augmentation).

For the learning rate and weight decay for the group classifier, we tune over the same pairs of hyperparameters as described above. For the number of epochs, we train for $500 / (n_{lab} / 64)$ epochs, where $n_{lab}$ is the number of group-labeled examples in the smallest group (in our experiments, we evaluate the settings $n_{lab} \in \{8, 16, 32, 64\}$). (We evaluate the validation accuracy every $n_{lab} / 64$ epochs to keep a constant number of validation evaluations.) We set the batch size for training the group classifier to be the minimum of 128 or the total number of group-labeled examples.

We note that the group classifier and end model need not have the same architecture. For instance, one could potentially select a smaller model for the group classifier since it is trained with a small amount of data. In our preliminary experiments, however, we found that using the larger ResNet-50 model (the same architecture as the Stage 2 model) performed better than using a smaller ResNet-18 on all datasets.

For $\name$ (and Subset-GDRO), the points for which we know the group label are selected randomly from the training and validation datasets. We select different sets of these points depending on the random seed (but for a fixed seed, these sets are the same, to facilitate direct comparisons). We do this to avoid over-indexing interpretation of results to a particularly ``easy'' or ``difficult'' set chosen by happenstance. In the real world, of course, the points for which the group labels are known would generally be a fixed set. We note that hyperparameters should not be tuned by looking at averaged metrics over the different seed results, because in a way this ``shares information'' between different trials with different group-labeled points, so in some sense is using more group label information than it should. Thus, we indeed select the best hyperparameters separately for each seed (based on the appropriate validation metric for that seed). For consistency, we do this for \emph{all} methods (e.g., tune hyperparameters on a per-seed basis based on the validation metric).

\subsection{Additional Results}
\subsubsection{Worst-Group Performance}
In Table~\ref{tab:wg-results-app} we provide a more complete version of Table~\ref{tab:wg-results}.

In this table, all results are rerun by us except for \textsc{George} on U-MNIST, CelebA and Waterbirds (since the original \textsc{George} paper already reports results over 5 seeds on those datasets). We ran 5 seeds for all methods except 10 seeds for \name, due to the increased variability from selecting different subsets of points with known group labels. We note that our averaged results were somewhat better on Waterbirds and worse on CelebA than those reported in the \textsc{Jtt} paper (which reports results from one trial). Similarly, our results for EIIL on Waterbirds are also somewhat better than those reported in the EIIL paper \citep{creager2021environment}.

Separately, the recent work \citep{zhang2022correctncontrast} proposes a method (CnC) based on contrastive learning for improving robustness to spurious correlations. This method does very well on the spurious correlation datasets (Waterbirds and CelebA), even exceeding the performance of \name{} on CelebA. However, we found that even after hyperparameter tuning, CnC did not work well on U-MNIST and U-CIFAR10 (attaining worst-group accuracies lower than those of ERM), which are tasks without explicit spurious correlations (even though CnC can in principle be applied to such tasks).

\begin{table*}[t]
\caption{Extended version of Table~\ref{tab:wg-results}, with additional baseline results (EIIL \citep{creager2021environment}, Subset-GDRO) and results for \name~when there are 8, 16, 32, and 64 group-labeled examples provided from each group (in each of the training and validation sets). As Waterbirds only has 56 training points in the smallest group, the last setting (64) does not apply to it.
}
\label{tab:wg-results-app}
\centering
{\small
{
\resizebox{\textwidth}{!}{\begin{tabular}{@{}lccccccccc@{}}
\toprule
Method              & \multicolumn{2}{c}{U-MNIST} & \multicolumn{2}{c}{Waterbirds} & \multicolumn{2}{c}{CelebA} & \multicolumn{2}{c}{U-CIFAR10} \\
Accuracy ($\%$)     & Worst-group       & Avg.    & Worst-group     & Avg.  & Worst-group & Avg.  & Worst-group           & Avg.         \\ \midrule
ERM              &  $93.4 \pm 0.5$  &  $99.2 \pm 0.0$  &  $60.6 \pm 3.3 $  &  $97.3 \pm 0.1$ &  $39.7 \pm 3.0$  &  $95.7 \pm 0.1$  &  $88.4 \pm 1.4$  &  $99.5 \pm 0.1$  \\
EIIL &  $97.2 \pm 0.5$  &  $98.9 \pm 0.2$  &  $87.3 \pm 4.2$  &  $93.1 \pm 0.6$  &  $81.3 \pm 1.4$  &  $89.5 \pm 0.4$ &  $85.3 \pm 1.4$  &  $99.4 \pm 0.1$  \\
\textsc{George}  & $95.7 \pm 0.6$ &     $97.9 \pm 0.2$  &  $76.2 \pm 2.0$  &  $95.7 \pm 0.5$  &  $53.7 \pm 1.3$ &  $94.6 \pm 0.2$  &  ${93.4} \pm 5.8$  &  $98.9 \pm 0.3$  \\
\textsc{Jtt}     &  ${96.2} \pm 0.7$ &  $98.4 \pm 0.4$  &  $88.0 \pm 0.7$  & $91.7 \pm 0.8$  & $77.8 \pm 2.0$ & $87.2 \pm 1.2$  &  $89.0 \pm 4.7$  &  $94.6 \pm 1.3$  \\
\midrule
Subset-GDRO (8)     &  $66.9 \pm 5.3$          &  $84.0 \pm 1.8$  &  $76.4 \pm 3.6$ &  $81.0 \pm 4.9$  &  $56.9 \pm 13.1$ &  $74.2 \pm 5.3$ & $83.4 \pm 3.9$ &  $93.5 \pm 0.6$  \\
Subset-GDRO (16)     &  $77.4 \pm 2.5$          &  $89.2 \pm 1.1$  &  $83.9 \pm 1.4$ &  $86.1 \pm 1.7$  &  $75.5 \pm 5.9$ &  $81.9 \pm 1.4$ & $86.0 \pm 2.2$ &  $93.3 \pm 1.8$  \\
Subset-GDRO (32)     &  $85.4 \pm 1.4$          &  $92.4 \pm 0.4$  &  $86.9 \pm 1.0$ &  $88.6 \pm 0.5$  &  $76.6 \pm 4.4$ &  $85.5 \pm 1.8$ & $88.6 \pm 2.4$ &  $95.2 \pm 0.9$  \\
Subset-GDRO (64)     &  $89.6\pm 1.8$          &  $94.7 \pm 0.7$  &  - &  -  &  $79.0 \pm 4.4$ &  $87.3 \pm 1.2$ & $91.7 \pm 1.5$ &  $96.6 \pm 0.4$  \\
\midrule
\name-Base (8)  &  $96.2 \pm 0.8$ &  $99.2 \pm 0.1$   &  $83.0 \pm 5.9$  &  $94.4 \pm 2.8$  & $81.1 \pm 3.2$  &  $92.9 \pm 0.3$ &  $90.4  \pm 1.9$ &  $99.2 \pm 0.2$ \\ 
\name-Base (16)  &  $96.4\pm 1.0$ &  $99.1 \pm 0.2$   &  $86.9 \pm 2.3$  &  $94.4 \pm 2.9$  & $83.0 \pm 4.1$  &  $92.9 \pm 0.8$ &  $90.5 \pm 3.3$ &  $99.1 \pm 0.4$ \\ 
\name-Base (32)  &  ${96.9} \pm 0.9$ &  $99.1 \pm 0.3$   &  ${89.6} \pm 0.9$  &  $94.3 \pm 1.3$  & ${83.8} \pm 2.7$  &  $92.8 \pm 0.6$ &  ${94.5} \pm 1.1$ &  $98.9 \pm 0.3$ \\ 
\name-Base (64)  &  $97.4 \pm 0.8$ &  $99.0 \pm 0.3$   &  -  &  - & $84.3 \pm 2.0$  &  $92.8 \pm 0.5$ &  $95.2 \pm 0.8$ &  $98.9 \pm 0.4$ \\ 
\midrule
Full-GDRO             &  $98.6 \pm 0.2$  &  $99.1 \pm 0.1$ &  $90.9 \pm 0.2$ &  $92.8 \pm 0.2$  &  $89.3 \pm 0.9$  & $92.8 \pm 0.1$  &  $97.0 \pm 0.3$  &  $99.2 \pm 0.3$ \\
\end{tabular}}
}}
\end{table*}

\subsubsection{Group Prediction Accuracy}
In Figure~\ref{fig:group-results} we provide more results on the performance of the ``Stage 1'' group classification model. We plot both worst-group and average accuracies for the group classifier, corresponding to the same settings as in Table~\ref{tab:wg-results-app}. 
\begin{figure*}
    \centering
    \includegraphics[width=0.45\textwidth]{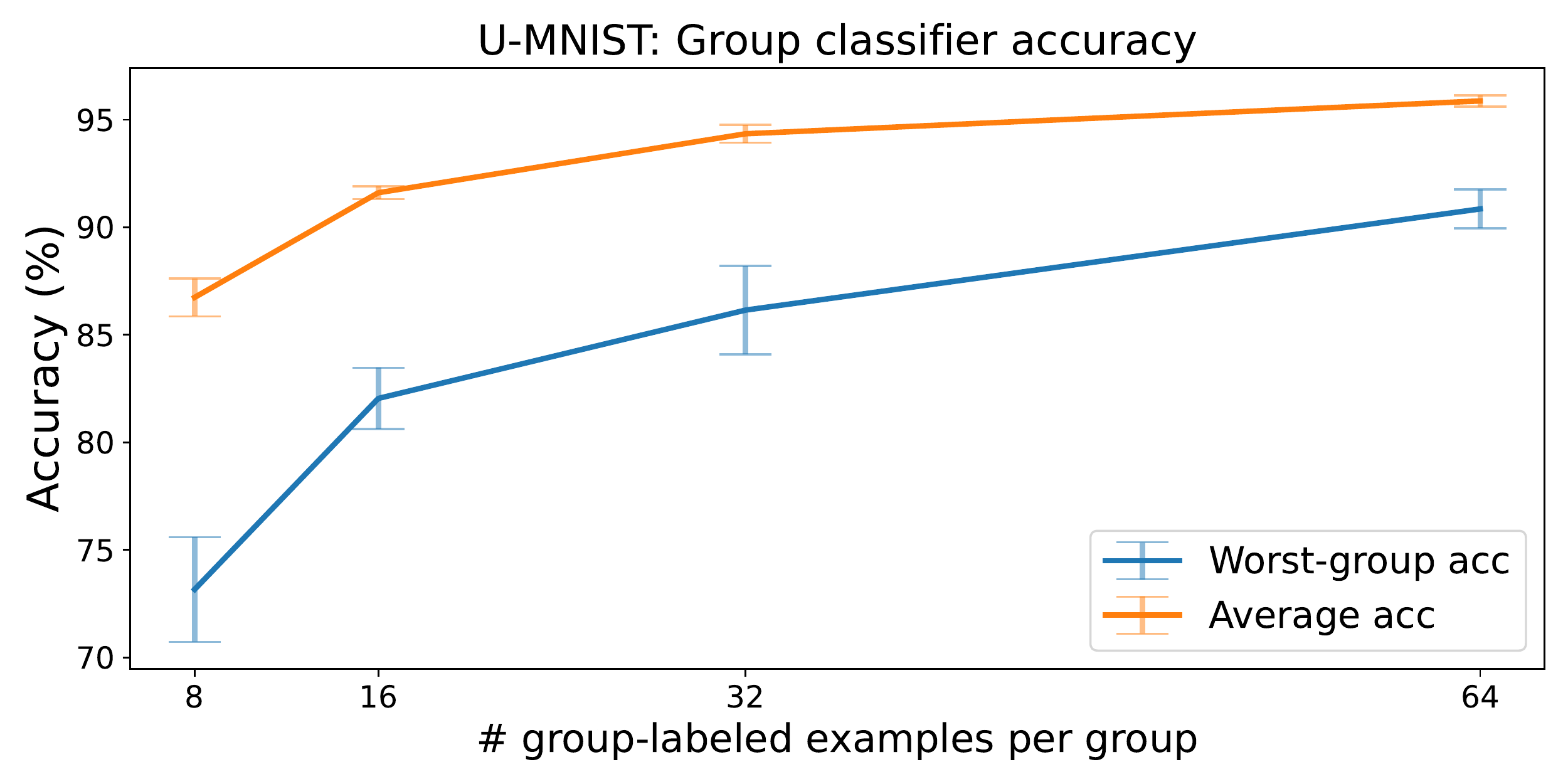}\quad\includegraphics[width=0.45\textwidth]{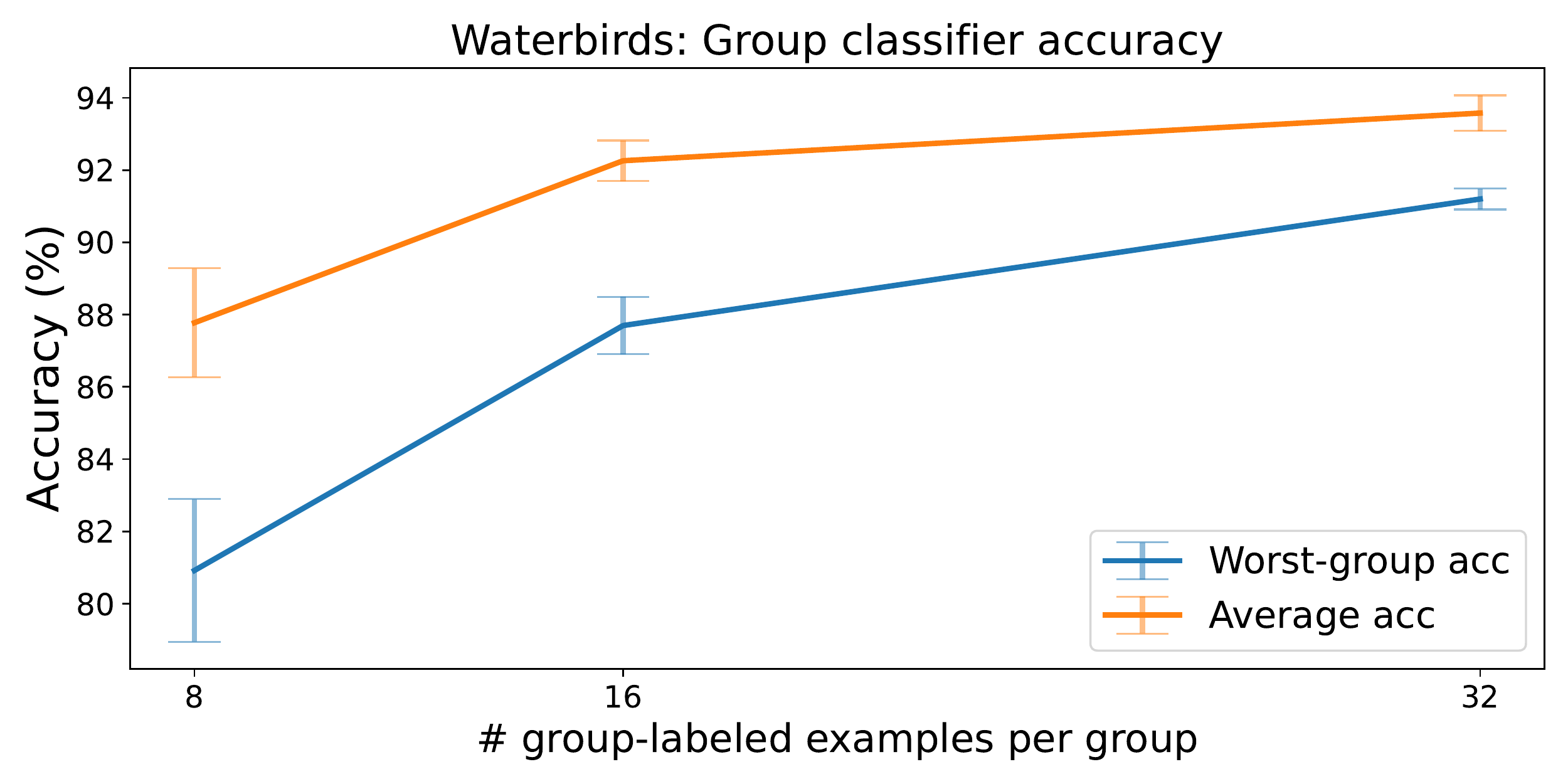}
    \\
    \includegraphics[width=0.45\textwidth]{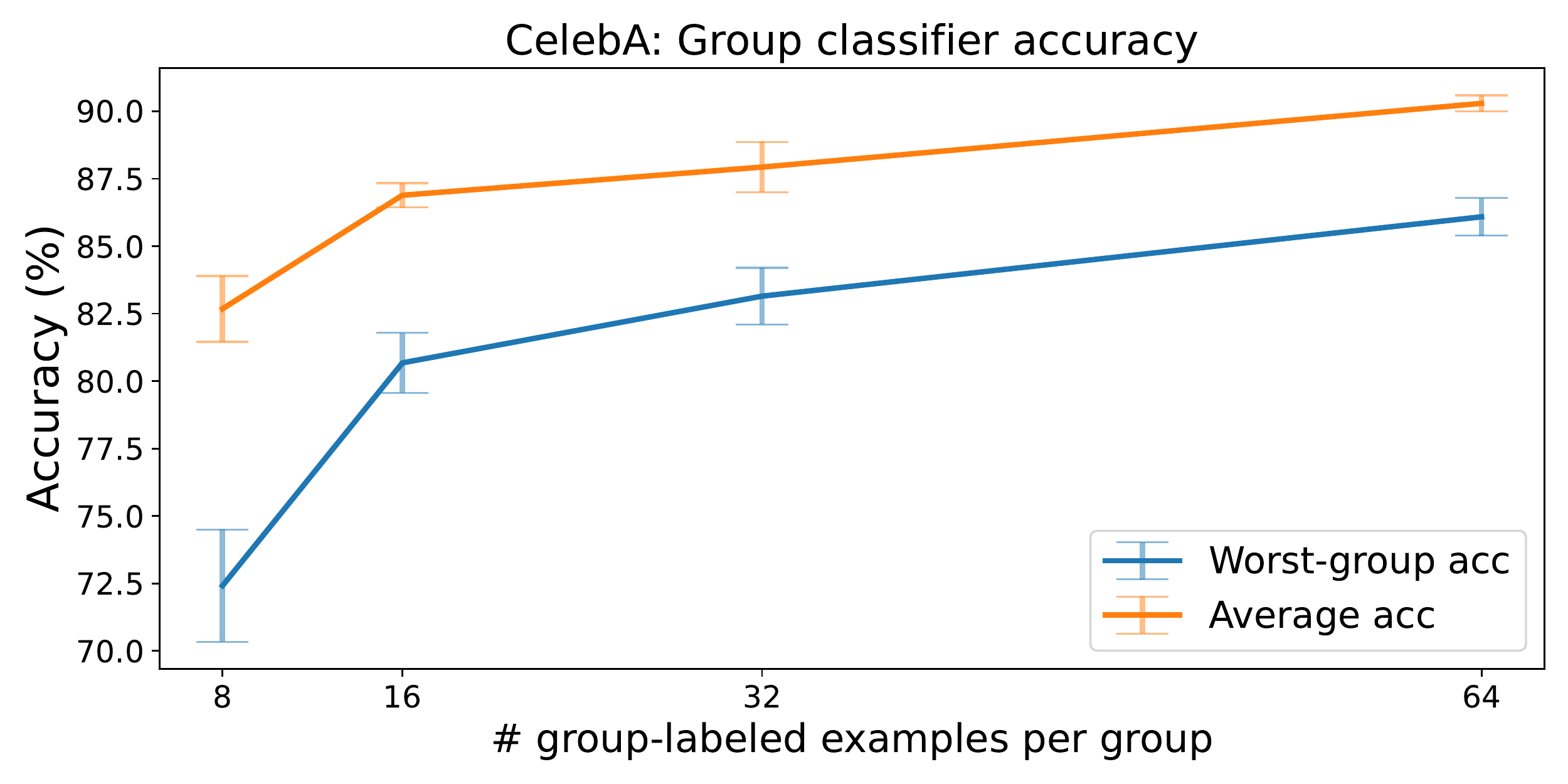}\quad\includegraphics[width=0.45\textwidth]{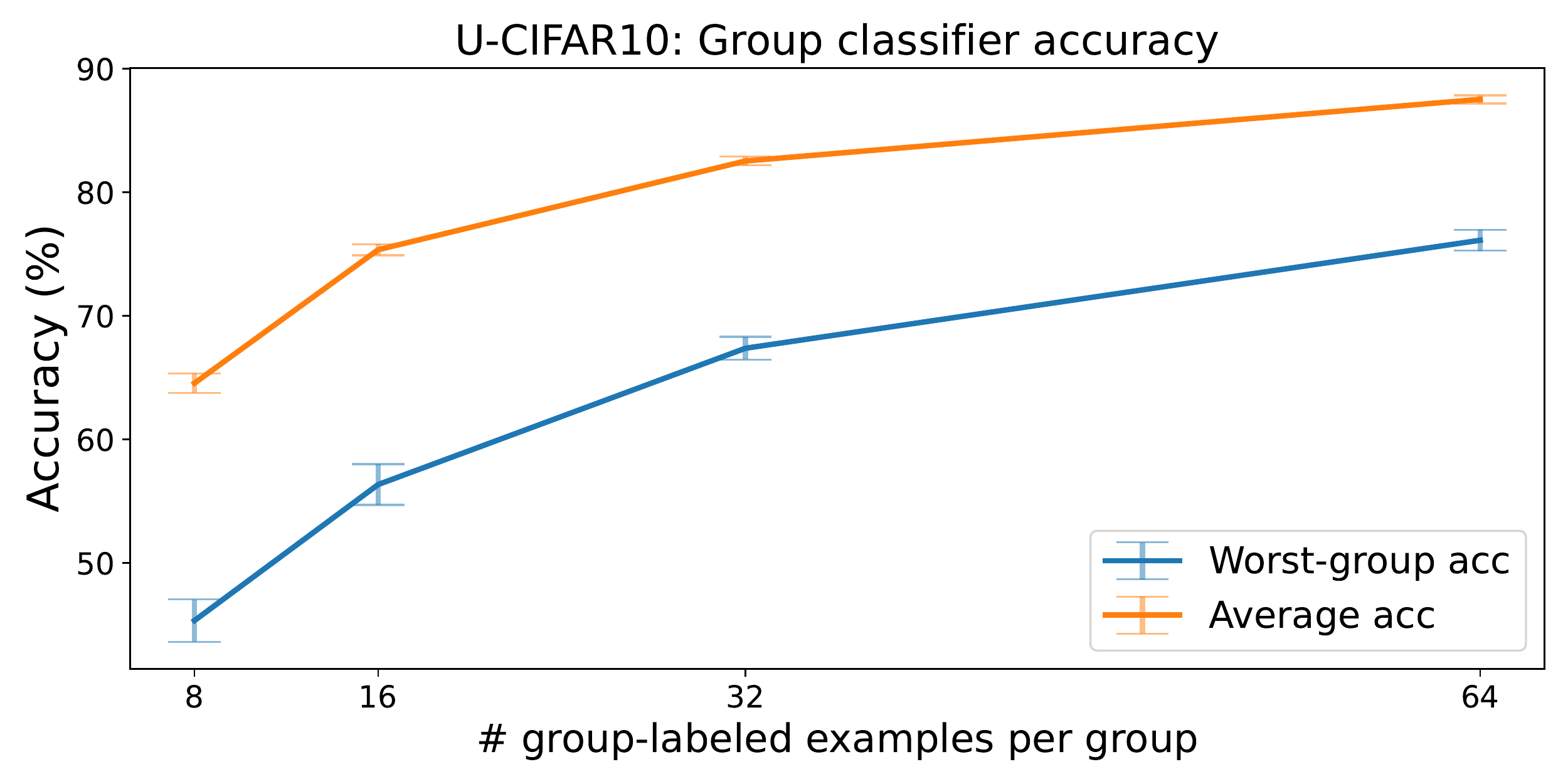}
    \caption{Worst-group and average accuracy of the \emph{group} classifier.}
    \label{fig:group-results}
\end{figure*}

\subsubsection{\name-SSL Details}
For \name-SSL, we use FixMatch \citep{sohn2020fixmatch} to train the semi-supervised group classifier. We adapt the PyTorch implementation at \url{https://github.com/kekmodel/FixMatch-pytorch}, modifying it to use the class label the same way as described in Section~\ref{sec:method} in order to assign zero probability to the groups that do not belong to the given class. Other than that, we use the default FixMatch hyperparameters. The group classifier used is simply the model at the end of FixMatch training (so we do not use the validation set at all for Stage 1). For the Stage 2 GDRO model, we used the same hyperparameter search and model selection approach as described in Appendix \ref{app:train-details}.

\subsubsection{Additional Ablation Results}
In this section we include additional details and results that were omitted from Sections \ref{sec:random-flip} to \ref{sec:pretraining}. (Note: We average over 5 seeds for all ablation results.)

\paragraph{Random Flipping Experiment} As described in Section \ref{sec:random-flip}, for this experiment we created random ``synthetic group pseudolabels'' to have the same confusion matrix with respect to the true group labels as the confusion matrix of the \name~group pseudolabels with respect to the true group labels. Specifically, for each setting (dataset, seed, and number of known group labels) we computed the confusion matrix of the group predictions output by the corresponding \name{} model (i.e., the one selected as the ``best model'' based on the criteria described in Appendix \ref{app:train-details}), and then created the ``synthetic pseudolabels'' by taking the true group labels and flipping randomly chosen ones to match the confusion matrix. We then used these synthetic pseudolabels in the GDRO objective, with the same hyperparameters as those of the \name{} model. (Thus, one possible explanation for the fact that the \name{} models generally outperform the ``randomly flipped'' models slightly is the fact that we did not perform a separate hyperparameter search for the ``randomly flipped'' model, instead using the same ones as those of the \name{} model with the same confusion matrix.)

\paragraph{Pretrained Model Choice} 
In this section, we provide additional results on CelebA, Waterbirds, and U-CIFAR10 comparing the use of pretrained models---specifically, comparing a model trained on the supervised ImageNet task vs. the RotNet model \citep{gidaris2018unsupervised} from the VISSL library \citep{goyal2021vissl}---as well as exploring the use of data augmentation for the latter model to boost its performance closer to that of the supervised pretrained model. Results for ERM and GDRO are in Table~\ref{tab:rotnet}; we plot the worst-group accuracies for \name{} in Figure~\ref{fig:rotnet}. We observe that data augmentation does not seem to benefit the RotNet model much on U-CIFAR10, but it does on the other tasks (on CelebA, augmentation even boosts the performance using RotNet past the performance using the supervised ImageNet model, although of course the performance of the latter could likely also be boosted by using data augmentation). For the RotNet training with augmentation, The augmentations used in both Stage 1 and Stage 2 of \name{} are random crops, random flips, and random rotations (of up to 15 degrees); the reason we added the random rotations is because rotation is a key part of pretraining the RotNet model itself.

\begin{figure*}
    \centering
    \includegraphics[width=0.45\textwidth]{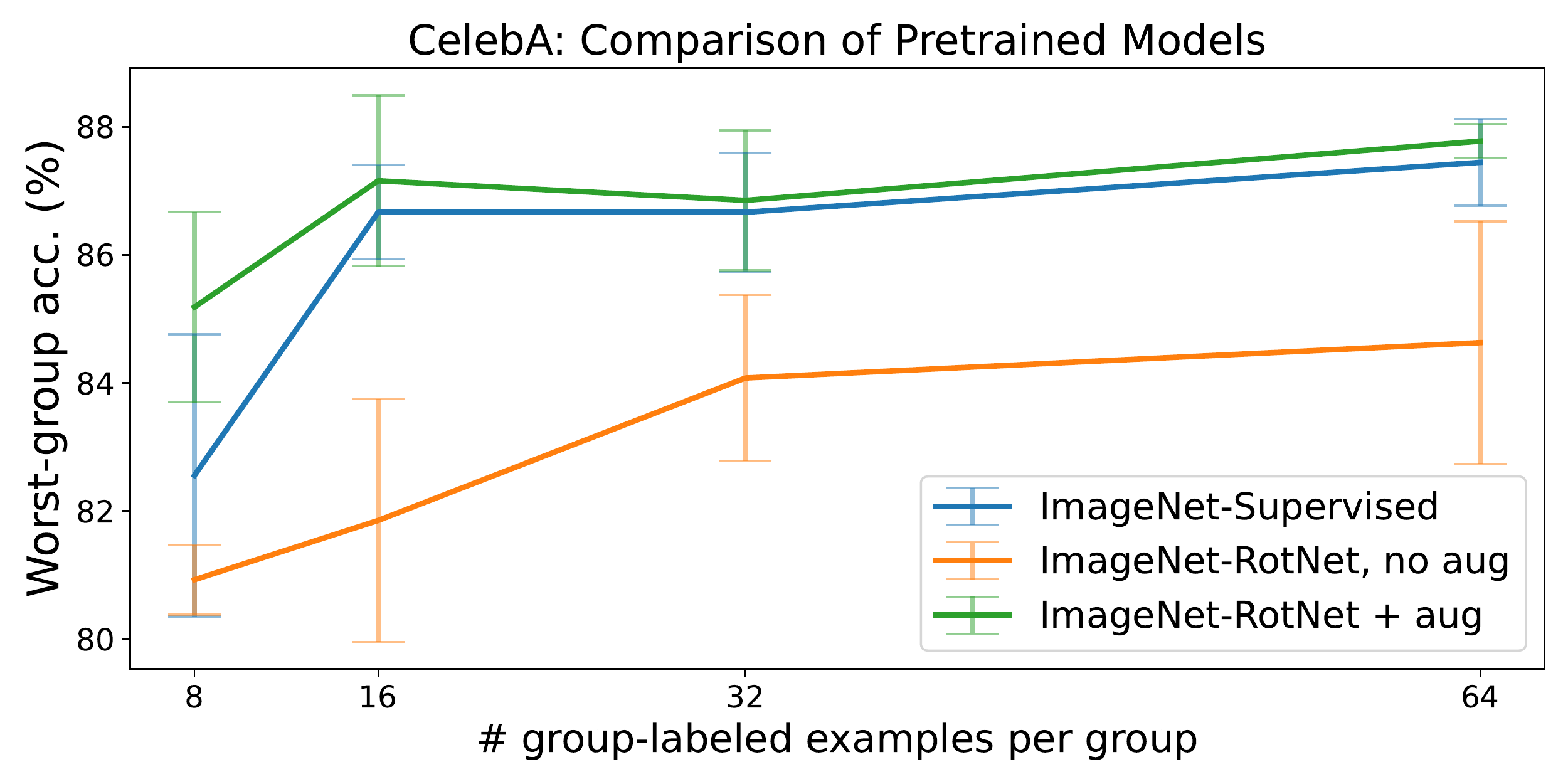}\quad\includegraphics[width=0.45\textwidth]{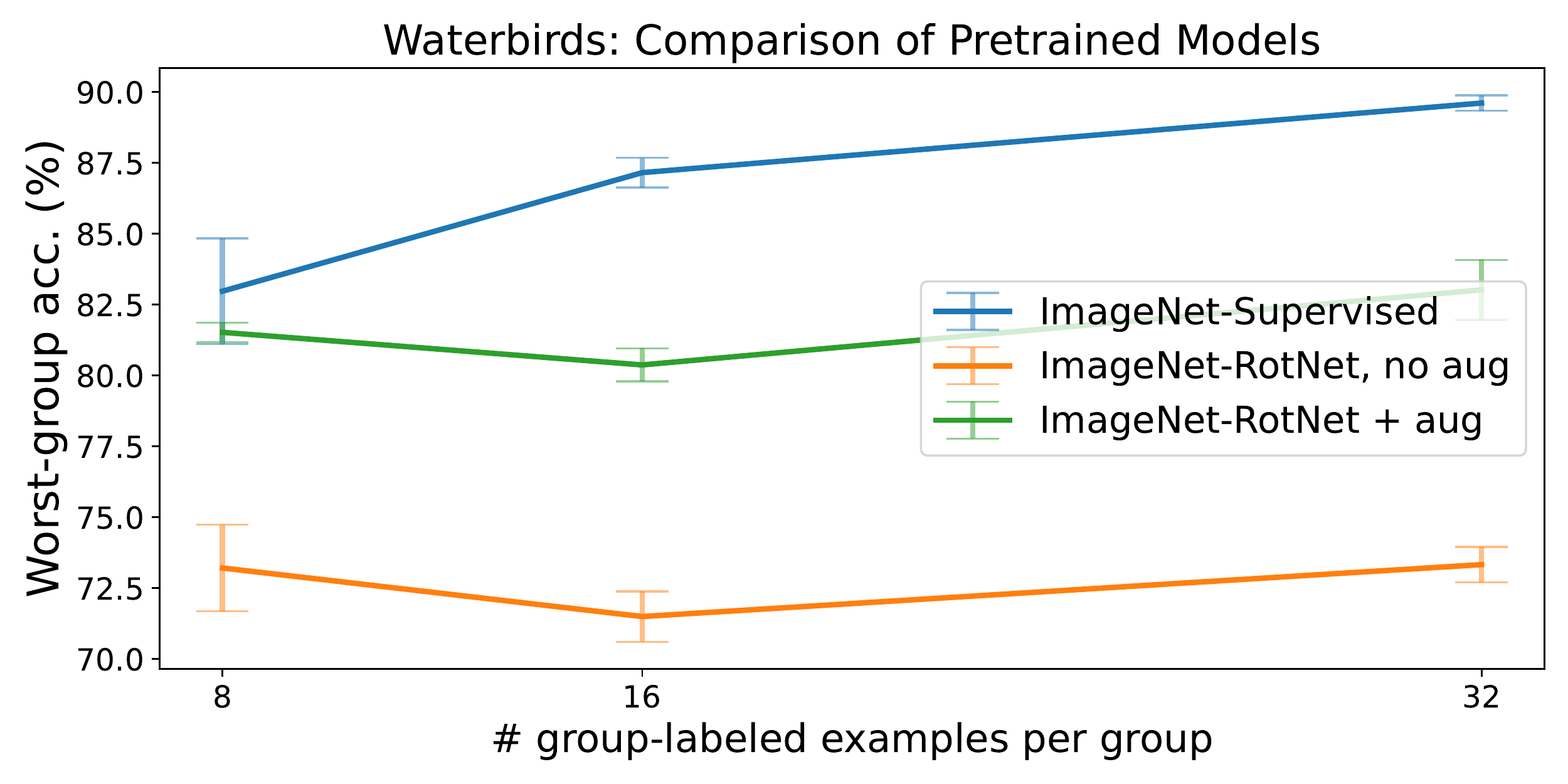}\quad\includegraphics[width=0.45\textwidth]{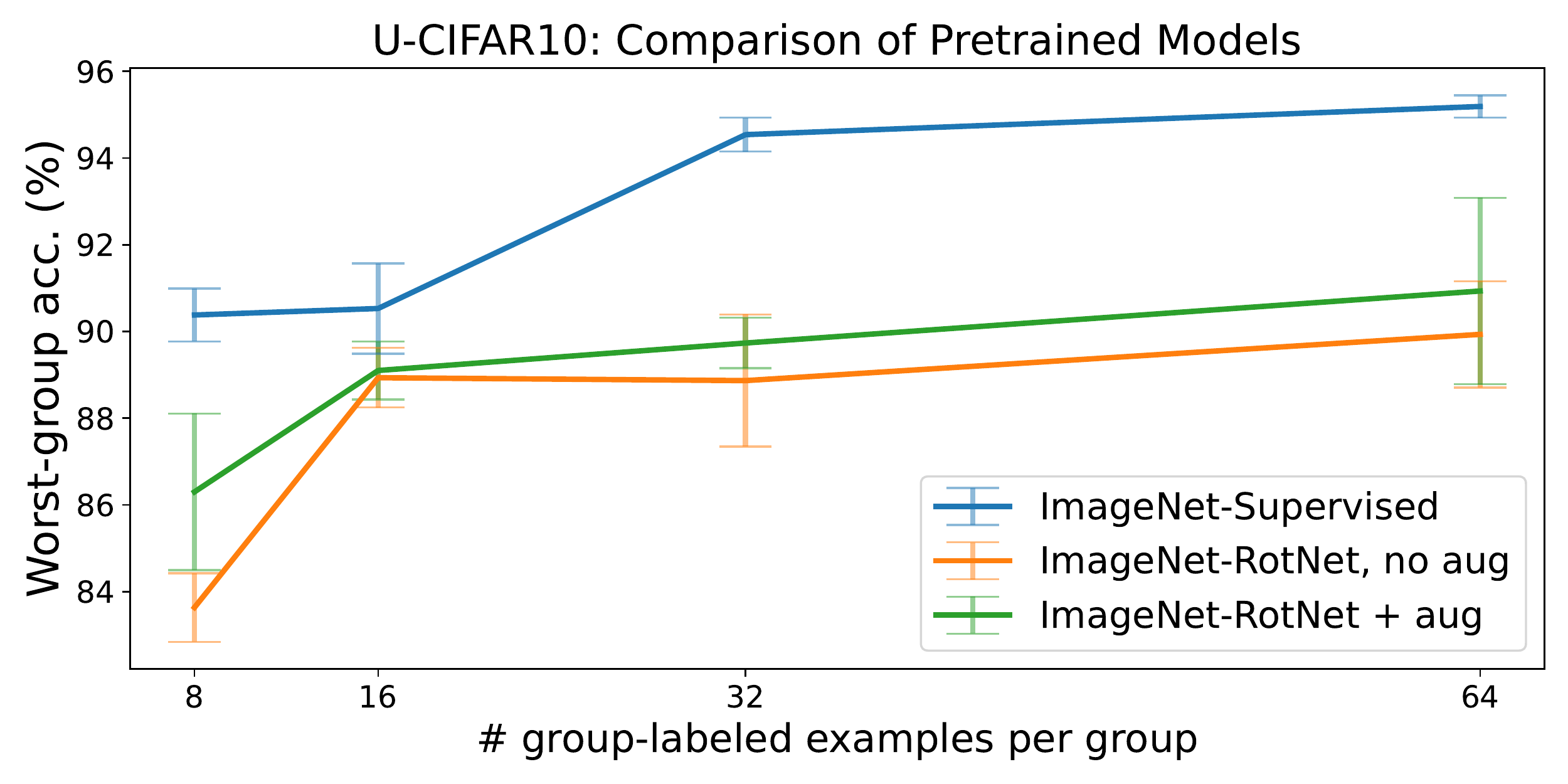}
    \caption{Worst-group accuracy using different initial pretrained models.}
    \label{fig:rotnet}
\end{figure*}

\end{document}